\documentclass[lettersize,journal]{IEEEtran}
\usepackage{tikz}
\usepackage{amsmath}

\usepackage{filecontents}

\usepackage{graphicx}
\usepackage{subfigure}
\usepackage{amsmath}
\usepackage{amsthm}
\usepackage{mathrsfs}
\usepackage{float}
\usepackage[vlined,ruled,linesnumbered,algo2e]{algorithm2e}
\usepackage{xcolor}
\usepackage{multicol}
\usepackage{multirow}
\usepackage{diagbox}
\usepackage{bbm}
\usepackage{makecell}
\usepackage{enumitem}
\usepackage{booktabs}
\usepackage{algorithm}
\usepackage{algorithmic}
\usepackage{hyperref}
\usepackage{amsfonts}
\usepackage{newtxmath}
\usepackage{dsfont}

\newtheorem{thm}{Theorem}
\newtheorem{prop}{Proposition}
\newtheorem{lem}{Lemma}
\newtheorem{cor}{Corollary}

\AtBeginDocument{%
 }

\begin{document}

\title{Auditing Machine Unlearning: A Systematic Research on Whether Models Truly Forget}

\author{Dayong Ye, Tianqing Zhu$^*$, Ruiding Huang, Xinbo Fu, Jiayang Li, Bo Liu, Huan Huo, Wanlei Zhou
\thanks{Tianqing Zhu is the corresponding author.}}

\maketitle

\begin{abstract}
Machine unlearning has been extensively studied in response to growing privacy concerns and regulatory requirements. However, auditing whether unlearning algorithms have truly erased the influence of specific data remains an open challenge. The lack of reliable and practical auditing mechanisms can lead to critical privacy risks, such as residual information leakage.
This paper initiates a systematic investigation into whether existing unlearning algorithms can truly forget the designated data. We propose the first practical and general-purpose auditing framework for machine unlearning, inspired by the concept of proof of ignorance. Our framework addresses the key practicality limitations of existing methods by eliminating the need for retraining-from-scratch baselines, avoiding the training of large numbers of shadow models, and requiring no intrusive intervention in the original training process.
To evaluate the effectiveness of our framework, we first conduct validation experiments to verify its soundness and completeness. We then perform comprehensive experiments across six datasets and ten representative unlearning methods. The results demonstrate that our framework reliably distinguishes between successful and failed unlearning. In particular, we observe that retraining-based and fine-tuning-based methods can achieve effective unlearning, even when the target data remain in the original dataset. In contrast, de-optimization-based methods fail to achieve true unlearning and instead degrade the model’s performance. Fisher/Hessian-based methods also fail to unlearn requested data, even formal certification is provided.
Moreover, we show that our framework is robust against fake unlearning attempts and generalizes well to large language models. 
\end{abstract}




\pagestyle{plain}

\section{Introduction}
Machine unlearning has emerged as a critical paradigm for enabling data deletion in machine learning models, particularly in response to growing concerns around user privacy and regulatory requirements such as the General Data Protection Regulation (GDPR) \cite{GDPR}. The goal of unlearning is to ensure that a trained model forgets specific data samples upon request, such that the resulting model behaves as if the data had never been used in training \cite{Cao15,Bourtoule21}. Motivated by this objective, a wide range of unlearning algorithms has been proposed \cite{Xu23,Ye25USENIX}. However, the problem of auditing unlearning outcomes remains largely underexplored. In most existing studies, outcome auditing is treated as a peripheral issue or mentioned only briefly, without a systematic investigation \cite{Thudi23,Zhang24ICML,Xue25}.

Auditing machine unlearning is critical. Its goal is to determine whether the unlearned model has truly forgotten the knowledge associated with the data requested for removal. An effective audit not only reassures data owners who exercise their right to revoke personal data, but also certifies the correctness and reliability of unlearning algorithms.  
Straightforward auditing approaches include parametric auditing and behavioral auditing \cite{Thudi23,Xue25}. Parametric auditing evaluates whether the parameters of the unlearned model remain within a bounded distance from those of a model retrained from scratch without the unlearning data. In contrast, behavioral auditing assesses if the unlearned model exhibits similar behavior to that of the retrained model when evaluated on specific datasets. 
Despite their conceptual soundness, both approaches share a fundamental limitation: they require access to a retrained model that excludes the unlearning data. This renders them impractical for two reasons. First, retraining a model from scratch incurs substantial computational overhead.  
Second, if one already possesses the retrained model, auditing becomes unnecessary, as this retrained model can be directly deployed as the unlearned model. 

Another line of research on unlearning auditing is based on membership inference \cite{Naderloui25USENIX,Hayes25SaTML}, which aims to determine whether the unlearned data still belong to the training set of the unlearned model. However, membership inference typically requires training a large number of shadow models, resulting in heavy computational cost \cite{Carlini22SP}. Moreover, it often suffers from limited inference accuracy and yields only probabilistic outcomes without explicit accuracy guarantees \cite{Zhang25SaTML,Chowdhury25SaTML,Wang25CCS}. 
Other less mainstream auditing methods rely on backdoors, but they require the unrealistic step of injecting backdoors into the training data before model training \cite{Sommer22PETS}. 

An orthogonal research direction focuses on auditing unlearning procedures \cite{Zhang24ICMLb,Koloskova25ICML,Eisenhofer25}, with the goal of verifying whether mutually agreed unlearning algorithms are executed correctly. In contrast, our work audits unlearning outcomes rather than procedures. This distinction is important because, as we show in the experiments, some unlearning algorithms fail to truly remove the influence of the unlearning data even if they are executed correctly.


To overcome these limitations, we introduce a practical and general unlearning auditing framework  
by addressing \textbf{three challenges}. \textbf{First}, the absence of access to retrained reference models necessitates a self-contained auditing mechanism that does not rely on expensive retraining from scratch. \textbf{Second}, achieving a balance between auditing effectiveness and computational efficiency complicates the design: the auditing method must deliver reliable assessments while incurring significantly lower computational cost than full retraining. \textbf{Third}, there is no definitive ground truth for evaluating whether a given unlearning algorithm, apart from retraining from scratch, has truly achieved the unlearning objective. 

To address the first and second challenges, our auditing framework constructs verification models by training on only $15\%$ of the remaining data, on average, for each verification model. As our verification model is trained on data disjoint from the unlearning set, it provides a valid comparator for evaluating whether the unlearned model has successfully forgotten the targeted data, thus addressing the first challenge. This comparison is grounded in the concept of proof of ignorance. If a verification model, believed to lack knowledge of the unlearning set, can be transformed into the unlearned model through a polynomial-time process, such as updating it with additional data, then the unlearned model is also considered to be ignorant of the same knowledge. Thus, if the unlearned model exhibits behavior closely aligned with that of the verification model or its transformed version, it provides strong evidence that the unlearned model no longer retains the forgotten data.
Moreover, although only a subset of the remaining data is used, our framework leverages the insight that a small but representative subset can still capture the distributional characteristics of the original data \cite{Olson18NeurIPS}, thus addressing the second challenge by striking a balance between computational efficiency and auditing accuracy. 

To address the third challenge, the lack of ground truth for successful unlearning, we first validate our framework before using it to audit unlearning methods. Then, we use backdoor-based and membership inference-based auditing, along with t-SNE \cite{t-SNE} visualization, as supporting evidence. 
In summary, this paper makes \textbf{three contributions}.
\begin{itemize}[leftmargin=*]
    \item We initiate a systematical study to examine whether existing popular unlearning algorithms can truly forget the revoked data. To achieve this, we propose the first practical and general auditing framework, grounded in the concept of proof of ignorance. Before applying the framework, we rigorously evaluate its effectiveness through both theoretical analysis and validation experiments.
    \item We perform a comprehensive empirical study that evaluates ten unlearning approaches across six datasets, including both image and text domains. We also assess our framework against fake unlearning attempts and demonstrate its effectiveness in successfully detecting them.
    \item We further demonstrate the strong generalizability of our framework by extending the auditing study to large language model (LLM) unlearning scenarios.
\end{itemize}

\vspace{-1mm}
\section{Preliminary and Threat Model}
\vspace{-0mm}
\noindent\textbf{Machine Unlearning.} Given a learning algorithm $A:\mathcal{D}\rightarrow\mathcal{M}$, where $\mathcal{D}$ denotes the data space and $\mathcal{M}$ represents the model space, a model $M$ can be trained using $A$ on a dataset $D$, i.e., $M = A(D)$. When an unlearning request is made to forget a subset of the data $D_u \subset D$, a straightforward approach is to retrain the model from scratch using $A$ on the remaining data $D_r=D - D_u$, resulting in a perfectly unlearned model $M^*_u$, i.e., $M^*_u = A(D_r)$. However, this retraining-from-scratch method is computationally intensive. 

Let $U: \mathcal{M} \times \mathcal{D} \times \mathcal{D} \rightarrow \mathcal{M}$ be an unlearning algorithm. An unlearned model $M_u$ is obtained by applying $U$ to $M$, $D$, and $D_u$, i.e., $M_u = U(M, D, D_u)$. Ideally, $U$ is more computationally efficient than retraining from scratch, and the difference between $M_u$ and $M^*_u$ is minimized. This difference can be quantified in either the model parameter space \cite{Guo20,Graves21AAAI} or the model output space \cite{NIPSUnlearning23,Maini24,Georgiev25ICLR}. 

\vspace{1mm}
\noindent\textbf{Proof of Ignorance.} In the proof of ignorance \cite{Deshpande18,Kuykendall20TCC}, given an instance $q$ of a complex problem, the prover must demonstrate to the auditor that they do not know the solution $s$ for the problem $q$. 
To address this, the prover identifies a well-known complex instance $q'$, with a solution $s'$ that is computationally related to $s$, namely, there exists a polynomial-time process that transforms $s'$ to $s$. As $q'$ is both well-known and complex, claiming ignorance of its solution $s'$ is considered trustworthy by the auditor. Moreover, since $s'$ is computationally related to $s$, ignorance of $s'$ implies ignorance of $s$.

\vspace{1mm}
\noindent\textbf{Threat Model.} The model provider employs an unlearning algorithm $U$ to remove the influence of the requested data $D_u$, as specified by the data owner, resulting in an unlearned model $M_u$. The model provider, acting as the \emph{prover}, then aims to validate that the unlearned model $M_u$ no longer contains information related to $D_u$. This validation can be presented either to the data owner or to a trusted third party, such as a regulator, who acts as the \emph{auditor}.

The prover has full access to the original model $M$ and the training data $D$, and can freely modify $M$ or utilize $D$ as needed. 
In contrast, the auditor does not have access to the internal parameters of $M$ or its variants but can query these models and observe their outputs. The auditor holds the authority to define and enforce the auditing protocol. In particular, the auditor can specify which portions of the data $D$ are used during the auditing process without accessing the actual data content, and the prover follows these auditing instructions throughout the protocol. This assumption is reasonable, as our objective is to audit the efficacy of unlearning algorithms rather than the honesty of the prover. 
We will relax this assumption in the experiments by considering scenarios in which the prover performs fake unlearning.

This asymmetry of access reflects real-world scenarios where the model provider (the prover) is an ML service platform, while the auditor is a data owner or regulatory authority with limited insight into the internal workings of the model. The assumption that the auditor cannot see the raw data in $D$, but can still reference its structure (e.g., via hashes), is reasonable. It reflects the common setting in privacy-preserving audits \cite{Lycklama24USENIX}, where the auditor must operate under a black-box model due to data sensitivity.

\vspace{1mm}
\noindent\textbf{Problem Statement.} Given an unlearned model $M_u$, the original model $M$, the unlearning data $D_u$, and the original training data $D$, our goal is to design an auditing framework $\mathcal{V}$ that produces a binary outcome indicating whether $M_u$ has successfully forgotten $D_u$. Specifically, $\mathcal{V}(M_u, M, D_u, D) \rightarrow \{0, 1\}$, where ``$1$'' denotes successful unlearning and ``$0$'' indicates failure.
The notations used in this paper are shown in Table \ref{tab:notation}.

\vspace{-1mm}
\begin{table}[!ht]\scriptsize
	\centering
	\caption{Summary of Notations}
    \vspace{-1mm}
\begin{tabular} {cl}
\toprule
{\bf Notations} & \hspace{8em}{\bf Description} \\
\midrule
$M$ & The pre-unlearning original model  \\
$D$ & The data used to train $M$  \\
$D_u\subset D$ & The data requested to be unlearned from $M$ \\
$D_r$ & The remaining data, where $D_r\cup D_u=D$ \\
$D_t$ & The test data, where $D_t\cap D=\emptyset$\\
$M_u$ & The unlearned model claimed to have forgotten $D_u$ \\
$D_v\subset D_r$ & The data used for verification  \\
$M_v$ & The verification model trained by $D_v$\\
\bottomrule
\end{tabular}
	\label{tab:notation}
    \vspace{-2mm}
\end{table}

\section{Methodology}
\noindent\textbf{Rationale.} 
According to the proof of ignorance, the prover can generate another model $M_v$ that is guaranteed to exclude the data $D_u$. If the prover can then demonstrate that $M_v$ can be transformed into $M_u$ through a polynomial-time process, such as further updating $M_v$ with additional data from $D_r$, it becomes credible to assert that $M_u$ does not contain information from $D_u$. Detailed theoretical analyses of our auditing framework are provided in Section \ref{sec:analysis}. 


\begin{figure}[ht]
\vspace{-0mm}
\centering
    \includegraphics[scale=0.23]{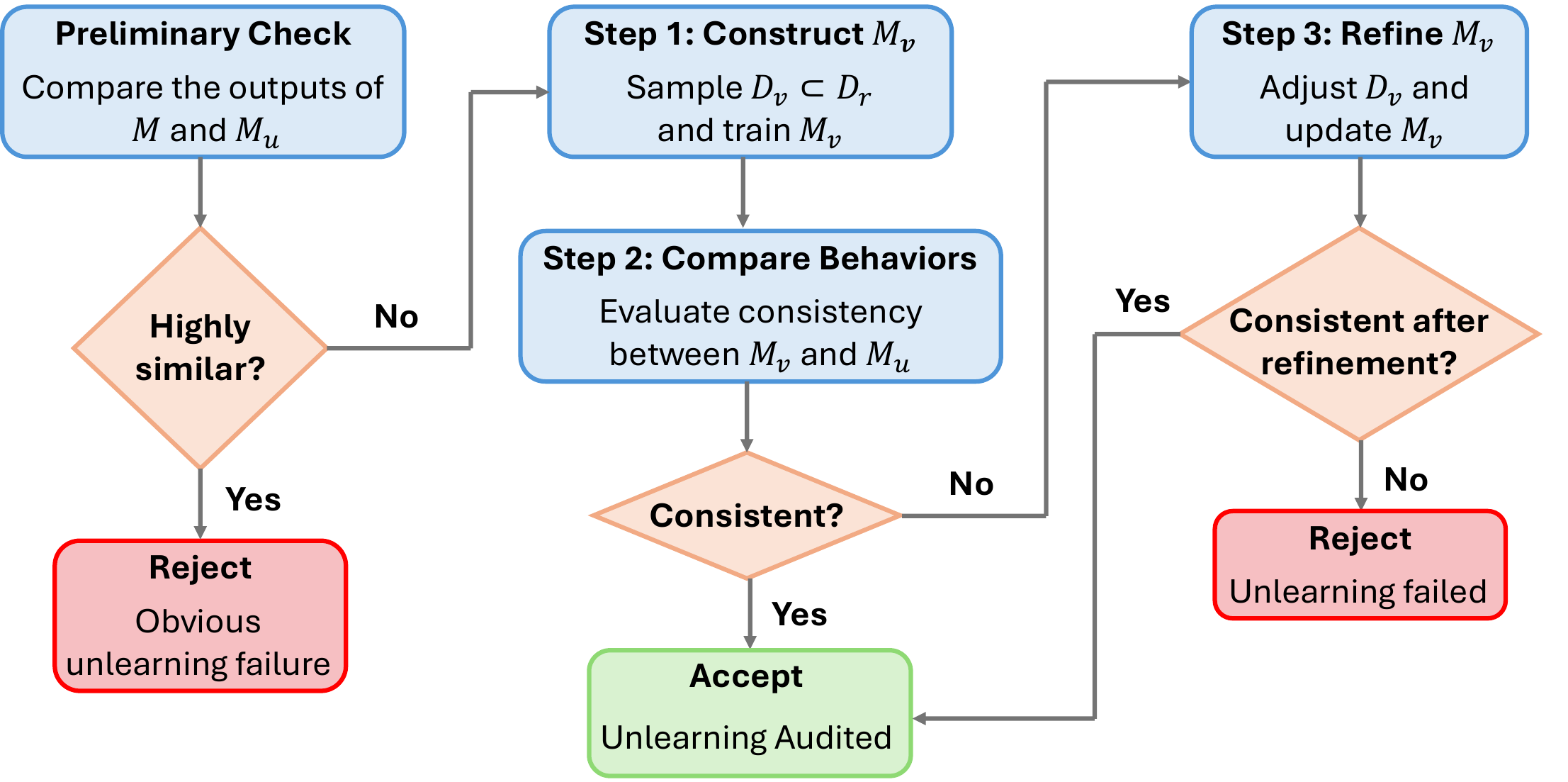}
    \vspace{-0mm}
	\caption{Overview of our auditing framework. The auditor first conducts a preliminary check by comparing outputs from the unlearned model $M_u$ and the pre-unlearning model $M$. If their behavior is highly consistent, unlearning is immediately considered failed. Otherwise, the auditor proceeds to the main protocol: Step 1, the prover trains a verification model $M_v$; Step 2, the auditor compares outputs between $M_v$ and $M_u$; Step 3, if inconsistent, $M_v$ is updated or retrained and compared again. If consistency is achieved, unlearning is accepted; otherwise, it is rejected.}
    \vspace{-1mm}
	\label{fig:Overview}
\end{figure}

\vspace{0mm}
\noindent\textbf{Overview.} As shown in Figure~\ref{fig:Overview}, before initiating the formal auditing process, we conduct a preliminary check: if the unlearned model exhibits behavior nearly identical to the pre-unlearning model, for example, both achieving high accuracy on the unlearning set, it can be immediately flagged as a failure of unlearning. This is because a truly unlearned model is expected to undergo a noticeable behavioral shift, particularly in its performance on the unlearning data. 
Once such obvious failures are filtered out, we proceed with the main auditing protocol, which consists of \textbf{three steps}. 

\textbf{First}, the auditor randomly identifies a subset of data $D_v$ from the remaining data $D_r=D-D_u$, and instructs the prover to use $D_v$ as a verification set to train a new model $M_v$ for auditing. Since $D_v$ is entirely disjoint from $D_u$, it is guaranteed that $M_v$ does not contain any information from $D_u$. 
\textbf{Second}, the auditor compares the outputs of $M_v$ and $M_u$. If the outputs of the two models are consistent, it can be trusted that $M_u$ also does not contain the information of $D_u$. This is because consistent outputs indicate that $M_u$ and $M_v$ exhibit similar decision boundaries on $D_u$. As $M_v$ is independent of $D_u$, such consistence implies that $M_u$'s behavior does not rely on specific features derived from $D_u$. However, if the outputs of the two models are inconsistent, the audit proceeds to the third step. In the \textbf{third} step, based on the observed differences in outputs between $M_v$ and $M_u$, the prover decides whether to increase the size of $D_v$ by adding more data from $D_r$ or to reduce the size of $D_v$ by removing some data. If the decision is to increase the size of $D_v$, the prover fine-tunes $M_v$ using the additional data. If the decision is to reduce the size of $D_v$, the prover retrains $M_v$ using the reduced subset. After refining $M_v$, the auditor compares the outputs of $M_v$ and $M_u$ again. If the outputs are consistent, the prover can claim that $M_u$ does not contain the information of $D_u$. If the outputs remain inconsistent, the prover fails to prove ignorance of the presence of $D_u$ in $M_u$. 
The auditor grants the prover only a single opportunity to refine $M_v$ for auditing. We will show in Section \ref{sec:analysis} that this single refinement is sufficient to ensure correctness.
Each step of our method is elaborated in detail. 

\vspace{1mm}
\noindent\textbf{Step 1. Verification Model Construction.} The auditor randomly selects a subset of data from $D_r$ to form a new dataset $D_v$. This selection can be implemented using various randomized sampling techniques to ensure that $D_v$ is representative of the overall distribution of $D_r$ while remaining entirely disjoint from $D_u$. One straightforward approach is uniform random sampling, where each data point in $D_r$ has an equal probability of being included in $D_v$. This method ensures that the selection process is unbiased and does not favor any particular subset of data.
Alternatively, the auditor can use stratified random sampling, where the data points are grouped based on their class labels. Samples are then drawn randomly from each group in proportion to their representation in $D_r$. This technique ensures that $D_v$ retains a distribution similar to that of $D_r$, making the auditing process more robust against imbalances in the dataset. 
Both approaches will be evaluated. 

For the size of $D_v$, a common heuristic is to define it as a fraction of the size of $D_r$, i.e., $|D_v| = \alpha \cdot |D_r|$, where $\alpha \in (0,1]$. A larger $\alpha$ ensures that $D_v$ is more representative of $D_r$, while a smaller $\alpha$ reduces computational overhead. We treat $\alpha$ as a hyperparameter that can be tuned based on auditing accuracy requirements.
After forming $D_v$, the prover uses it to train a verification model $M_v$. As $D_v$ is disjoint from $D_u$, it is guaranteed that $M_v$ does not incorporate any information related to $D_u$. This property makes $M_v$ a clean reference model for auditing the unlearning process.

\vspace{1mm}
\noindent\textbf{Step 2. Output Comparison for Auditing.} After training the verification model $M_v$, the auditor compares its outputs with those of the unlearned model $M_u$ using the unlearning set $D_u$. For each sample $x \in D_u$, the auditor checks if $M_v(x) = M_u(x)$ and computes an auditing metric $\upsilon$, defined as: 
\begin{equation}\label{eq:verification metric}
    \upsilon=\frac{\sum_{x\in D_u}\mathds{1}[M_v(x)=M_u(x)]}{|D_u|}, 
\end{equation}
where $\mathds{1}_{M_v(x)=M_u(x)}$ is an indicator function that equals $1$ if the outputs of $M_v$ and $M_u$ are identical for a given $x$, and $0$ otherwise. The metric $\upsilon$ represents the proportion of data samples in $D_u$ for which the two models produce consistent outputs. Ideally, $\upsilon$ should equal $1$ to provide absolute certainty in the audit. However, achieving $\upsilon = 1$ is challenging due to the arbitrary initialization and inherent variability in model training. To address this, we adopt a pre-defined threshold $\zeta$, treated as a hyperparameter. If $\upsilon$ exceeds the threshold $\zeta$, it is trusted that $M_u$ does not contain information from $D_u$. Otherwise, the audit proceeds to the third step.

The threshold $\zeta$ determines the acceptable level of consistency. 
It should be based on the expected agreement between two independent models trained on disjoint datasets unrelated to $D_u$. This is captured by the baseline agreement rate $\upsilon_{base}$, estimated as:
\begin{equation}\label{eq:verification metric baseline}
    \upsilon_{base}=\frac{\sum_{x\in D_u}\mathds{1}[M_{v_1}(x)=M_{v_2}(x)]}{|D_u|},
\end{equation}
where $M_{v_1}$ and $M_{v_2}$ are independent models trained on disjoint subsets of $D_r$. After obtaining $\upsilon_{base}$, $\zeta$ can be set by adding a safety margin to $\upsilon_{base}$ to account for variability in training and noise in $\upsilon$. Specifically, $\zeta$ is defined as:
\begin{equation}
    \zeta=\upsilon_{base}+k\cdot\sigma,
\end{equation}
where $\sigma = \sqrt{\frac{\upsilon_{base}(1 - \upsilon_{base})}{|D_u|}}$ is the approximate standard error of $\upsilon$, and $k$ is a confidence multiplier, e.g., $k = 2$ for $95\%$ confidence.
The actual standard error of $\upsilon$ is: $\sigma=\sqrt{\frac{p(1-p)}{|D_u|}}$, where $p$ is the true probability of agreement for a single sample $x \in D_u$ between $M_v(x)$ and $M_u(x)$, as $\upsilon$ follows the distribution: $\upsilon\sim\frac{{\rm Binomial}(|D_u|,p)}{|D_u|}$. However, since $p$ is unknown, we approximate it using $\upsilon_{base}$.


\vspace{1mm}
\noindent\textbf{Step 3. Verification Model Refinement.} When $\upsilon < \zeta$, the unlearned model $M_u$ cannot be trusted to have completely forgotten $D_u$. Given the inherent randomness in model training, 
the prover is allowed another opportunity to refine their verification model $M_v$. This refinement is guided by the accuracy difference between $M_v$ and $M_u$ on the unlearning data $D_u$.
Formally, let $Acc(M_v, D_u)$ and $Acc(M_u, D_u)$ be the accuracy of $M_v$ and $M_u$, respectively, on $D_u$. Comparing their accuracy can lead to three possible outcomes. 

First, if $Acc(M_v, D_u) < Acc(M_u, D_u)$, this suggests one of two possibilities: (1) the generalizability of $M_v$ is worse than $M_u$, or (2) $M_u$ retains information from $D_u$. The auditor then increases the size of the verification set $D_v$ by sampling additional data from $D_r$ to improve the generalizability of $M_v$. The exact number of additional data points can be determined based on the accuracy difference between $Acc(M_v, D_u)$ and $Acc(M_u, D_u)$, denoted as $\Delta_{acc} = Acc(M_u, D_u) - Acc(M_v, D_u)$. The size of the additional data to be added can be computed as: $\beta\Delta_{acc}|D_r|$, where $\beta \in (0,1)$ is a scaling coefficient. This formulation aims to enhance the generalizability of $M_v$ in a targeted manner, allowing $M_v$ to better approximate $M_u$ without over-correcting. Note that although the amount of additional data is computed linearly with respect to the accuracy gap $\Delta_{acc}$ and the size of $D_r$, this does not imply a linear relationship between model accuracy and dataset size. Rather, the computation is used only to determine a sufficient amount of data for aligning the behavior of $M_v$ with that of $M_u$. A formal analysis of this formulation is provided in Corollary \ref{cor:sufficiency}. After fine-tuning $M_v$ with the expanded $D_v$, the auditor re-evaluates its consistency with $M_u$. If $M_v$ and $M_u$ remain inconsistent, the auditor can conclude that $M_u$ contains information from $D_u$. 

Second, if $Acc(M_v, D_u) > Acc(M_u, D_u)$, this suggests that the unlearned model $M_u$ may be either over-unlearned or severely degraded, depending on the magnitude of the accuracy difference $\Delta_{acc} = Acc(M_v, D_u) - Acc(M_u, D_u)$. To address this, the verifier reduces the size of the verification set $D_v$ by removing a portion of its data and retrains $M_v$ on the reduced $D_v$. The number of data points to remove is determined as $\beta \Delta_{acc} |D_r|$. This adjustment intentionally reduces the performance of $M_v$ to align more closely with the over-unlearned or degraded behavior of $M_u$. After retraining $M_v$ on the reduced $D_v$, the auditor re-evaluates its consistency with $M_u$. If the two models remain inconsistent, the auditor concludes that $M_u$ retains information from $D_u$.

Third, if $Acc(M_v, D_u) = Acc(M_u, D_u)$, this means that the two models have comparable generalizability on $D_u$, yet they make different classification errors. This situation arises when $M_v$ and $M_u$ correctly classify similar proportions of $D_u$ but differ in the specific samples they classify correctly or incorrectly. Such differences indicate that while the overall performance metrics are aligned, the internal decision boundaries of the two models diverge. To address this, the auditor analyzes the classification results of $M_v$ and $M_u$ on $D_u$ to identify the subset of samples where their predictions differ: $D_{\text{diff}}=\{x\in D_u|M_v(x)\neq M_u(x)\}$. Next, the auditor samples additional data points from $D_r - D_v$, ensuring the exact amount is calculated as $(1 - \upsilon) |D_{\text{diff}}|$. This calculation is reasonable as $(1 - \upsilon)$ represents the proportion of disagreement between $M_v$ and $M_u$ on $D_u$, and scaling this by $|D_{\text{diff}}|$ ensures that the amount of additional data is proportional to the observed discrepancy. These newly sampled data are selected such that: $\{x\in D_r-D_v|M_v(x)=M_u(x)\}$. These data points are added to $D_v$, and then $M_v$ is fine-tuned on the updated $D_v$. After fine-tuning, the auditor re-evaluates the agreement between $M_v$ and $M_u$. If inconsistency persists, the auditor concludes that $M_u$ retains information from $D_u$

The auditing procedure is summarized in Algorithm \ref{alg:verification}. 

\begin{algorithm}\small
\caption{The entire auditing process}
\label{alg:verification}
\begin{algorithmic}[1]
\REQUIRE{Unlearned model $M_u$, Training data $D$, Unlearning data $D_u$;}
\ENSURE{The auditing result, whether $M_u$ contains information of $D_u$;}\vspace{1mm}

\textbf{Step 1. Verification model construction.}
\STATE The auditor collects a subset $D_v$ from $D-D_u$ and instructs the prover to train a model $M_v$ using $D_v$;\vspace{1mm}

\textbf{Step 2. Output comparison for Auditing.}
\STATE For each $x\in D_u$, the auditor checks whether $M_v(x)=M_u(x)$, and computes the verification metric $\upsilon$ using Eq. \ref{eq:verification metric};
\IF{$\upsilon\geq\zeta$}
    \STATE The auditor concludes that $M_u$ does not retain information from $D_u$;
\ELSE
    \STATE Proceed to Step 3;
\ENDIF\vspace{1mm}

\textbf{Step 3. Verification model refinement.}
\STATE The auditor calculates $Acc(M_v, D_u)$ and $Acc(M_u, D_u)$, then computes their difference as $\Delta = Acc(M_v, D_u) - Acc(M_u, D_u)$;
\IF{$\Delta<0$}
\STATE The auditor adds additional data to $D_v$, with the amount calculated as $\beta\Delta_{acc}|D_r|$, and instructs the prover to fine-tune $M_v$ using the expanded $D_v$;
\ELSIF{$\Delta>0$}
\STATE The auditor removes a portion of data from $D_v$, with the amount computed as $\beta\Delta_{acc}|D_r|$, and instructs the prover to retrain $M_v$ using the reduced $D_v$;
\ELSE
\STATE The auditor identifies the the subset of data as $D_{\text{diff}}=\{x\in D_u|M_v(x)\neq M_u(x)\}$;
\STATE The auditor samples data from $D-D_u-D_v$ with the quantity computed as $(1-\upsilon)|D_{\text{diff}}|$, ensuring that these newly sampled data satisfy: $\{x\in D-D_u-D_v|M_v(x)=M_u(x)\}$; 
\STATE The prover fine-tunes $M_v$ using the updated $D_v$;
\ENDIF
\STATE The auditor performs the auditing procedure as described in Step 2 by recalculating the verification metric $\upsilon$;
\IF{$\upsilon\geq\zeta$}
    \STATE The auditor concludes that $M_u$ does not retain information from $D_u$;
\ELSE
    \STATE The auditor concludes that $M_u$ retains information from $D_u$;
\ENDIF

\end{algorithmic}
\end{algorithm}

\section{Theoretical Analysis}\label{sec:analysis}

We first analyze the soundness of the preliminary check and then establish a theoretical connection between the auditing process and the concept of proof of ignorance. Specifically, the instance $q$ corresponds to the unlearned model $M_u$, and the solution $s$ represents its parameters. Likewise, the instance $q'$ is mapped to the verification model $M_v$, and the solution $s'$ to its parameters. Under this mapping, we demonstrate that (1) refining $M_v$ with additional data is a polynomial-time process; (2) if $M_v$ can be transformed into $M_u$ through such refinement, then $M_u$ can be regarded as having successfully unlearned the target data $D_u$; and (3) a single refinement iteration is sufficient to reach alignment under our framework. The formal proofs of these results are provided in the Appendix. 

\vspace{1mm}
\noindent\textbf{(0) Soundness of the Preliminary Check.} The check compares the behavior of the original model $M$ and the unlearned model $M_u$ on the unlearning data $D_u$. Formally, we have:

\begin{prop}[Sound rejection by preliminary check]\label{prop:preliminary check}
    Let $M$ be the original model trained on $D$ and $M_u$ be the claimed unlearned model. Suppose there exists a constant $\gamma > 0$ such that any successfully unlearned model $\widetilde{M}$ must satisfy
    \[
        A_u(M,\widetilde{M})\leq 1-\gamma,
    \]
    where
    \[
        A_u(M,\widetilde{M})=\frac{1}{|D_u|}\sum_{x\in D_u}\mathds{1}[M(x)=\widetilde{M}(x)].
    \]
    Then, for any $\epsilon \in (0,\gamma)$, if $A_u(M,M_u)\geq 1-\epsilon$, $M_u$ is not successfully unlearned.
\end{prop}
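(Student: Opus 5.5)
The argument is a direct contrapositive of the hypothesis. The hypothesis is a necessary condition for successful unlearning: every successfully unlearned $\widetilde{M}$ has agreement rate $A_u(M,\widetilde{M})\le 1-\gamma$ with the original model on $D_u$. So it suffices to show that $M_u$ violates this condition whenever $A_u(M,M_u)\ge 1-\epsilon$ with $\epsilon<\gamma$.

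The steps are as follows. Fix $\epsilon\in(0,\gamma)$ and assume $A_u(M,M_u)\ge 1-\epsilon$. Suppose, for contradiction, that $M_u$ is successfully unlearned. Instantiating the hypothesis with $\widetilde{M}=M_u$ gives $A_u(M,M_u)\le 1-\gamma$. Chaining the two inequalities yields
\[
1-\epsilon\le A_u(M,M_u)\le 1-\gamma ,
\]
that is, $\gamma\le\epsilon$. This contradicts the choice $\epsilon<\gamma$. Hence $M_u$ is not successfully unlearned, and the preliminary check rejects it soundly. Equivalently, the rejection region $\{A_u(M,M_u)\ge 1-\epsilon\}$ is disjoint from the set $\{A_u\le 1-\gamma\}$ that contains every successfully unlearned model. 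This holds because $1-\epsilon>1-\gamma$, a strict gap that leaves a margin of at least $\gamma-\epsilon>0$.

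There is no real technical obstacle; the content lies in the modeling assumption. The only point needing care is that the hypothesis quantifies over \emph{all} successfully unlearned models, which is what licenses specializing it to $M_u$. The strictness $\epsilon<\gamma$ is what makes the two regions disjoint; with $\epsilon=\gamma$ the boundary value $A_u=1-\gamma$ would be ambiguous. I would add a remark after the proof that the proposition guarantees only soundness of \emph{rejection}. It does not claim that passing the check certifies unlearning, since models with $1-\gamma<A_u(M,M_u)<1-\epsilon$ are left to the main protocol.
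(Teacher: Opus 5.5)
Your proof is correct and matches the paper's own argument: the paper likewise assumes $M_u$ is successfully unlearned and instantiates the hypothesis at $\widetilde{M}=M_u$ to get $A_u(M,M_u)\leq 1-\gamma$. It then reaches the same contradiction with $A_u(M,M_u)\geq 1-\epsilon>1-\gamma$.
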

    
Proposition \ref{prop:preliminary check} provides a lightweight rejection mechanism for filtering out obvious failures of unlearning. Intuitively, if $M_u$ behaves almost identically to $M$ on the very data $D_u$ that are supposed to be forgotten, then the influence of $D_u$ must still persist in $M_u$. Therefore, such a model does not need to proceed to the full auditing protocol. Note that this proposition establishes only a one-sided guarantee: failing the preliminary check is sufficient to reject unlearning, but passing it does not imply successful unlearning. 

\vspace{1mm}
\noindent\textbf{(1) Polynomial Time Process.} Polynomial-time transformation is a core requirement in the proof-of-ignorance framework, ensuring that two solutions $s$ and $s'$ are computationally related such that ignorance of one implies ignorance of the other. In our setting, we require that the transformation between two models be achievable within polynomial time. Formally, we have the following theorem. 
\begin{thm}[Polynomial-time Refinement]\label{thm:Polynomial}
    A single refinement step that trains $M_v$ on an incremental dataset $D_\Delta$ for a fixed number of epochs executes in time $\mathrm{Poly}(|D_\Delta|, S)$, where $S$ denotes the number of model parameters.
\end{thm}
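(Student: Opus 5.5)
The plan is a direct cost accounting of one epoch of gradient-based training, followed by multiplication by the fixed number of epochs $E$. First we fix the computational model. The refinement uses a standard first-order optimizer: SGD or a variant such as Adam, with per-coordinate updates. The network is a feedforward (or, more generally, acyclic computational-graph) architecture with $S$ parameters. Each input has bounded dimension $d$, and the output dimension is at most polynomial in $S$. We also assume every elementary operation is an arithmetic operation or a fixed activation function computable in $O(1)$ time. These are the assumptions implicitly made throughout the paper, and I would state them explicitly at the start.

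The key steps, in order, are as follows.

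\begin{enumerate}
\item \emph{Forward pass.} For a single sample, the forward pass evaluates each edge or weight of the computational graph a constant number of times. Its cost is therefore $O(S + d)$, which is $O(S)$ once we assume $d \le S$, since every input feature is connected to at least one weight.
\item \emph{Backward pass.} By the standard result on reverse-mode automatic differentiation (the ``cheap gradient principle''), the gradient of the loss with respect to all $S$ parameters costs at most a constant multiple of the forward pass. This gives $O(S)$ per sample.
\item \emph{Parameter update.} Aggregating gradients over a mini-batch and applying the optimizer update costs $O(S)$ per step. Any optimizer state, such as momentum or second moments, is also of size $O(S)$.
\item \emph{One epoch.} Summing over the $|D_\Delta|$ samples partitioned into mini-batches gives $O(|D_\Delta| \cdot S)$ per epoch.
\item \emph{All epochs.} Multiplying by $E$, which is a constant independent of $|D_\Delta|$ and $S$, gives a total of $O(E\,|D_\Delta|\,S) = \mathrm{Poly}(|D_\Delta|, S)$.
\end{enumerate}

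Two of the three refinement branches in Algorithm~\ref{alg:verification} need a brief remark.

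\begin{itemize}
\item \emph{Retraining branch.} When $\Delta > 0$, the model is retrained on the reduced $D_v$ rather than fine-tuned on an increment. To cover this case, I would interpret $D_\Delta$ as the dataset actually processed in the refinement step, namely the reduced $D_v$. The same bound then applies, because $|D_v| \le |D_r|$.
\item \emph{Equal-accuracy branch.} When $\Delta = 0$, the auditor must first select samples in $D - D_u - D_v$ on which $M_v$ and $M_u$ agree. This requires one forward pass of each model per candidate, costing $O(|D_r| \cdot S)$, which is again polynomial. Computing $\upsilon$ and the accuracies on $D_u$ similarly costs $O(|D_u| \cdot S)$.
\end{itemize}

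The main obstacle is not the arithmetic but justifying the backward-pass bound rigorously and settling the size measure. The honest statement is polynomial in $|D_\Delta|$, $S$, and the sizes of $D_r$ and $D_u$ used by the selection steps, with $E$ and the per-operation cost treated as constants. I would handle this by invoking the Baur--Strassen / reverse-mode AD complexity theorem as a known result, rather than re-deriving it. I would then either absorb the auxiliary dataset sizes into the polynomial or note that they are bounded by $|D|$, which is fixed throughout the audit.
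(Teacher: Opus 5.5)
Your proposal is correct and takes essentially the same route as the paper: count the $E\lceil|D_\Delta|/b\rceil$ gradient evaluations and multiply by a forward--backward cost polynomial in $S$; your explicit cheap-gradient citation and branch-by-branch remarks are extra care the paper omits. One minor slip: in the paper's convolutional models each shared weight is applied at every spatial position, so the per-sample cost is $O(S\cdot d)$ rather than $O(S+d)$ (the paper ties the per-pass cost to both $S$ and the input/activation sizes and states it only as $\mathrm{Poly}(S)$), but this is still polynomial and leaves your conclusion intact.
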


Theorem~\ref{thm:Polynomial} formally establishes that each refinement step, $M_v^{(t)} \mapsto M_v^{(t+1)}$, executes in polynomial time with respect to the size of the incremental data and the model complexity. Here, $M^{(t)}_v$ is the verification model after $t$ refinement steps. Consequently, any fixed number of refinement steps also operates within polynomial time, satisfying the requirement of the proof-of-ignorance analogy.

\vspace{2mm}
\noindent\textbf{(2) Auditing via Transformability of $M_v$ to $M_u$.} 

\begin{lem}[Output Difference Upper-bound]\label{lem:stability}
For empical risk minimization (ERM) or stochastic gradient descent (SGD) trained on two datasets whose symmetric difference has size $m$, yielding two models $M$ and $M'$, there exists a constant $C > 0$ such that
    \begin{equation}\nonumber
        \mathbb{E}_{x\sim\mathcal{X}}[||M(x)-M'(x)||_1]\leq C\cdot\frac{m}{|D_r|},
    \end{equation}
    where $M(x)$ is the predictive distribution of $M$ on input $x$ drawn from the data space $\mathcal{X}$.
\end{lem}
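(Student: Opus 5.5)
The plan is to derive Lemma~\ref{lem:stability} from uniform algorithmic stability in the sense of Bousquet--Elisseeff and Hardt--Recht--Singer, applied one sample at a time. Concretely, I will assume that the loss is convex (for ERM) or smooth and Lipschitz (for SGD). I will also assume that the map from parameters to the predictive distribution $M(x)$ is $L$-Lipschitz uniformly in $x$, for example through a softmax output layer with bounded features. The reference dataset is taken to have size of order $n=|D_r|$. Under these conditions, replacing a single training point changes the learned parameters by at most $\beta_n = c/n$. For regularized ERM with $\lambda$-strong convexity and $G$-Lipschitz loss this holds with $c = 2G/\lambda$. For SGD it holds with $c$ depending on the step sizes and number of epochs, as in the $O(T\eta G^2/n)$ bound of Hardt et al.

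\textbf{Step 1 (chain of single changes).} Write the two datasets $S, S'$ with $|S\triangle S'| = m$. I will connect them by a path $S = S_0, S_1, \dots, S_k = S'$ with $k\le m$, where each step replaces, adds, or removes a single point. Additions and removals can be treated as replacement by a ``null'' point with zero loss, after padding both datasets to a common size. Strictly this changes the normalization by $O(1/n)$, which I will absorb into the constant.

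\textbf{Step 2 (per-step parameter bound and telescoping).} For each step I invoke the stability bound to get $\|\theta(S_{i})-\theta(S_{i+1})\|\le c/n$. For SGD this is a bound in expectation over the algorithm's randomness, which is enough here because we take the expectation anyway; the two runs are coupled to share the same initialization and sampling order. The triangle inequality then gives $\|\theta(S)-\theta(S')\|\le cm/n$.

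\textbf{Step 3 (transfer to outputs).} Using the Lipschitz property of the output map, I get $\|M(x)-M'(x)\|_1 \le L\|\theta-\theta'\|\le Lcm/n$ for every $x$. Taking the expectation over $x\sim\mathcal{X}$ then yields the claim with $C=Lc$. Since $\|\cdot\|_1\le 2$ for distributions, the bound is trivially true when $m/n$ is large, so no loss occurs there.

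The main obstacle is Step 2 for SGD on non-convex networks. There the Hardt et al.\ stability constant grows with the number of iterations, so it is only a $1/n$ rate for a fixed number of epochs and suitably decaying step sizes. I would state this as a standing assumption, in the same spirit as the fixed epoch count in Theorem~\ref{thm:Polynomial}. I would also note that $C$ depends on $\lambda$, $G$, $L$ and the training schedule, but not on $m$ or $|D_r|$.
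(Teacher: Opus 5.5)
Your argument is correct under the assumptions you state. It shares the paper's skeleton: a chain of single-point changes, the triangle inequality, and an $O(1/|D_r|)$ per-step stability bound. The key transfer step, however, is different.

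\textbf{The paper's route.} It uses \emph{loss} stability, $\mathbb{E}|l(M_{k+1},x)-l(M_k,x)|\le\beta/|D_r|$. It then passes to output space via $\mathbb{E}\|M_{k+1}-M_k\|_1\le\frac{1}{\lambda}\mathbb{E}|l(M_{k+1},x)-l(M_k,x)|$, attributing this to Lipschitz continuity of $l$ and a ``calibration constant'' $\lambda$. This only needs the weaker, more widely available notion; loss stability is what is typically proved, e.g.\ for non-convex SGD. But the conversion step points the wrong way. Lipschitz continuity gives $|l(p,y)-l(q,y)|\le L\|p-q\|$, so output differences control loss differences, not conversely. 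The converse is false in general: cross-entropy $-\log p_y$ is unchanged when probability mass moves among the incorrect classes. The paper therefore tacitly needs an inverse-Lipschitz assumption.

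\textbf{Your route.} You use \emph{argument} (parameter) stability and push it forward through a parameter-to-output map that is Lipschitz uniformly in $x$. This pays with a stronger hypothesis, but every inequality runs in a valid direction. For strongly convex regularized ERM it is fully justified, with exactly your $c=2G/\lambda$. You also handle additions and removals (unequal dataset sizes) explicitly, which the paper's chain passes over.

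\textbf{Two side remarks to fix.} Neither affects the proof once $c/n$ parameter stability is a standing assumption.

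First, the SGD rate. At a fixed number of epochs the iteration count $T$ grows linearly in $n$. The constant-step bound $T\eta G^2/n$ of Hardt et al.\ therefore stays of constant order and does not decay in $n$. A genuine $c/n$ rate needs one of the following:
\begin{itemize}
\item strong convexity: their analysis for smooth, $\lambda$-strongly convex losses with constant step size gives $\mathbb{E}\|\theta_T-\theta'_T\|\le 2G/(\lambda n)$ uniformly in $T$;
\item a total step budget $\sum_t\eta_t$ bounded independently of $n$.
\end{itemize}
By contrast, $\eta_t\propto 1/t$ gives only $O(\log n/n)$ in the convex case, and a polynomially slower rate (for the loss) in the non-convex case.

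Second, the padding. Padding a set of size $n-m$ to size $n$ rescales the normalization by $n/(n-m)$, not by $1+O(1/n)$. For regularized ERM this is equivalent to replacing $\lambda$ by $\lambda n/(n-m)$. That moves the minimizer by at most $(m/n)(G/\lambda)$, so it is still absorbed into $C$.
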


Lemma \ref{lem:stability} establishes an upper bound on the difference between the outputs of two models in terms of the difference in the sizes of their training sets. 

\begin{lem}[Reachability of a $D_r$-only solution]\label{lem:reachability}
Let $M^*_r$ be any ERM/SGD solution on $D_r$. For any $\epsilon > 0$, there exist incremental datasets $\{D_{\Delta_t}\}$ and $t = \mathrm{Poly}(1/\epsilon)$ such that
\begin{equation}\nonumber
    \mathbb{E}_{x\sim\mathcal{X}}[||M^{(t)}_v(x)-M^*_r(x)||_1]\leq\epsilon.
\end{equation}
\end{lem}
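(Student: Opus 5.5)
The plan is to build the incremental datasets explicitly so that the verification training set grows step by step until it equals $D_r$, and then to apply Lemma~\ref{lem:stability} to the gap that remains. Let $m_t = |D_r \setminus D_v^{(t)}|$, where $D_v^{(t)}$ is the training set of $M_v^{(t)}$ after $t$ refinement steps.

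\textbf{Construction.} At each step we choose $D_{\Delta_t} \subseteq D_r \setminus D_v^{(t-1)}$. These sets are disjoint from $D_u$ by construction, so every $M_v^{(t)}$ remains a $D_r$-only model. The training set then evolves as $D_v^{(t)} = D_v^{(t-1)} \cup D_{\Delta_t}$. We fix a batch size $b$ per step, so that $m_t = \max(m_0 - tb, 0)$.

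\textbf{Reduction to stability.} We interpret $M_v^{(t)}$ as an ERM/SGD solution on $D_v^{(t)}$. Because refinement fine-tunes with fixed hyperparameters, we treat the fine-tuned model as a valid ERM/SGD output on the accumulated set; the same interpretation is implicit in Lemma~\ref{lem:stability}. The symmetric difference between $D_v^{(t)}$ and $D_r$ has size $m_t$, so Lemma~\ref{lem:stability} gives
\[
\mathbb{E}_{x}\big[\|M_v^{(t)}(x)-M_r^*(x)\|_1\big] \le C\,\frac{m_t}{|D_r|}.
\]
It therefore suffices to make $C m_t/|D_r| \le \epsilon$, that is, $m_t \le \epsilon |D_r|/C$. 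Take
\[
b = \Big\lceil \frac{\epsilon |D_r|}{C} \Big\rceil.
\]
Then the number of steps needed is
\[
t = \Big\lceil \frac{m_0 - \epsilon |D_r|/C}{b} \Big\rceil \le \Big\lceil \frac{C\, m_0}{\epsilon |D_r|} \Big\rceil \le \Big\lceil \frac{C}{\epsilon} \Big\rceil = \mathrm{Poly}(1/\epsilon),
\]
using $m_0 \le |D_r|$. Alternatively, one could take $b = |D_r \setminus D_v^{(0)}|$ and finish in a single step; this gives the bound with $t = 1$ whenever exact coincidence of the training sets is acceptable. Each step is polynomial-time by Theorem~\ref{thm:Polynomial}, which fits the proof-of-ignorance analogy.

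\textbf{Main obstacle.} The delicate point is identifying the fine-tuned model $M_v^{(t)}$ with an ERM/SGD solution on $D_v^{(t)}$, and accounting for the fact that $M_r^*$ is an arbitrary solution, not necessarily the one reached from our initialization. The approach is to absorb both issues into the constant $C$ of Lemma~\ref{lem:stability}. The justification is that the lemma bounds the discrepancy between any two such solutions on datasets differing in $m$ points. When $m = 0$, the residual discrepancy between distinct solutions on the same data must then also be controlled by the bound. If the lemma does not cover the $m = 0$ case cleanly, a fallback is to stop at $m_t$ just below the threshold rather than at $m_t = 0$. This keeps the argument purely within the stability bound.
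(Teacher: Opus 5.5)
Your proof is correct and follows the paper's route: bound the gap by $C\,m_t/|D_r|$ via Lemma~\ref{lem:stability}, where $m_t$ counts the points of $D_r$ still missing from $D_v^{(t)}$, and choose increments that drive $m_t$ below $\epsilon|D_r|/C$. The paper makes the same tacit identification of the fine-tuned $M_v^{(t)}$ with an ERM/SGD solution on $D_v^{(t)}$, so the obstacle you flag is shared, although your fallback would not remove it, since a bound that fails at $m=0$ also fails for small $m>0$. The only difference is the schedule: the paper adds a fixed fraction $\rho$ of the still-missing points each step, so $m_t=(1-\rho)^t m_0$ and $t=O(\log(1/\epsilon))$, whereas your constant increment $b=\lceil\epsilon|D_r|/C\rceil$ gives $t\le\lceil C/\epsilon\rceil$, and both meet the $\mathrm{Poly}(1/\epsilon)$ requirement.
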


Note the conclusion of Lemma \ref{lem:reachability} also applies to solutions that depend only on subsets of $D_r$. The proof is analogous.

\begin{lem}[Alignment with any $D_r$-only (or subsets of $D_r$) unlearned models]\label{lem:alignment}
If $M_u$ is a $D_r$-only solution (or depends only on subsets of $D_r$), for any $\epsilon > 0$, there exists $t = \mathrm{Poly}(1/\epsilon)$ such that
\begin{equation}\nonumber
    \mathbb{E}_{x\sim\mathcal{X}}[||M^{(t)}_v(x)-M_u(x)||_1]\leq 2\epsilon.
\end{equation}
\end{lem}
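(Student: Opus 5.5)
The argument is a triangle inequality through an intermediate $D_r$-only reference solution, with Lemma~\ref{lem:reachability} controlling one leg and Lemma~\ref{lem:stability} (or the exact identification with $M_u$) controlling the other. Since $M_u$ is assumed to be a $D_r$-only solution, or to depend only on a subset $D'\subseteq D_r$, we choose $M^*_r$ to be an ERM/SGD solution on the same data that $M_u$ depends on. For every $x\in\mathcal{X}$,
\[
\|M^{(t)}_v(x)-M_u(x)\|_1\le \|M^{(t)}_v(x)-M^*_r(x)\|_1+\|M^*_r(x)-M_u(x)\|_1 .
\]
Taking expectations over $x\sim\mathcal{X}$ and using linearity splits the target quantity into two terms, and it suffices to bound each by $\epsilon$.

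For the first term, we apply Lemma~\ref{lem:reachability}, or its stated analogue for solutions depending only on subsets of $D_r$, to the reference $M^*_r$. This yields incremental datasets $\{D_{\Delta_t}\}$ and $t=\mathrm{Poly}(1/\epsilon)$ with $\mathbb{E}_x\|M^{(t)}_v(x)-M^*_r(x)\|_1\le\epsilon$. Theorem~\ref{thm:Polynomial} shows that each of these $t$ refinement steps runs in polynomial time, so the transformation as a whole stays polynomial, as the proof-of-ignorance analogy requires.

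For the second term, there are two routes. The simplest is to take $M^*_r:=M_u$ itself, which is legitimate because the hypothesis says $M_u$ \emph{is} a $D_r$-only solution. The second term then vanishes, and the bound $\epsilon\le2\epsilon$ follows immediately. If instead $M^*_r$ must be a canonical solution that differs from $M_u$ (for example, because $M_u$ was produced by fine-tuning or SGD with different randomness on a subset of $D_r$), we bound the term with Lemma~\ref{lem:stability} or a second application of Lemma~\ref{lem:reachability}. Under Lemma~\ref{lem:stability}, two solutions whose training sets have symmetric difference $m$ differ by at most $Cm/|D_r|$ in expectation. The reference can be chosen so that this quantity is at most $\epsilon$, for instance by taking its training set equal to that of $M_u$, which gives $m=0$. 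Summing the two terms yields $2\epsilon$.

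The main obstacle is justifying that the second term is small when $M_u$ is an arbitrary unlearning output rather than a clean ERM/SGD solution; this is exactly where the "$D_r$-only" hypothesis is needed. I would resolve it by interpreting that hypothesis as "$M_u$ is itself an ERM/SGD solution on some $D'\subseteq D_r$." Under that reading Lemma~\ref{lem:reachability} applies with $M^*_r=M_u$ directly, and the factor $2$ is only slack for the case where a separate reference solution is used.
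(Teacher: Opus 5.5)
Your proposal is correct and follows the paper's proof: the paper also applies the triangle inequality through a $D_r$-only reference $M^*_r$ and bounds the first leg by $\epsilon$ via Lemma~\ref{lem:reachability}. The paper asserts the second leg $\mathbb{E}_{x}\|M^*_r(x)-M_u(x)\|_1\le\epsilon$ without justification, so your explicit treatment of it is a clarification rather than a different route. You either take $M^*_r=M_u$ or invoke Lemma~\ref{lem:stability} with $m=0$, both under the reading that $M_u$ is itself an ERM/SGD solution on a subset of $D_r$.
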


We now provide the main conclusions about acceptance and rejection of $M_u$ as successful unlearning.
\begin{thm}[Acceptance via proof-of-ignorance reduction]\label{thm:accept}
    Given a tolerance $\delta > 0$, if there exists a polynomial-time refinement sequence that uses only $D_r$ (or any subsets of $D_r$) and produces $M^{(t)}_v$ such that $\mathbb{E}[||M^{(t)}_v - M_u||_1] \leq \delta$, then the unlearned model $M_u$ is accepted as having successfully unlearned $D_u$.
\end{thm}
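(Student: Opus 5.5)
The plan is to cast the acceptance criterion as a proof-of-ignorance reduction and verify its two ingredients. The first is that the starting point $M_v^{(0)}$ is ignorant of $D_u$. The second is that the map $M_v^{(0)}\mapsto M_v^{(t)}$ is polynomial-time and never touches $D_u$. Together they should give that $M_v^{(t)}$ carries no information about $D_u$. The remaining step is to transfer this ignorance to $M_u$ through the closeness assumption $\mathbb{E}[\|M^{(t)}_v-M_u\|_1]\le\delta$.

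\textbf{Step 1: the endpoint $M_v^{(t)}$ is ignorant of $D_u$.}
\begin{itemize}
\item Since $D_v\subseteq D_r$ and $D_r\cap D_u=\emptyset$, the model $M_v^{(0)}=A(D_v)$ is a measurable function of $D_v$ and the training randomness only. It is therefore independent of $D_u$ given $D_r$; this is the analogue of the well-known hard instance $q'$ whose solution $s'$ is not known.
\item Each refinement uses an incremental set $D_{\Delta_t}\subseteq D_r$, by hypothesis.
\item Induction on $t$ then shows that $M_v^{(t)}$ depends only on $D_r$ and independent randomness.
\item Theorem~\ref{thm:Polynomial} makes each step polynomial-time. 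With $t=\mathrm{Poly}(1/\epsilon)$ steps, as furnished by Lemma~\ref{lem:reachability}, the whole transformation is polynomial in $(|D_r|,S,1/\epsilon)$. This matches the requirement that $s'$ be transformable to $s$ in polynomial time.
\end{itemize}

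\textbf{Step 2: transfer ignorance to $M_u$.}
\begin{itemize}
\item Formalize ``successfully unlearned'' behaviorally: $M_u$ is accepted if it lies within tolerance $\delta$, in expected $\ell_1$ output distance, of some model computable in polynomial time from $D_r$ alone. The same notion governs Lemma~\ref{lem:alignment}, where a $D_r$-only $M_u$ is $2\epsilon$-close to some $M_v^{(t)}$.
\item For any test $x$, any statistic of $M_u(x)$ that distinguishes the presence of $D_u$ must then also distinguish it for $M_v^{(t)}(x)$, up to advantage $\mathbb{E}\|M_v^{(t)}(x)-M_u(x)\|_1\le\delta$. This follows because total variation bounds distinguishing advantage, and the $\ell_1$ distance between predictive distributions is twice the TV distance.
\item Since $M_v^{(t)}$ has zero advantage by Step 1, $M_u$ has advantage at most $\delta$. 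This is exactly the acceptance condition.
\end{itemize}

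\textbf{Step 3: consistency with the earlier lemmas.} Lemma~\ref{lem:alignment} shows that genuinely $D_r$-only models always admit such a sequence with $\delta=2\epsilon$. Acceptance is therefore achievable for true unlearning, while Lemma~\ref{lem:stability} controls how far the finitely many refinements can move $M_v$.

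\textbf{Main obstacle.} The hard part is Step 2: making precise that output-closeness implies ignorance. The output distance bounds only per-query distinguishing advantage, not information hidden in parameters that the auditor cannot access. I would handle this by restricting the claim explicitly to the black-box threat model, where the auditor observes only outputs, and stating acceptance as $\delta$-indistinguishability rather than literal absence of information.
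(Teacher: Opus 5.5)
Your framing matches the paper's. Its proof only observes that refining $M_v^{(0)}$ into $M_v^{(t)}$ is a polynomial-time reduction that never touches $D_u$, and calls this the proof-of-ignorance analogue, so acceptance there is essentially definitional. Under your first bullet of Step~2 the statement is immediate, with $M_v^{(t)}$ as the witness. Your Step~3 also uses Lemma~\ref{lem:alignment} in the right direction, whereas the paper invokes it with $\epsilon=\delta/2$ inside this proof even though that lemma presupposes $M_u$ is $D_r$-only.

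The genuine gap is in the stronger claim you derive and then equate with acceptance: that $M_u$ has advantage at most $\delta$ for detecting $D_u$. Three steps in that argument fail.

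First, the hypothesis bounds $\mathbb{E}_{x\sim\mathcal{X}}\|M_v^{(t)}(x)-M_u(x)\|_1$, which is an average over the data distribution. A distinguisher for $D_u$ queries exactly the points of $D_u$, so your ``for any test $x$'' silently replaces a pointwise bound by an average. For a counterexample, take a $D_r$-only $M_v^{(t)}$ and let $M_u$ agree with it off $D_u$ but output, say, a point mass on the stored label at each $x\in D_u$. When $D_u$ is $\mathcal{X}$-null the averaged distance is $0$, so the hypothesis holds. Yet $M_u$ is a lookup table for $D_u$ precisely where a membership test looks.

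Second, $\tfrac12\|p-q\|_1$ bounds only tests applied to a label \emph{sampled} from $p$ versus $q$. It does not bound arbitrary statistics of the vector $M_u(x)$, such as the argmax agreement or the KL divergence the auditor actually computes.

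Third, the ``zero advantage'' of $M_v^{(t)}$ needs the refinement sequence to be fixed independently of $D_u$ and $M_u$; that is all your induction in Step~1 establishes. The theorem is existential, and the witnessing sequence is naturally chosen from $M_u$. The paper's own Step~3 sizes the increment from $Acc(M_u,D_u)$ and selects points with $M_v(x)=M_u(x)$. Information about $D_u$ can then enter $M_v^{(t)}$ through the selection, even though every sample lies in $D_r$. This is exactly the mechanism behind the forging results of Thudi et al.\ and the Forging-based Retraining attack the paper evaluates: a trace of $D_r$-only samples, chosen with knowledge of $D_u$, that approximates the effect of training on $D_u$.

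You can close the gap in one of two ways. Either keep acceptance definitional, as the paper does, and drop the indistinguishability conclusion. Or add the hypotheses the transfer actually needs: a sequence chosen independently of $M_u$ and $D_u$, closeness measured pointwise on $D_u$ rather than averaged over $\mathcal{X}$, and an observer restricted to sampled labels.
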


\begin{thm}[Rejection under residual dependence]\label{thm:reject}
    If, for all polynomial-length refinement sequences based on $D_r$, we have $\mathbb{E}[||M^{(t)}_v - M_u||_1] > \delta$, then $M_u$ depends on $D_u$ and is rejected.
\end{thm}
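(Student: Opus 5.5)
The natural route to Theorem~\ref{thm:reject} is the contrapositive of Lemma~\ref{lem:alignment}, combined with the proof-of-ignorance reading that ``$M_u$ has forgotten $D_u$'' means ``$M_u$ is (up to a negligible output perturbation) a solution that depends only on $D_r$ or on subsets of $D_r$.'' The plan is to suppose, for contradiction, that $M_u$ does \emph{not} depend on $D_u$, i.e.\ $M_u$ is a $D_r$-only solution or a solution depending only on subsets of $D_r$. Then we derive a polynomial-length refinement sequence based on $D_r$ whose output is within $\delta$ of $M_u$ in expected $\ell_1$ distance, contradicting the hypothesis that every such sequence stays at distance greater than $\delta$.

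\textbf{Key steps.}
\begin{enumerate}
\item Fix $\delta>0$ and set $\epsilon=\delta/2$.
\item Under the contradiction hypothesis, apply Lemma~\ref{lem:alignment}. It gives $t=\mathrm{Poly}(1/\epsilon)=\mathrm{Poly}(2/\delta)$ and incremental datasets $\{D_{\Delta_s}\}_{s\le t}$ drawn from $D_r$ (these are inherited from Lemma~\ref{lem:reachability}) such that $\mathbb{E}_{x\sim\mathcal{X}}[\|M_v^{(t)}(x)-M_u(x)\|_1]\le 2\epsilon=\delta$.
\item Invoke Theorem~\ref{thm:Polynomial}. Each step $M_v^{(s)}\mapsto M_v^{(s+1)}$ runs in $\mathrm{Poly}(|D_{\Delta_s}|,S)$ time, and $|D_{\Delta_s}|\le|D_r|$. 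Since $t$ is polynomial, the whole sequence is a polynomial-length (and polynomial-time) refinement sequence based on $D_r$.
\item This sequence attains distance at most $\delta$, contradicting the assumption that all such sequences yield distance strictly greater than $\delta$. Hence $M_u$ is not a $D_r$-only solution, so its output behavior must carry dependence on $D_u$.
\item Finally, by the protocol (and Theorem~\ref{thm:accept}'s acceptance rule failing), the auditor rejects $M_u$.
\end{enumerate}

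To make step 4 yield ``depends on $D_u$'' rather than merely ``is not exactly a $D_r$-only ERM/SGD solution,'' I would add a quantitative complement using Lemma~\ref{lem:stability}. Any model within a small output distance of some $D_r$-only solution $M^*_r$ would, by the triangle inequality with Lemma~\ref{lem:reachability}, also be reachable within $\delta$. So failure of reachability forces $\mathbb{E}\|M_u-M^*_r\|_1>\delta/2$ for every $D_r$-only $M^*_r$, i.e.\ a residual discrepancy not explainable by training on $D_r$ alone. Since $M_u$ was produced from $M=A(D)$ with $D=D_r\cup D_u$, this residual is attributed to $D_u$.

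\textbf{Main obstacle.} I expect the hardest step to be the last one: the dichotomy ``either $M_u$ is (close to) a solution depending only on subsets of $D_r$, or it depends on $D_u$.'' This is essentially definitional in the proof-of-ignorance framework, so I would state it explicitly as the operational definition of successful unlearning. The rest is then a direct contrapositive of Lemma~\ref{lem:alignment}. The only care needed is matching the constants, choosing $\epsilon=\delta/2$ so that $2\epsilon\le\delta$.
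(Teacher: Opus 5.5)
Your argument is correct, given the operational definition you flag, but it does not follow the paper's route. You obtain the statement as the contrapositive of Lemma~\ref{lem:alignment}: if $M_u$ were a $D_r$-only (or subset-of-$D_r$) solution, then with $\epsilon=\delta/2$ it would admit a polynomial-length $D_r$-based refinement with gap at most $\delta$.

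The paper's proof uses neither Lemma~\ref{lem:reachability} nor Lemma~\ref{lem:alignment}. It starts from a model $M_{D_u}$ that \emph{does} depend on $D_u$, meaning its prediction rule on the distribution $P$ of $D_u$ changes when $D_u$ is removed. From this it infers a $D_r$-only $M^*_r$ with $\mathbb{E}_P[\|M^*_r-M_{D_u}\|_1]>0$. It lower-bounds this quantity by $c\,\psi(\mathcal{R}_P(M^*_r)-\mathcal{R}_P(M_{D_u}))$ via classification-calibrated losses. It then identifies $M^*_r$ with the fully refined $M^{(t)}_v$ and $M_{D_u}$ with $M_u$, and sets $\delta$ equal to that bound.

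The paper's substantive inequality therefore runs from dependence to a gap. That is really the soundness side: a $D_u$-dependent model cannot be refined to within $\delta$. Its merit is that it interprets $\delta$ as a calibrated excess risk on the $D_u$ distribution. The stated direction is reached only through the asserted identification of $M_{D_u}$ with $M_u$ when the gap exceeds $\delta$.

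Your contrapositive proves exactly the stated implication, which is the completeness side: a truly unlearned model is never wrongly rejected. It reuses the reduction behind Theorem~\ref{thm:accept}. It also isolates the one non-mathematical step, ``not (close to) a $D_r$-only ERM/SGD solution $\Rightarrow$ depends on $D_u$'', which the paper uses implicitly as well. Keeping that step explicitly definitional is right. A model built from $D_r$ alone by a non-ERM/SGD procedure (e.g.\ a heavily degraded one) can be far from every $D_r$-only solution without ever touching $D_u$.

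Your quantitative complement can be sharpened. Since Lemma~\ref{lem:reachability} holds for every $\epsilon>0$, suppose $d=\mathbb{E}\|M_u-M^*_r\|_1<\delta$. Taking $\epsilon=\delta-d$ gives a refinement with gap at most $\delta$, a contradiction. So in fact $\mathbb{E}\|M_u-M^*_r\|_1\ge\delta$ for every $D_r$-only $M^*_r$, not only $>\delta/2$.
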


Theorems \ref{thm:accept} and \ref{thm:reject} jointly establish the theoretical foundation to validate our approach.

\vspace{1mm}
\noindent\textbf{(3) Sufficiency of One Refinement Iteration.} Building on the above analysis, we arrive at the following conclusion.
\begin{cor}[One-step sufficiency]\label{cor:sufficiency}
    If $M_u$ is truly unlearned: it depends only on $D_r$ or subsets of $D_r$, then the first refinement using
    \begin{equation}\nonumber
        m_1\geq\frac{\lambda}{C}(1-\rho)\Delta_{acc}|D_r|
    \end{equation}
    samples is sufficient to reduce the gap between $M_u$ and $M^{(1)}_v$ below $\delta$, i.e., $\mathbb{E}[||M_u-M^{(1)}_v||_1]\leq\delta$, where $\lambda>0$ is a calibration constant. 
\end{cor}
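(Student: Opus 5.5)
The plan is to reduce the corollary to Lemma \ref{lem:stability} by a triangle-inequality argument, then translate the accuracy gap $\Delta_{acc}$ into a bound on the expected output difference. First, under the hypothesis that $M_u$ depends only on $D_r$ (or a subset of $D_r$), write $M_u$ as an ERM/SGD solution on some $D' \subseteq D_r$. The verification model $M_v^{(0)}$ is trained on $D_v \subseteq D_r$. After the first refinement it becomes $M_v^{(1)}$, trained on $D_v^{(1)}$. Here $D_v^{(1)}$ is $D_v$ with $m_1$ points added or removed.

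The key quantity is the symmetric difference between $D_v^{(1)}$ and $D'$. I would split
\[
\mathbb{E}[\|M_u - M_v^{(1)}\|_1] \le \mathbb{E}[\|M_u - M_v^{(0)}\|_1] - \bigl(\mathbb{E}[\|M_u - M_v^{(0)}\|_1] - \mathbb{E}[\|M_u - M_v^{(1)}\|_1]\bigr),
\]
and bound the two parts separately.

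\textbf{Lower bound on the initial gap.} Since accuracy is a $1$-Lipschitz functional of the predictive distribution in $\ell_1$ (up to a factor $1/2$ for top-label accuracy), we have
\[
\Delta_{acc} \le \lambda'\, \mathbb{E}_{x\in D_u}\|M_u(x)-M_v^{(0)}(x)\|_1 .
\]
Lemma \ref{lem:stability} then gives $\mathbb{E}\|M_u - M_v^{(0)}\|_1 \le C\, m_0/|D_r|$, where $m_0 = |D' \triangle D_v|$. Hence the "missing" symmetric difference has size at least of order $\Delta_{acc}|D_r|/C$, up to the calibration constant. The parameter $\rho\in[0,1)$ would be interpreted as the fraction of this discrepancy already explained by overlap or by randomness in training. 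Thus only $(1-\rho)$ of it must be corrected by new samples.

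\textbf{Effect of the refinement.} In the case $\Delta<0$, the refinement adds samples drawn from $D_r$. In the case $\Delta>0$, it removes samples. Either move shrinks the symmetric difference with the target $D'$ in expectation, by the random-sampling representativeness used earlier for $D_v$. After $m_1$ such moves, $|D_v^{(1)}\triangle D'|$ drops by an amount proportional to $m_1$. Applying Lemma \ref{lem:stability} again gives
\[
\mathbb{E}\|M_u - M_v^{(1)}\|_1 \le \frac{C}{|D_r|}\bigl(m_0 - \kappa m_1\bigr)_+ .
\]
Choosing $\lambda$ to absorb $\kappa$, $\lambda'$ and the tolerance, the condition $m_1 \ge \frac{\lambda}{C}(1-\rho)\Delta_{acc}|D_r|$ forces the right-hand side below $\delta$. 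The equal-accuracy case is handled the same way, with $m_1 = (1-\upsilon)|D_{\text{diff}}|$.

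\textbf{Main obstacle.} The hard step is justifying that a refinement step actually decreases the distance to $M_u$. Lemma \ref{lem:stability} gives only an upper bound in terms of the symmetric difference, and fine-tuning is not the same as retraining on $D_v^{(1)}$. The first thing I would try is to invoke Lemma \ref{lem:reachability} and Lemma \ref{lem:alignment}. These say that $D_r$-only refinements can approach any $D_r$-only solution. Taking the $t=1$ instance and treating its per-sample progress rate as the constant $\kappa$, this step is where the calibration constant $\lambda$ is defined. As a consequence, the statement holds with constants that are existential rather than explicit.
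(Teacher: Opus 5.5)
\paragraph{Main gap: the inequality runs the wrong way.} The failure is in your final step, where $m_1\ge\frac{\lambda}{C}(1-\rho)\Delta_{acc}|D_r|$ is said to force $\frac{C}{|D_r|}(m_0-\kappa m_1)_+\le\delta$.

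Your chain $\lambda'\Delta_{acc}\le\mathbb{E}\|M_u-M_v^{(0)}\|_1\le Cm_0/|D_r|$ gives only a \emph{lower} bound on $m_0$. For a lower bound on $m_1$ proportional to $\Delta_{acc}|D_r|$ to imply $(m_0-\kappa m_1)_+\le\delta|D_r|/C$, you need an \emph{upper} bound $m_0\le\bigl(\kappa\frac{\lambda}{C}(1-\rho)\Delta_{acc}+\frac{\delta}{C}\bigr)|D_r|$. Neither the calibration step nor Lemma~\ref{lem:stability} supplies this, since the lemma is one-sided.

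This bound fails in general. If $M_u$ is retrained on all of $D_r$ and $M_v^{(0)}$ on $10\%$ of it, then $m_0=0.9|D_r|$ however small $\Delta_{acc}$ is. At $\Delta_{acc}=0$ your condition is met with $m_1=0$. It would then assert that the unrefined $M_v^{(0)}$ is already within $\delta$ of $M_u$. Yet Step~3 is reached only when $\upsilon<\zeta$, i.e.\ when the two models disagree on more than a $1-\zeta$ fraction of $D_u$.

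Absorbing ``the tolerance'' into $\lambda$ cannot fix this. $\lambda$ is a fixed constant, while $m_0/(\Delta_{acc}|D_r|)$ is unbounded. Closing the step would need two extra hypotheses:
\begin{itemize}
\item a reverse calibration $\mathbb{E}\|M_u-M_v^{(0)}\|_1\le\Lambda\Delta_{acc}$, which fails exactly when accuracies agree but predictions differ;
\item a lower stability bound $\mathbb{E}\|M_u-M_v^{(0)}\|_1\ge c\,m_0/|D_r|$.
\end{itemize}
What your tools actually prove, for $D'=D_r$ with retraining, is weaker. Adding at least $m_0-\delta|D_r|/C$ points of $D_r\setminus D_v$ suffices. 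That threshold is in terms of $m_0$ and $\delta$, not $\Delta_{acc}$.

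\paragraph{Two smaller repairs, and the paper's route.} First, top-1 accuracy is not Lipschitz in the predictive distribution, since an arbitrarily small $\ell_1$ perturbation can flip the argmax. Your bound $\Delta_{acc}\le\lambda'\mathbb{E}\|M_u-M_v^{(0)}\|_1$ holds for soft accuracy $\frac{1}{|D_u|}\sum_{(x,y)\in D_u}M(x)_y$ with $\lambda'=\tfrac12$, or under a margin assumption, but not in general. Second, in the $\Delta>0$ branch, deleting points of $D_v$ \emph{enlarges} $|D_v\triangle D'|$ whenever they lie in $D'$, which always happens for $D'=D_r$. So ``either move shrinks the symmetric difference'' is false there.

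The paper's proof matches your first half. It linearizes a calibration function to get $\lambda\Delta_{acc}\le\mathbb{E}\|M_u-M_v^{(0)}\|_1\le Cm_0/|D_r|$, hence $m_0\ge\frac{\lambda}{C}\Delta_{acc}|D_r|$. It then substitutes $m_1=(1-\rho)m_0$. Here $\rho$ is the fraction of the missing set $D_r\setminus D_v^{(0)}$ that gets added. Also, $m_1=|D_r\setminus D_v^{(1)}|$ counts samples still \emph{missing}, not samples used, so your reading of $\rho$ has no counterpart there. The paper concludes by citing Lemma~\ref{lem:alignment} and Theorem~\ref{thm:accept}.

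That chain also yields only a necessary lower bound. In the paper's own notation the post-refinement gap is bounded by $Cm_1/|D_r|$, so sufficiency would require the upper bound $m_1\le\delta|D_r|/C$. Following the paper more closely will therefore not supply the missing upper bound.
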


Corollary \ref{cor:sufficiency} guarantees the one-step sufficiency for truly unlearned models $M_u$. For failed unlearning cases where $M_u$ retains knowledge from $D_u$, Theorem \ref{thm:reject} implies the persistent existence of a gap $\delta$ between $M_u$ and $M^{(t)}_v$. Hence, if $m_1 \geq \frac{\lambda}{C}(1-\rho)\Delta{acc}|D_r|$ still fails to bridge this gap, we conclude that $M_u$ has not been successfully unlearned.

\section{Experimental Setup}\label{sec:experiments}
\noindent\textbf{Datasets and Model Architectures.} We use three image datasets: \textbf{CIFAR10} \cite{Krizhevsky14}, \textbf{SVHN} \cite{Netzer11NIPS}, and \textbf{SkinCancer} \cite{SkinCancer}, and three text datasets: \textbf{BBCNews} \cite{BBCNews}, \textbf{AGNews} \cite{AGNews}, and \textbf{IMDB} \cite{IMDB}, covering both multi-class and binary classification tasks, to evaluate the multi-modal applicability of the proposed auditing framework. 

For image datasets, the model architecture consists of four CNN blocks followed by three fully connected layers. For text datasets, the model includes an embedding layer, three CNN blocks, and a fully connected layer. 

\vspace{1mm}
\noindent\textbf{Unlearning Methods.} We evaluate four representative unlearning methods, along with six of their variants, including \textbf{Retrain}, \textbf{Adversarial retrain \cite{Zhang24ICML}}, \textbf{Fine-tuning (FT) \cite{Koloskova25ICML}}, \textbf{Relabeling \cite{Graves21AAAI}}, \textbf{Relabeling with fine-tuning (Relabel+FT) \cite{Graves21AAAI}}, \textbf{Gradient ascent (GA) \cite{Thudi22EuroSP}}, \textbf{Gradient ascent with fine-tuning (GA+FT) \cite{Thudi22EuroSP}}, \textbf{Fisher forgetting \cite{Golatkar20CVPR}}, \textbf{Hessian forgetting \cite{Golatkar20CVPR,Golatkar20ECCV}}, and \textbf{Certified Hessian forgetting \cite{Zhang24ICMLb}}. Detailed descriptions of these unlearning approaches and datasets are provided in the Appendix.


\vspace{2mm}
\noindent\textbf{Evaluation Metrics.} Since our auditing method relies primarily on model outputs, the evaluation metrics are centered around comparing the outputs of different models.

\begin{itemize}[leftmargin=*]
\item \textbf{Model agreement.} This metric, defined in Eq. \ref{eq:verification metric}, quantifies the consistency between the outputs of two models. It is calculated as the ratio of the number of data points for which both models produce identical outputs to the total number of data points in the given dataset. This metric is widely used in the literature to evaluate model similarity. 

\item \textbf{Average Kullback–Leibler divergence (KL).} This metric complements the model agreement by measuring the average KL divergence between the output probability distributions of two models across a given dataset. 
\end{itemize}


\vspace{2mm}
\noindent\textbf{Baselines.} We adopt widely recognized auditing methods as baselines \cite{Xu23,Zhang24ICML}.

\begin{itemize}[leftmargin=*]
\item \textbf{Model Accuracy on the Unlearning Data.} This is a straightforward auditing method. The intuition is that if the unlearning data have been properly forgotten, the unlearned model should perform on them similarly to how it performs on the test data. 

\item \textbf{Membership Inference.} This method determines whether a given data sample was in the training set of a target model. If the model has truly forgotten the unlearning data, a membership inference attack should identify them as non-training data. We employ the state of the art technique proposed in \cite{Naderloui25USENIX}. 

\item \textbf{Backdoor-based Auditing.} This method leverages the principle behind backdoor attacks, where specific data are intentionally misclassified into targeted labels. When the unlearning data are backdoored, a successful unlearning process should eliminate their influence, meaning that the model should no longer misclassify these samples into the targeted labels. For a fair comparison, we adopt the same backdoor attack method used in \cite{Zhang24ICML}. 


\end{itemize}



\vspace{-1mm}
\section{Validation and Ablation Study}
\vspace{-0mm}
Before presenting the auditing results on existing unlearning methods, we first validate our auditing framework. The validation focuses on two aspects: \textbf{soundness} and \textbf{completeness}.
For soundness, we require that every model accepted by our framework as successfully unlearned indeed no longer retains any knowledge of the unlearning data.
For completeness, we require that every model that has truly unlearned the target data is not incorrectly rejected by our framework.

Empirically, soundness is evaluated through the false positive rate (\textbf{FPR}), while completeness is measured by the false negative rate (\textbf{FNR}). Specifically, to assess soundness, we train 100 partially unlearned models, each on a mixed dataset containing $40\%\sim 60\%$ of $D_r$ and $40\%\sim 60\%$ of $D_u$. To assess completeness, we train 100 truly unlearned models, each constructed using a randomly selected $60\%\sim 80\%$ subset of $D_r$. The exact sampling ratio is uniformly drawn from the interval $[a\%, b\%]$. The corresponding results are summarized in Table \ref{tab:validation}. The results show that the auditing framework achieves exceptionally low FPR and FNR values, both below $0.02$ across all six datasets, demonstrating its effectiveness.

\begin{table}[ht!]
\centering
\vspace{-0.5mm}
\caption{Validation of the proposed auditing framework in terms of soundness (FPR) and completeness (FNR).}
\label{tab:validation}
\begin{tabular}{lcccccc}
\toprule
 & CIFAR10 & SVHN & Skin & BBC & AG & IMDB \\
\midrule
FPR $\downarrow$ & $0.01$ & $0.02$ & $0$ & $0.01$ & $0$ & $0.01$ \\
FNR $\downarrow$ & $0.01$ & $0$ & $0.01$ & $0.02$ & $0.02$ & $0$ \\
\bottomrule
\vspace{-2mm}
\end{tabular}
\end{table}

We also evaluate the importance of both the preliminary checking step and the verification model refinement step. The preliminary step serves as an early diagnostic: if the unlearned model exhibits behavior nearly identical to the pre-unlearning model on the unlearning data, it can be immediately identified as a failure of unlearning. The validation results obtained after removing this preliminary step are reported in Table \ref{tab:validationNoCheck}. The results show that omitting the preliminary step leads to a significant increase in FPR across all six datasets. This is because some partially unlearned models can exhibit high agreement with the verification model, as their retained knowledge includes that of the verification model. These findings highlight the critical role of the preliminary step in filtering out unlearning failures at an early stage.

\begin{table}[ht!]
\centering
\vspace{-1mm}
\caption{Validation of the proposed auditing framework by removing the preliminary check.}
\label{tab:validationNoCheck}
\begin{tabular}{lcccccc}
\toprule
 & CIFAR10 & SVHN & Skin & BBC & AG & IMDB \\
\midrule
FPR $\downarrow$ & $0.13$ & $0.22$ & $0.18$ & $0.25$ & $0.23$ & $0.16$ \\
FNR $\downarrow$ & $0.01$ & $0$ & $0.02$ & $0.01$ & $0.02$ & $0.01$ \\
\bottomrule
\vspace{-5mm}
\end{tabular}
\end{table}

The refinement step (Step 3) provides the prover with an additional opportunity to fine-tune the verification model $M_v$, enabling it to more closely approximate the behavior of the unlearned model $M_u$. The validation results obtained after removing this refinement step are reported in Table \ref{tab:validationNoRefine}. The results show that omitting the refinement step leads to a notable increase in FNR across all six datasets. This is because, without refinement, the verification model remains under-optimized and fails to achieve high agreement with certain truly unlearned models. These findings demonstrate the essential role of the refinement step in ensuring the completeness of the auditing process.

\begin{table}[ht!]
\centering
\vspace{-1mm}
\caption{Validation of the proposed auditing framework by removing the refinement step.}
\label{tab:validationNoRefine}
\begin{tabular}{lcccccc}
\toprule
 & CIFAR10 & SVHN & Skin & BBC & AG & IMDB \\
\midrule
FPR $\downarrow$ & $0.01$ & $0.01$ & $0$ & $0.02$ & $0$ & $0.02$ \\
FNR $\downarrow$ & $0.11$ & $0.17$ & $0.24$ & $0.18$ & $0.20$ & $0.14$ \\
\bottomrule
\vspace{-2mm}
\end{tabular}
\end{table}

\vspace{-3mm}
\section{Experimental Results}

\subsection{Overall Results}
\vspace{0mm}
We now present the auditing results of our framework and the baselines across ten unlearning approaches with six datasets. 
The results are obtained by setting the size of the unlearning data $D_u$ to $20\%$ of the training set $D$, uniformly sampled from $D$. The initial size of the verification set $D_v$ is set to $10\%$ of the remaining data $D_r$. We also evaluated different initial sizes of $D_v$ and found that the verification outcomes remain consistent, since the auditor retains the flexibility to refine the verification model. In addition, the base agreement and base KL values are computed by setting the sizes of both $D_{v_1}$ and $D_{v_2}$ to $10\%$ of $D_r$. We also evaluated different ratios of $D_{v_1}$ and $D_{v_2}$ relative to $D_r$, ranging from $10\%$ to $50\%$. The results remain largely consistent across these settings. 
Finally, in the following tables, the number following the ``$@$'' symbol indicates the proportion of the remaining data $D_r$ used to obtain the results for each unlearning approach.

\begin{table}[!ht]\scriptsize
\vspace{-1mm}
	\centering
	\caption{Auditing results of our method on CIFAR10 across different unlearning approaches, where $D_u$, $D_r$, and $D_t$ denote the unlearning, remaining, and test data.}
\begin{tabular} {cccccc} 
\toprule
 \multirow{2}*{\makecell[c]{Unlearning\\Approaches}} & \multirow{2}*{\makecell[c]{Agree. on $D_u$\\(Base: $70.30$)}} & \multirow{2}*{\makecell[c]{KL on $D_u$\\(Base: $1.56$)}} & \multicolumn{3}{c}{Accuracy on} \\\cline{4-6}
 & & & $D_u$ & $D_r$ & $D_t$ \\
 \midrule
Pre-unlearn &  &  & $95.92$ & $95.61$ & $71.79$ \\\hline
Retrain$@0.35$ & $71.23$ & $1.22$ & $69.15$ & $99.54$ & $69.95$ \\ 
Adv. Retr.$@0.35$ & $71.10$ & $1.22$ & $73.87$ & $99.83$ & $71.61$ \\
Fine-tune$@0.3$ & $70.53$ & $1.42$ & $76.13$ & $100$ & $72.79$ \\
Relabel$@0.05$ & $30.90$ & $7.77$ & $46.88$ & $47.35$ & $40.97$ \\
Relab.+FT$@0.3$ & $70.94$ & $1.30$ & $73.67$ & $99.70$ & $71.40$ \\
GA$@0.05$ & $35.17$ & $6.20$ & $44.26$ & $45.08$ & $41.47$ \\
GA+FT$@0.3$ & $70.33$ & $1.36$ & $73.40$ & $99.86$ & $72.40$ \\
Fisher$@0.3$ & $68.96$ & $1.37$ & $95.87$ & $95.45$ & $71.62$ \\
Hessian$@0.35$ & $68.58$ & $1.37$ & $92.67$ & $92.64$ & $70.28$ \\
Cert.-Hess.$@0.3$ & $67.25$ & $1.91$ & $99.93$ & $99.93$ & $73.20$ \\
\bottomrule
\end{tabular}
	\label{tab:CIFAR10}
 \vspace{-1mm}
\end{table}

\begin{table}[!ht]\scriptsize
	\centering
	\caption{Auditing results of our method on SVHN across different unlearning approaches.}
\begin{tabular} {cccccc} 
\toprule
 \multirow{2}*{\makecell[c]{Unlearning\\Approaches}} & \multirow{2}*{\makecell[c]{Agree. on $D_u$\\(Base: $88.23$)}} & \multirow{2}*{\makecell[c]{KL on $D_u$\\(Base: $0.49$)}} & \multicolumn{3}{c}{Accuracy on} \\\cline{4-6}
 & & & $D_u$ & $D_r$ & $D_t$ \\
 \midrule
 Pre-unlearn &  &  & $99.49$ & $99.29$ & $89.94$ \\\hline
Retrain$@0.3$ & $88.81$ & $0.53$ & $89.06$ & $98.79$ & $87.29$ \\ 
Adv. Retr.$@0.3$ & $88.75$ & $0.44$ & $91.09$ & $99.63$ & $89.43$ \\
Fine-tune$@0.3$ & $89.14$ & $0.36$ & $91.68$ & $99.84$ & $90.50$ \\
Relabel$@0.05$ & $38.01$ & $9.61$ & $42.77$ & $42.75$ & $35.64$ \\
Relab.+FT$@0.35$ & $89.21$ & $0.44$ & $92.87$ & $99.75$ & $90.40$ \\
GA$@0.05$ & $42.57$ & $8.68$ & $44.26$ & $43.92$ & $40.83$ \\
GA+FT$@0.3$ & $89.35$ & $0.42$ & $90.26$ & $98.06$ & $89.45$ \\
Fisher$@0.35$ & $90.33$ & $0.35$ & $96.78$ & $96.44$ & $89.18$ \\
Hessian$@0.3$ & $89.35$ & $0.42$ & $99.24$ & $99.08$ & $89.98$ \\
Cert.-Hess.$@0.35$ & $88.45$ & $0.47$ & $99.75$ & $99.78$ & $90.72$ \\
\bottomrule
\end{tabular}
	\label{tab:SVHN}
 \vspace{-1mm}
\end{table}

\begin{table}[!ht]\scriptsize
	\centering
	\caption{Auditing results of our method on SkinCancer across different unlearning approaches.}
\begin{tabular} {cccccc} 
\toprule
 \multirow{2}*{\makecell[c]{Unlearning\\Approaches}} & \multirow{2}*{\makecell[c]{Agree. on $D_u$\\(Base: $91.62$)}} & \multirow{2}*{\makecell[c]{KL on $D_u$\\(Base: $0.44$)}} & \multicolumn{3}{c}{Accuracy on} \\\cline{4-6}
 & & & $D_u$ & $D_r$ & $D_t$ \\
 \midrule
 Pre-unlearn &  &  & $99.74$ & $99.77$ & $91.70$ \\\hline
Retrain$@0.3$ & $93.23$ & $0.30$ & $90.99$ & $99.92$ & $90.70$ \\ 
Adv. Retr.$@0.3$ & $92.71$ & $0.28$ & $91.10$ & $99.97$ & $90.80$ \\
Fine-tune$@0.3$ & $91.86$ & $0.36$ & $92.50$ & $99.91$ & $90.80$ \\
Relabel$@0.05$ & $46.02$ & $5.38$ & $55.08$ & $55.84$ & $47.40$ \\
Relab.+FT$@0.35$ & $91.65$ & $0.40$ & $90.63$ & $99.75$ & $90.60$ \\
GA$@0.05$ & $51.95$ & $5.11$ & $50.08$ & $50.61$ & $51.30$ \\
GA+FT$@0.3$ & $92.35$ & $0.32$ & $91.57$ & $99.90$ & $91.50$ \\
Fisher$@0.3$ & $90.42$ & $0.49$ & $99.64$ & $99.71$ & $91.60$ \\
Hessian$@0.3$ & $90.32$ & $0.49$ & $99.74$ & $99.70$ & $91.30$ \\
Cert.-Hess.$@0.3$ & $89.28$ & $0.55$ & $99.79$ & $99.97$ & $91.00$ \\
\bottomrule
\end{tabular}
	\label{tab:SkinCancer}
 \vspace{-2mm}
\end{table}

\vspace{1mm}
\noindent\textbf{Results of Our Auditing Framework.} The results are presented in Tables~\ref{tab:CIFAR10}, \ref{tab:SVHN}, \ref{tab:SkinCancer}, \ref{tab:BBCNews}, \ref{tab:AGNews}, and \ref{tab:IMDB}. According to our proposed auditing framework, unlearning approaches based on retraining or fine-tuning, such as Retrain, Adversarial Retrain, Fine-tune, Relabel with Fine-tune, and Gradient Ascent with Fine-tune, successfully achieve the unlearning objective. This is evidenced by the fact that the agreement and KL divergence between the corresponding unlearned models and the verification model are nearly identical to the base agreement and base KL divergence, which serve as reference benchmarks for ideal unlearning behavior. For example, in Table~\ref{tab:CIFAR10}, the five retraining/fine-tuning-based methods exhibit behavior distinct from the pre-unlearning model, especially on the unlearning dataset $D_u$. Specifically, their accuracies on $D_u$ are $69.15$ for Retrain, $73.87$ for Adversarial Retrain, $76.13$ for Fine-tune, $73.67$ for Relabel with Fine-tune, and $73.40$ for Gradient Ascent with Fine-tune, significantly lower than the pre-unlearning model's accuracy of $95.92$ on the same data.
Following the three-step main auditing protocol, the final agreement rates between the unlearned model $M_u$ and the verification model $M_v$ for these methods all exceed the base agreement threshold of $70.30$. 
Specifically, the agreement rates are $71.23$ for Retrain, $71.10$ for Adversarial Retrain, $70.53$ for Fine-tune, $70.94$ for Relabel with Fine-tune, and $70.33$ for Gradient Ascent with Fine-tune. Therefore, these methods are deemed to have achieved successful unlearning.

This auditing outcome can be further attributed to the nature of these methods. By either retraining the model from scratch or performing targeted fine-tuning on the remaining data, they effectively eliminate the influence of the unlearning data. Thus, the decision boundaries of the unlearned models align closely with those of the verification model.

We pay particular attention to the Fine-tune approach, which directly fine-tunes the model on the remaining data without explicitly interacting with or removing the unlearning data. At first glance, this method may appear inadequate from an algorithmic standpoint, as it continues training a model that initially retains knowledge of the unlearning data. However, our auditing results suggest that this approach can indeed achieve the unlearning objective.
The underlying reason lies in the phenomenon of catastrophic forgetting, where continued training on new data causes a model to lose previously acquired knowledge. By exclusively fine-tuning on the remaining data, this forgetting effect is concentrated on the unlearning data, erasing its influence from the model. 

Another approach that warrants close attention is Adversarial Retrain. Although this method is described as ``adversarial'' in \cite{Zhang24ICML}, our auditing results suggest that it effectively achieves the unlearning objective. The key mechanism is that it intentionally avoids using the unlearning data during retraining. Instead, it replaces each unlearning sample with a feature-similar sample selected from the remaining dataset.
We argue that this substitution strategy constitutes a valid unlearning mechanism. First, it strictly adheres to the unlearning requirement, as the actual unlearning data are excluded from the retraining process. Second, the use of feature-similar data does not violate the privacy or intent of data revocation. These substitute samples are drawn from the remaining, non-revoked data, and therefore do not reintroduce any personal information that the data owner has requested to remove. 

\begin{table}[!ht]\scriptsize
\vspace{-1mm}
	\centering
	\caption{Auditing results of our method on BBCNews across different unlearning approaches.}
\begin{tabular} {cccccc} 
\toprule
 \multirow{2}*{\makecell[c]{Unlearning\\Approaches}} & \multirow{2}*{\makecell[c]{Agree. on $D_u$\\(Base: $89.05$)}} & \multirow{2}*{\makecell[c]{KL on $D_u$\\(Base: $0.16$)}} & \multicolumn{3}{c}{Accuracy on} \\\cline{4-6}
 & & & $D_u$ & $D_r$ & $D_t$ \\
 \midrule
 Pre-unlearn &  &  & $99.40$ & $99.34$ & $90.03$ \\\hline
Retrain$@0.4$ & $92.62$ & $0.15$ & $89.90$ & $99.47$ & $89.88$ \\ 
Adv. Retr.$@0.4$ & $89.98$ & $0.25$ & $88.98$ & $99.46$ & $88.78$ \\
Fine-tune$@0.4$ & $89.92$ & $0.25$ & $90.15$ & $98.28$ & $88.80$ \\
Relabel$@0.05$ & $46.82$ & $0.71$ & $64.45$ & $75.46$ & $65.32$ \\
Relab.+FT$@0.4$ & $91.10$ & $0.22$ & $91.42$ & $99.47$ & $89.56$ \\
GA$@0.05$ & $51.92$ & $1.26$ & $64.62$ & $67.89$ & $62.12$ \\
GA+FT$@0.4$ & $91.37$ & $0.23$ & $91.20$ & $99.48$ & $89.06$ \\
Fisher$@0.4$ & $89.58$ & $0.33$ & $99.40$ & $99.36$ & $90.52$ \\
Hessian$@0.2$ & $73.85$ & $0.79$ & $82.40$ & $83.38$ & $77.92$ \\
Cert.-Hess.$@0.4$ & $88.88$ & $0.35$ & $99.40$ & $99.36$ & $89.96$ \\
\bottomrule
\end{tabular}
	\label{tab:BBCNews}
 \vspace{-1mm}
\end{table}

\begin{table}[!ht]\scriptsize
\vspace{-1mm}
	\centering
	\caption{Auditing results of our method on AGNews across different unlearning approaches.}
\begin{tabular} {cccccc} 
\toprule
 \multirow{2}*{\makecell[c]{Unlearning\\Approaches}} & \multirow{2}*{\makecell[c]{Agree. on $D_u$\\(Base: $88.16$)}} & \multirow{2}*{\makecell[c]{KL on $D_u$\\(Base: $0.47$)}} & \multicolumn{3}{c}{Accuracy on} \\\cline{4-6}
 & & & $D_u$ & $D_r$ & $D_t$ \\
 \midrule
Pre-unlearn &  &  & $99.89$ & $99.85$ & $89.90$ \\\hline
Retrain$@0.3$ & $90.88$ & $0.29$ & $89.44$ & $99.92$ & $89.6$ \\ 
Adv. Retr.$@0.3$ & $90.86$ & $0.28$ & $89.68$ & $99.95$ & $89.98$ \\
Fine-tune$@0.2$ & $89.17$ & $0.35$ & $91.06$ & $99.95$ & $89.85$ \\
Relabel$@0.05$ & $52.54$ & $2.34$ & $63.10$ & $67.09$ & $58.54$ \\
Relab.+FT$@0.3$ & $90.87$ & $0.30$ & $91.42$ & $99.95$ & $89.74$ \\
GA$@0.05$ & $60.11$ & $1.53$ & $67.53$ & $68.24$ & $64.28$ \\
GA+FT$@0.3$ & $90.79$ & $0.30$ & $91.10$ & $99.95$ & $89.55$ \\
Fisher$@0.35$ & $89.10$ & $0.46$ & $99.79$ & $99.78$ & $90.44$ \\
Hessian$@0.3$ & $82.99$ & $0.88$ & $88.45$ & $88.91$ & $82.70$ \\
Cert.-Hess.$@0.3$ & $88.06$ & $0.52$ & $99.88$ & $99.89$ & $90.16$ \\
\bottomrule
\end{tabular}
	\label{tab:AGNews}
 \vspace{-1mm}
\end{table}

\begin{table}[!ht]\scriptsize
	\centering
	\caption{Auditing results of our method on IMDB across different unlearning approaches.}
\begin{tabular} {cccccc} 
\toprule
 \multirow{2}*{\makecell[c]{Unlearning\\Approaches}} & \multirow{2}*{\makecell[c]{Agree. on $D_u$\\(Base: $79.03$)}} & \multirow{2}*{\makecell[c]{KL on $D_u$\\(Base: $0.73$)}} & \multicolumn{3}{c}{Accuracy on} \\\cline{4-6}
 & & & $D_u$ & $D_r$ & $D_t$ \\
 \midrule
Pre-unlearn &  &  & $99.95$ & $99.99$ & $79.90$ \\\hline
Retrain$@0.35$ & $83.60$ & $0.30$ & $79.90$ & $100$ & $80.34$ \\ 
Adv. Retr.$@0.35$ & $81.58$ & $0.40$ & $79.33$ & $99.98$ & $78.46$ \\
Fine-tune$@0.4$ & $79.70$ & $0.42$ & $81.10$ & $100$ & $77.52$ \\
Relabel$@0.05$ & $52.50$ & $0.16$ & $60.72$ & $72.95$ & $60.92$ \\
Relab.+FT$@0.4$ & $80.97$ & $0.42$ & $80.95$ & $99.96$ & $78.12$ \\
GA$@0.15$ & $64.72$ & $0.64$ & $72.15$ & $72.28$ & $65.04$ \\
GA+FT$@0.4$ & $80.00$ & $0.44$ & $80.92$ & $99.99$ & $78.10$ \\
Fisher$@0.4$ & $80.23$ & $0.55$ & $99.95$ & $99.98$ & $80.40$ \\
Hessian$@0.3$ & $71.05$ & $0.65$ & $77.90$ & $77.01$ & $71.26$ \\
Cert.-Hess.$@0.4$ & $79.80$ & $0.53$ & $99.95$ & $99.99$ & $80.40$ \\
\bottomrule
\end{tabular}
	\label{tab:IMDB}
 \vspace{-2mm}
\end{table}

In contrast, de-optimization-based approaches without subsequent fine-tuning, such as Relabel and Gradient Ascent, fail to achieve the true objective of machine unlearning and instead tend to degrade overall model utility. These methods attempt to ``unlearn'' by deliberately corrupting the model's performance on the unlearning data, typically through perturbing the decision boundaries or reversing the gradients associated with those samples. 
The fundamental flaw in these approaches is their reliance on performance deterioration as a proxy for forgetting. However, poor performance on the unlearning data does not guarantee that the model no longer retains knowledge of those data. In fact, a well-generalized model can perform well even on unseen data that shares the same distribution. 
Our auditing results support this argument. The models produced by these de-optimization-based approaches exhibit low agreement with verification models. For instance, in Table~\ref{tab:CIFAR10}, the agreement rates for Relabel and Gradient Ascent are only $30.90$ and $35.17$, respectively, substantially below the base agreement threshold of $70.30$. 

The Fisher, Hessian, and Certified Hessian Forgetting methods also fail to achieve the true objective of unlearning. Fisher Forgetting and Certified Hessian Forgetting can be rejected at an early stage by the preliminary check, as the resulting unlearned models exhibit behavior that is nearly indistinguishable from that of the pre-unlearning model.
For instance, in Table~\ref{tab:CIFAR10}, the accuracy of the Fisher-based unlearned model on the unlearning set $D_u$, the remaining set $D_r$, and the test set $D_t$ is $95.87$, $95.45$, and $71.62$, respectively, closely matching the pre-unlearning model's accuracy of $95.92$, $95.61$, and $71.79$.

In contrast, Hessian Forgetting can be readily filtered out by the preliminary check on the image datasets (Tables \ref{tab:CIFAR10}, \ref{tab:SVHN}, and \ref{tab:SkinCancer}), but it passes the preliminary check on the text datasets (Tables \ref{tab:BBCNews}, \ref{tab:AGNews}, and \ref{tab:IMDB}) as its accuracy on the unlearning data is substantially reduced. However, this reduction in accuracy is accompanied by a similarly significant drop in agreement with the verification model, and it is therefore still easily identified as a failed unlearning method by the subsequent auditing steps.
The underlying reason lies in the design of these approaches. They attempt to remove the influence of the unlearning data through local parameter perturbations derived from second-order approximations of the loss landscape, such as the Fisher information matrix or the Hessian matrix. However, such perturbations are often too limited to induce the behavioral shift required for true unlearning. In some cases, they merely preserve the original model behavior, causing the resulting model to remain highly similar to the pre-unlearning model. In other cases, they degrade model accuracy without genuinely removing the influence of the unlearning data.

We pay particular attention to the Certified Hessian unlearning method, as it fails to achieve effective unlearning despite being formally certified. Specifically, the certification refers to satisfying the $(\epsilon, \delta)$-unlearning criterion, which ensures that the unlearned model is, with high probability, close in parameter space to a model retrained from scratch. However, proximity in parameter space alone does not guarantee successful unlearning. While the two models may share similar characteristics, such as similar gradients or loss values, this does not imply that they encode identical knowledge, particularly on the unlearning data. 
More importantly, the $(\epsilon, \delta)$-certification does not explicitly verify whether the unlearned model has actually forgotten the targeted data. It merely quantifies how indistinguishable the unlearned model is from a reference model under a distributional assumption. Thus, an unlearned model may still retain detailed information about the unlearning data while satisfying the certification criteria. 

\begin{figure}[ht]
\vspace{-1mm}
\centering
    \includegraphics[scale=0.22]{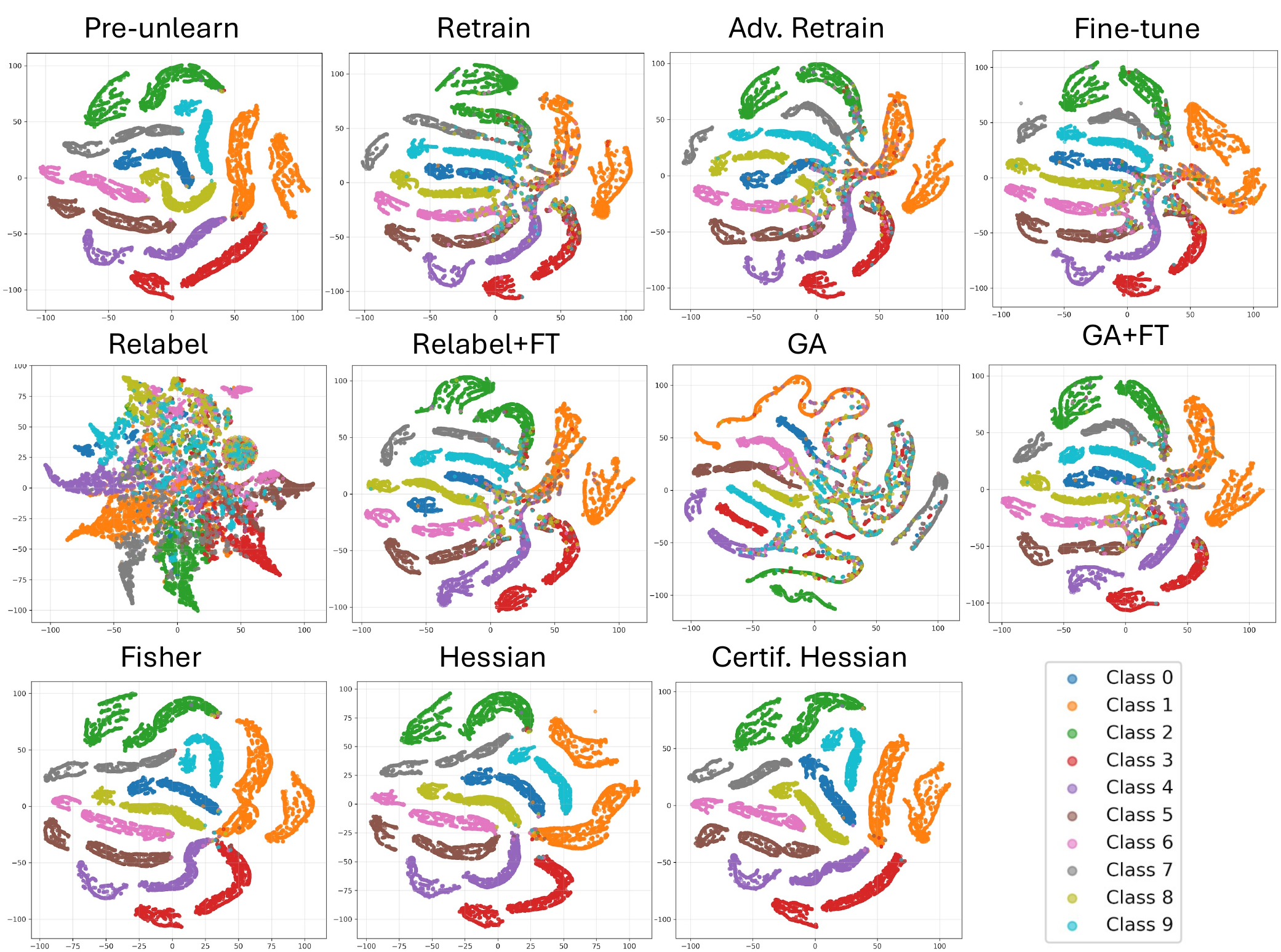}
	\caption{t-SNE visualization of classification output vectors on $D_u$ for various unlearned models on SVHN.}
   \vspace{-0mm}
	\label{fig:SVHN}
\end{figure}

\begin{figure}[ht]
\vspace{-1mm}
\centering
    \includegraphics[scale=0.22]{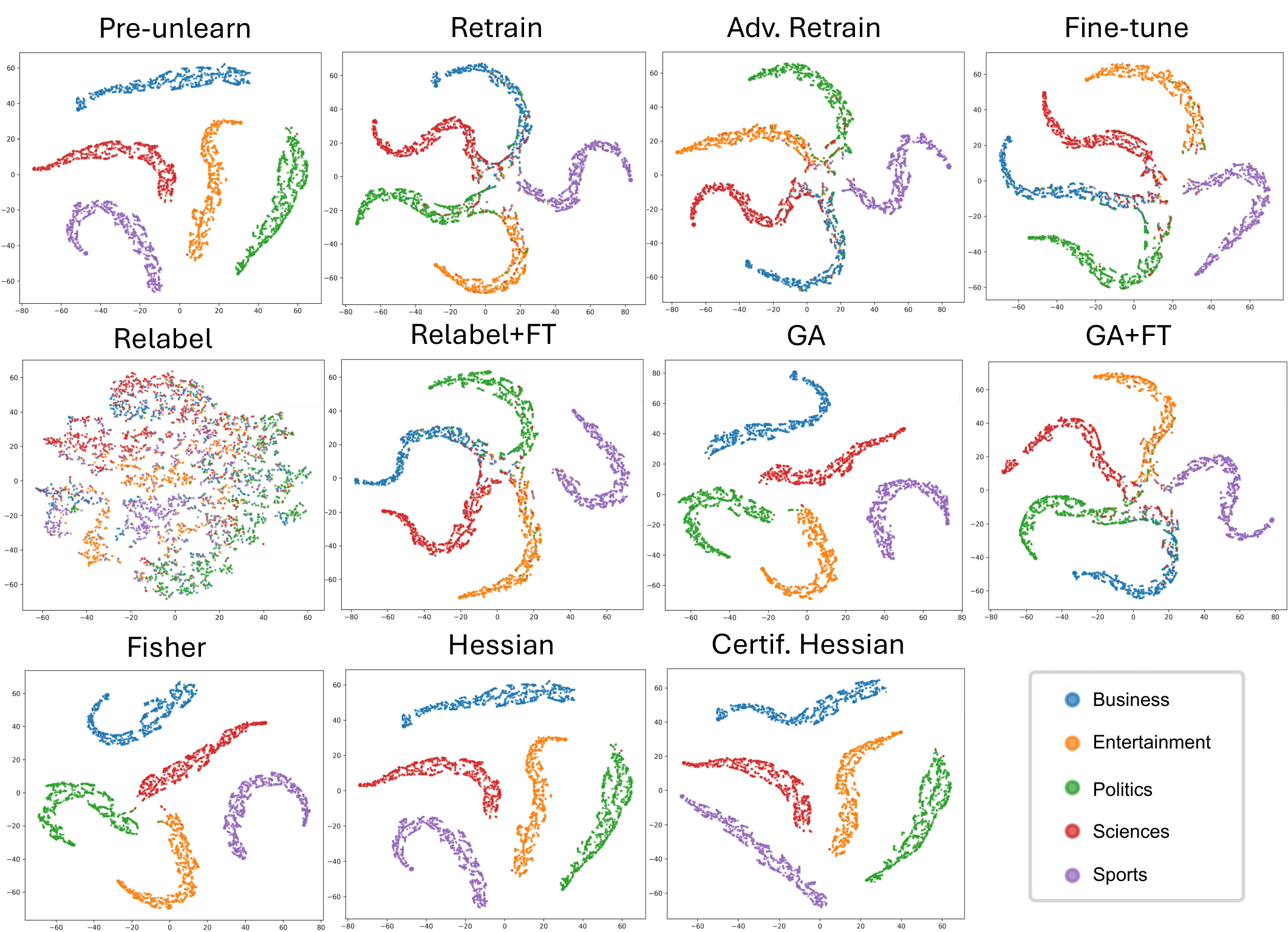}
	\caption{t-SNE visualization of classification output vectors on $D_u$ for various unlearned models on BBCNews.}
    \vspace{-0mm}
	\label{fig:BBCNews}
\end{figure}

Our auditing conclusions are further supported by the visualizations in Figures~\ref{fig:SVHN} and~\ref{fig:BBCNews}, where we employ t-SNE to project the output vectors of various unlearned models on $D_u$ for SVHN and BBCNews. As shown, models produced by Adversarial Retrain, Fine-tune, Relabel with Fine-tune, and Gradient Ascent with Fine-tune exhibit output distributions that closely resemble those of the retrained model, suggesting successful unlearning. In contrast, the Fisher, Hessian, and Certified Hessian methods yield visualizations that are remarkably similar to the pre-unlearning model, indicating that the influence of the unlearning data has likely not been removed. Moreover, models produced by Relabeling and Gradient Ascent display highly scattered patterns, suggesting degraded utility and unstable behavior. Although these visual observations do not constitute a formal audit of unlearning outcomes, they provide additional qualitative support for our quantitative auditing results. 

\vspace{1mm}
\noindent\textbf{Results of the Model Accuracy-based Auditing Approach.} The straightforward auditing method, Model Accuracy on the Unlearning Data, can produce results that are generally consistent with our findings. 
We observe from the tables that fine-tuning-based methods: Retrain, Adversarial Retrain, Fine-tuning, Relabeling with Fine-tuning, and Gradient Ascent with Fine-tuning consistently yield unlearned models whose accuracy on $D_u$ aligns more closely with that on $D_t$ than on $D_r$. This supports the conclusion that these methods successfully achieves the unlearning objective.
In contrast, for the remaining methods: Relabeling, Gradient Ascent, Fisher, Hessian, and Certified Hessian, the unlearned models tend to show accuracy on $D_u$ that is more similar to $D_r$ than to $D_t$. This indicates that these methods fail to unlearn $D_u$. 

Although this accuracy-based method can provide useful insights, it is not always reliable. In certain cases, it fails to offer a clear distinction. For example, in Table~\ref{tab:SkinCancer}, the Gradient Ascent method results in a model whose accuracy on the unlearning, remaining, and test data is nearly identical: $50.08$, $50.61$, and $51.30$, respectively. In such cases, the method cannot determine if successful unlearning has occurred. Similar ambiguity appears in other settings as well, such as with the Gradient Ascent method in Tables~\ref{tab:SkinCancerDu=0.3}. 

\begin{table*}[!ht]\scriptsize
\vspace{-1mm}
	\centering
	\caption{Backdoor-based auditing results across six datasets, where accuracy on $D_u$ denotes the proportion of samples correctly classified as backdoored labels.}
\begin{tabular} {cccc|ccc|ccc|ccc|ccc|ccc} 
\toprule
 \multirow{2}*{\makecell[c]{Unlearning\\Approaches}} & \multicolumn{3}{c|}{CIFAR10} & \multicolumn{3}{c|}{SVHN} & \multicolumn{3}{c|}{SkinCancer} & \multicolumn{3}{c|}{BBCNews} & \multicolumn{3}{c|}{AGNews} & \multicolumn{3}{c}{IMDB}\\\cline{2-19}
 & $D_u$ & $D_r$ & $D_t$ & $D_u$ & $D_r$ & $D_t$ & $D_u$ & $D_r$ & $D_t$ & $D_u$ & $D_r$ & $D_t$ & $D_u$ & $D_r$ & $D_t$ & $D_u$ & $D_r$ & $D_t$ \\
 \midrule
Pre-unlearn &  $100$ & $99.2$ & $73.2$ & $100$ & $97.5$ & $90.5$ & $99.0$ & $99.9$ & $91.3$ & $100$ & $99.4$ & $89.5$ & $100$ & $99.8$ & $89.8$ & $100$ & $99.9$ & $79.9$ \\\hline
Retrain & $6.7$ & $98.9$ & $71.1$ & $6.2$ & $97.0$ & $89.2$ & $56.8$ & $99.9$ & $91.8$ & $21.0$ & $99.4$ & $89.7$ & $24.9$ & $99.9$ & $89.9$ & $48.4$ & $100$ & $79.3$ \\ 
Adv. Retr. & $9.0$ & $99.2$ & $71.6$ & $6.4$ & $97.9$ & $90.5$ & $51.2$ & $99.8$ & $90.9$ & $20.7$ & $99.4$ & $88.8$ & $25.0$ & $99.9$ & $89.4$ & $52.5$ & $99.9$ & $78.2$ \\
Fine-tune & $10.9$ & $99.9$ & $72.9$ & $6.4$ & $99.8$ & $90.5$ & $51.3$ & $99.8$ & $91.1$ & $20.6$ & $99.4$ & $88.0$ & $25.8$ & $99.9$ & $89.6$ & $49.7$ & $100$ & $77.3$ \\
Relabel & $3.4$ & $54.5$ & $44.5$ & $12.6$ & $55.4$ & $51.9$ & $79.5$ & $79.1$ & $76.5$ & $15.1$ & $64.3$ & $58.6$ & $18.0$ & $69.0$ & $63.5$ & $100$ & $68.0$ & $62.2$ \\
Relab.+FT & $15.0$ & $99.9$ & $73.0$ & $6.9$ & $99.8$ & $90.4$ & $57.0$ & $100$ & $91.7$ & $20.6$ & $99.4$ & $88.1$ & $24.9$ & $99.9$ & $89.6$ & $45.8$ & $99.9$ & $76.7$ \\
GA & $13.0$ & $52.9$ & $46.4$ & $7.6$ & $56.7$ & $57.4$ & $10.0$ & $52.0$ & $50.0$ & $99.9$ & $77.7$ & $73.7$ & $99.9$ & $77.7$ & $73.7$ & $99.9$ & $65.9$ & $62.6$ \\
GA+FT & $13.1$ & $99.9$ & $72.8$ & $6.7$ & $99.8$ & $90.5$ & $61.9$ & $99.9$ & $91.5$ & $20.8$ & $99.4$ & $87.9$ & $24.1$ & $99.9$ & $89.9$ & $49.2$ & $100$ & $77.6$ \\
Fisher & $100$ & $99.1$ & $73.1$ & $100$ & $97.4$ & $90.5$ & $99.1$ & $99.9$ & $91.3$ & $100$ & $99.4$ & $90.3$ & $100$ & $99.8$ & $89.9$ & $100$ & $99.9$ & $80.8$ \\
Hessian & $100$ & $99.1$ & $73.1$ & $100$ & $97.4$ & $90.4$ & $98.9$ & $99.9$ & $91.3$ & $100$ & $62.5$ & $59.1$ & $99.7$ & $65.4$ & $61.3$ & $100$ & $60.0$ & $58.2$ \\
Cert.-Hess. & $100$ & $99.5$ & $73.8$ & $100$ & $98.0$ & $91.0$ & $99.0$ & $100$ & $91.0$ & $100$ & $99.4$ & $90.3$ & $100$ & $99.8$ & $90.0$ & $100$ & $99.9$ & $80.7$ \\
\bottomrule
\end{tabular}
	\label{tab:Backdoor}
 \vspace{-1mm}
\end{table*}

\vspace{1mm}
\noindent\textbf{Results of the Backdoor-based Auditing Approach.} The results are presented in Tables~\ref{tab:Backdoor}, where accuracy on $D_u$ denotes the proportion of samples correctly classified as backdoored labels. These results serve as strong empirical support for the conclusions drawn from our auditing method. In this setting, we apply a backdoor to all unlearning data, such that before unlearning, these data points are highly likely to be classified into their backdoored labels. After unlearning, a successful unlearning process should remove the influence of these poisoned associations, restoring the classification of unlearning data to their original, non-backdoored labels.

Consistent with this intuition, the results show that before unlearning, all the unlearning data are reliably classified into the backdoored labels across datasets. After unlearning, five methods: Retrain, Adversarial Retrain, Fine-tuning, Relabeling with Fine-tuning, and Gradient Ascent with Fine-tuning achieve extremely low accuracy on the backdoored labels for the unlearning data, while maintaining high accuracy on the remaining data and the test set. This strongly aligns with our conclusion: these fine-tuning-based methods are effective at forgetting the unlearning data. 

In contrast, Fisher, Hessian, and Certified Hessian Forgetting fail to erase the backdoor information. These methods still exhibit high accuracy in classifying the unlearning data to their backdoored labels after unlearning, indicating that the corresponding models retain knowledge of the unlearning data. 

For the Relabel and Gradient Ascent methods, although they reduce accuracy on backdoored labels after unlearning, they also significantly degrade performance on the remaining and test data. This suggests that the observed reduction in backdoor accuracy is not due to successful unlearning, but rather a side effect of harming overall model utility.


\begin{table*}[!ht]\scriptsize
\vspace{-1mm}
	\centering
	\caption{Membership inference-based auditing results across six datasets, where values denote the proportion of samples inferred as training members.}
\begin{tabular} {cccc|ccc|ccc|ccc|ccc|ccc} 
\toprule
 \multirow{2}*{\makecell[c]{Unlearning\\Approaches}} & \multicolumn{3}{c|}{CIFAR10} & \multicolumn{3}{c|}{SVHN} & \multicolumn{3}{c|}{SkinCancer} & \multicolumn{3}{c|}{BBCNews} & \multicolumn{3}{c|}{AGNews} & \multicolumn{3}{c}{IMDB}\\\cline{2-19}
 & $D_u$ & $D_r$ & $D_t$ & $D_u$ & $D_r$ & $D_t$ & $D_u$ & $D_r$ & $D_t$ & $D_u$ & $D_r$ & $D_t$ & $D_u$ & $D_r$ & $D_t$ & $D_u$ & $D_r$ & $D_t$ \\
 \midrule
Pre-unlearn & $88.6$ & $88.7$ & $67.9$ & $75.4$ & $75.5$ & $64.5$ & $99.8$ & $99.9$ & $92.4$ & $97.8$ & $99.9$ & $83.2$ & $99.1$ & $99.0$ & $84.8$ & $98.6$ & $98.8$ & $70.6$ \\\hline
Retrain & $65.4$ & $89.1$ & $65.1$ & $68.9$ & $75.0$ & $63.9$ & $92.0$ & $99.8$ & $92.1$ & $79.2$ & $98.3$ & $78.9$ & $81.0$ & $98.8$ & $81.3$ & $66.2$ & $98.9$ & $68.2$ \\ 
Adv. Retr. & $66.7$ & $88.8$ & $64.1$ & $69.3$ & $75.3$ & $64.2$ & $89.9$ & $99.7$ & $89.3$ & $77.0$ & $97.4$ & $76.7$ & $81.2$ & $98.4$ & $81.3$ & $62.9$ & $99.0$ & $63.6$ \\
Fine-tune & $66.2$ & $89.2$ & $62.9$ & $69.8$ & $76.5$ & $64.3$ & $89.8$ & $99.2$ & $88.1$ & $80.9$ & $96.7$ & $77.4$ & $83.2$ & $99.0$ & $82.3$ & $58.0$ & $97.6$ & $55.6$ \\
Relabel & $91.7$ & $92.2$ & $85.9$ & $86.1$ & $87.2$ & $81.2$ & $100$ & $99.9$ & $99.6$ & $96.5$ & $98.3$ & $94.7$ & $99.6$ & $99.7$ & $98.1$ & $92.3$ & $95.7$ & $82.0$ \\
Relab.+FT & $75.2$ & $90.8$ & $74.5$ & $71.6$ & $76.1$ & $67.5$ & $87.7$ & $98.0$ & $87.0$ & $77.2$ & $98.2$ & $74.6$ & $81.8$ & $98.8$ & $80.5$ & $62.9$ & $97.4$ & $61.6$ \\
GA & $84.5$ & $85.2$ & $83.5$ & $74.0$ & $74.5$ & $68.7$ & $99.7$ & $99.9$ & $99.0$ & $97.8$ & $99.0$ & $89.1$ & $99.1$ & $99.3$ & $92.1$ & $69.7$ & $70.2$ & $61.5$ \\
GA+FT & $48.3$ & $71.7$ & $46.8$ & $74.2$ & $74.2$ & $66.9$ & $86.8$ & $97.6$ & $87.5$ & $82.2$ & $98.5$ & $81.1$ & $82.9$ & $99.0$ & $81.4$ & $64.9$ & $99.0$ & $63.4$ \\
Fisher & $87.3$ & $87.1$ & $74.1$ & $74.5$ & $74.8$ & $66.7$ & $99.6$ & $99.6$ & $92.4$ & $98.2$ & $98.4$ & $86.1$ & $98.4$ & $98.5$ & $86.2$ & $98.2$ & $98.4$ & $69.7$ \\
Hessian & $86.9$ & $87.3$ & $75.1$ & $75.7$ & $76.2$ & $65.4$ & $99.5$ & $99.6$ & $93.0$ & $98.9$ & $99.0$ & $78.5$ & $99.6$ & $99.7$ & $98.8$ & $100$ & $100$ & $99.0$ \\
Cert.-Hess. & $88.9$ & $88.7$ & $65.3$ & $75.1$ & $75.7$ & $64.3$ & $99.7$ & $99.9$ & $93.5$ & $98.5$ & $98.6$ & $84.2$ & $98.9$ & $98.9$ & $84.8$ & $99.9$ & $98.3$ & $84.8$ \\
\bottomrule
\end{tabular}
	\label{tab:Membership}
 \vspace{-1mm}
\end{table*}

\vspace{1mm}
\noindent\textbf{Results of the Membership Inference-based Auditing Approach.} Membership inference-based method does not yield reliable results due to a high false positive rate, as shown in Tables~\ref{tab:Membership}. For example, after unlearning via the Retrain method, all unlearning data should ideally be inferred as non-members. Yet, in practice, most of them are still identified as members, indicating the method’s limited precision. 

However, comparing the membership inference results across different unlearning methods still provides useful insights. The proportion of unlearning data that are incorrectly inferred as members is significantly higher for the Relabel, Gradient Ascent, Fisher, Hessian, and Certified Hessian methods than for Retrain. This suggests that models unlearned using these methods retain more residual information about the unlearning data than retrained models. In other words, these methods fail to fully forget the unlearning data. 

\vspace{1mm}
\noindent\textbf{Summary.} Our auditing framework shows that retraining and fine-tuning-based approaches 
can successfully remove the influence of the unlearning data. In contrast, the Fisher, Hessian, and Certified Hessian methods consistently fail to achieve effective unlearning. The de-optimization-based approaches, Relabel and Gradient Ascent, merely degrade model performance without truly forgetting the targeted knowledge. 

\subsection{Robustness Study against Fake Unlearning}
\vspace{0mm}
To demonstrate the robustness of our auditing framework, we introduce an adversarial prover who attempts to bypass verification by replacing the true unlearning data with gradient-similar data. Specifically, the adversarial prover constructs a fake unlearning set $D_u'$ by selecting samples from the remaining data $D_r$ such that each sample in $D_u'$ exhibits gradient-level similarity to a corresponding sample in the original unlearning set $D_u$.

To construct $D_u'$, the adversarial prover first computes the gradient of the loss with respect to model parameters for each sample in $D_r$. Then, for each sample $x$ in $D_u$, the prover calculates its gradient and identifies the most similar sample $x'$ from $D_r$ based on cosine similarity between their gradients. The selected $x'$ is then added to $D_u'$.
To ensure uniqueness, if the most similar candidate has already been used, the prover selects the next most similar unused sample. This process is repeated iteratively until all samples in $D_u$ are matched with distinct and gradient-similar counterparts from $D_r$. 

During unlearning, the adversarial prover unlearns the fake unlearning set $D_u'$ instead of the true unlearning set $D_u$, aiming to retain the knowledge of $D_u$ while evading our auditing. To give the adversary the best chance of success, we allow the use of the retraining-from-scratch approach, which has been shown to be the most likely to pass the verification based on the above results. We refer to this method as \textbf{Fake Retraining}. 

Another fake unlearning approach is from \cite{Zhang24ICML}. This method forges a plausible ``proof of retraining'' without actually performing full retraining on the remaining data $D_r$. If the current training data do not contain any unlearning samples, the model is updated using a single sample randomly selected from $D_r$. However, if the training data include unlearning samples, they are replaced with their nearest neighbors from $D_r$, and the model is updated using these substitutes. 
We refer to this method as \textbf{Forging-based Retraining}.
The results are presented in Tables~\ref{tab:CIFAR10Attack}, \ref{tab:SVHNAttack}, \ref{tab:SkinCancerAttack}, \ref{tab:BBCNewsAttack}, \ref{tab:AGNewsAttack}, and \ref{tab:IMDBAttack}.

\begin{table}[!ht]\scriptsize
\vspace{-0mm}
	\centering
	\caption{Auditing results of our method on CIFAR10 under the two fake unlearning approaches.}
\begin{tabular} {cccccc} 
\toprule
 \multirow{2}*{\makecell[c]{Unlearning\\Approaches}} & \multirow{2}*{\makecell[c]{Agree. on $D_u$\\(Base: $70.30$)}} & \multirow{2}*{\makecell[c]{KL on $D_u$\\(Base: $1.56$)}} & \multicolumn{3}{c}{Accuracy on} \\\cline{4-6}
 & & & $D_u$ & $D_r$ & $D_t$ \\
 \midrule
Pre-unlearn &  &  & $95.92$ & $95.61$ & $71.79$ \\\hline
Retrain$@0.4$ & $71.23$ & $1.22$ & $69.15$ & $99.54$ & $69.95$ \\ 
Fake Retr.$@0.4$ & $67.33$ & $1.89$ & $99.67$ & $93.03$ & $69.85$ \\
Forge$@0.3$ & $65.67$ & $2.29$ & $75.20$ & $75.25$ & $67.89$ \\
\bottomrule
\end{tabular}
	\label{tab:CIFAR10Attack}
 \vspace{-0mm}
\end{table}

\begin{table}[!ht]\scriptsize
	\centering
	\caption{Auditing results of our method on SVHN under the two fake unlearning approaches.}
\begin{tabular} {cccccc} 
\toprule
 \multirow{2}*{\makecell[c]{Unlearning\\Approaches}} & \multirow{2}*{\makecell[c]{Agree. on $D_u$\\(Base: $88.23$)}} & \multirow{2}*{\makecell[c]{KL on $D_u$\\(Base: $0.49$)}} & \multicolumn{3}{c}{Accuracy on} \\\cline{4-6}
 & & & $D_u$ & $D_r$ & $D_t$ \\
 \midrule
 Pre-unlearn &  &  & $99.49$ & $99.29$ & $89.94$ \\\hline
Retrain$@0.35$ & $89.01$ & $0.53$ & $89.06$ & $98.79$ & $87.29$ \\ 
Fake Retr.$@0.35$ & $88.48$ & $0.46$ & $99.52$ & $97.75$ & $89.75$ \\
Forge$@0.3$ & $82.72$ & $0.51$ & $92.91$ & $92.60$ & $88.14$ \\
\bottomrule
\end{tabular}
	\label{tab:SVHNAttack}
 \vspace{-1mm}
\end{table}

\begin{table}[!ht]\scriptsize
	\centering
	\caption{Auditing results of our method on SkinCancer under the two fake unlearning approaches.}
\begin{tabular} {cccccc} 
\toprule
 \multirow{2}*{\makecell[c]{Unlearning\\Approaches}} & \multirow{2}*{\makecell[c]{Agree. on $D_u$\\(Base: $91.62$)}} & \multirow{2}*{\makecell[c]{KL on $D_u$\\(Base: $0.44$)}} & \multicolumn{3}{c}{Accuracy on} \\\cline{4-6}
 & & & $D_u$ & $D_r$ & $D_t$ \\
 \midrule
 Pre-unlearn &  &  & $99.74$ & $99.77$ & $91.70$ \\\hline
Retrain$@0.4$ & $93.23$ & $0.30$ & $90.99$ & $99.92$ & $90.70$ \\ 
Fake Retr.$@0.4$ & $90.42$ & $0.40$ & $99.69$ & $97.64$ & $90.60$ \\
Forge$@0.3$ & $84.57$ & $0.48$ & $93.13$ & $94.07$ & $91.40$ \\
\bottomrule
\end{tabular}
	\label{tab:SkinCancerAttack}
 \vspace{-1mm}
\end{table}

We observe that both fake unlearning approaches can be readily identified as failed unlearning. Specifically, the Fake Retraining approach can be directly detected by the preliminary check, as the unlearned model exhibits accuracies on the unlearning, remaining, and test sets that are consistently highly aligned with those of the pre-unlearning model. 

In contrast, the Forging-based Retraining approach attains model accuracies on the text datasets (Tables \ref{tab:BBCNewsAttack}, \ref{tab:AGNewsAttack}, and \ref{tab:IMDBAttack}) that are similarly close to those of the pre-unlearning model, and is therefore also directly detected by the preliminary check. On the image datasets (Tables \ref{tab:CIFAR10Attack}, \ref{tab:SVHNAttack}, and \ref{tab:SkinCancerAttack}), however, this approach yields lower accuracies than the pre-unlearning model. Nevertheless, this reduction in accuracy is accompanied by a corresponding decrease in agreement with the verification model, falling below the base agreement threshold and thus still indicating a failure of unlearning.

\begin{table}[!ht]\scriptsize
\vspace{-0mm}
	\centering
	\caption{Auditing results of our method on BBCNews under the two fake unlearning approaches.}
\begin{tabular} {cccccc} 
\toprule
 \multirow{2}*{\makecell[c]{Unlearning\\Approaches}} & \multirow{2}*{\makecell[c]{Agree. on $D_u$\\(Base: $89.05$)}} & \multirow{2}*{\makecell[c]{KL on $D_u$\\(Base: $0.16$)}} & \multicolumn{3}{c}{Accuracy on} \\\cline{4-6}
 & & & $D_u$ & $D_r$ & $D_t$ \\
 \midrule
 Pre-unlearn &  &  & $99.40$ & $99.34$ & $90.03$ \\\hline
Retrain$@0.4$ & $92.62$ & $0.15$ & $89.90$ & $99.47$ & $89.88$ \\ 
Fake Retr.$@0.4$ & $88.75$ & $0.36$ & $99.62$ & $97.19$ & $90.56$ \\
Forge$@0.4$ & $87.50$ & $0.34$ & $99.48$ & $99.34$ & $90.58$ \\
\bottomrule
\end{tabular}
	\label{tab:BBCNewsAttack}
 \vspace{-1mm}
\end{table}

\begin{table}[!ht]\scriptsize
	\centering
	\caption{Auditing results of our method on AGNews under the two fake unlearning approaches.}
\begin{tabular} {cccccc} 
\toprule
 \multirow{2}*{\makecell[c]{Unlearning\\Approaches}} & \multirow{2}*{\makecell[c]{Agree. on $D_u$\\(Base: $88.16$)}} & \multirow{2}*{\makecell[c]{KL on $D_u$\\(Base: $0.47$)}} & \multicolumn{3}{c}{Accuracy on} \\\cline{4-6}
 & & & $D_u$ & $D_r$ & $D_t$ \\
 \midrule
Pre-unlearn &  &  & $99.89$ & $99.85$ & $89.90$ \\\hline
Retrain$@0.45$ & $90.88$ & $0.29$ & $89.44$ & $99.92$ & $89.60$ \\ 
Fake Retr.$@0.4$ & $88.99$ & $0.48$ & $99.92$ & $98.06$ & $90.03$ \\
Forge$@0.3$ & $87.58$ & $0.52$ & $99.77$ & $99.75$ & $90.38$ \\
\bottomrule
\end{tabular}
	\label{tab:AGNewsAttack}
 \vspace{-1mm}
\end{table}

\begin{table}[!ht]\scriptsize
	\centering
	\caption{Auditing results of our method on IMDB under the two fake unlearning approaches.}
\begin{tabular} {cccccc} 
\toprule
 \multirow{2}*{\makecell[c]{Unlearning\\Approaches}} & \multirow{2}*{\makecell[c]{Agree. on $D_u$\\(Base: $79.03$)}} & \multirow{2}*{\makecell[c]{KL on $D_u$\\(Base: $0.73$)}} & \multicolumn{3}{c}{Accuracy on} \\\cline{4-6}
 & & & $D_u$ & $D_r$ & $D_t$ \\
 \midrule
Pre-unlearn &  &  & $99.95$ & $99.99$ & $79.90$ \\\hline
Retrain$@0.4$ & $83.60$ & $0.30$ & $79.90$ & $100$ & $80.34$ \\ 
Fake Retr.$@0.35$ & $78.53$ & $0.52$ & $99.98$ & $94.51$ & $78.68$ \\
Forge$@0.3$ & $76.08$ & $0.51$ & $100$ & $100$ & $82.36$ \\
\bottomrule
\end{tabular}
	\label{tab:IMDBAttack}
 \vspace{-1mm}
\end{table}

\begin{figure}[ht!]
\vspace{-0mm}
\centering
	\begin{minipage}[c]{1\textwidth}
    \subfigure[\small{SVHN}]{
    \centering\includegraphics[scale=0.21]{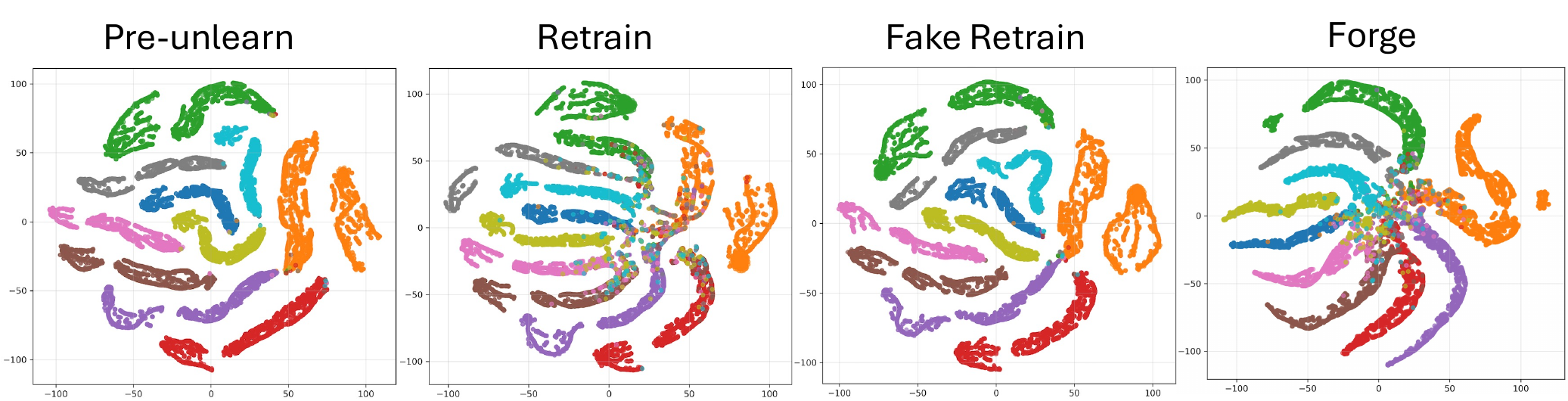}
			\label{fig:SVHNAttack}}\\[-3mm]
	\subfigure[\small{BBCNews}]{
    \centering\includegraphics[scale=0.205]{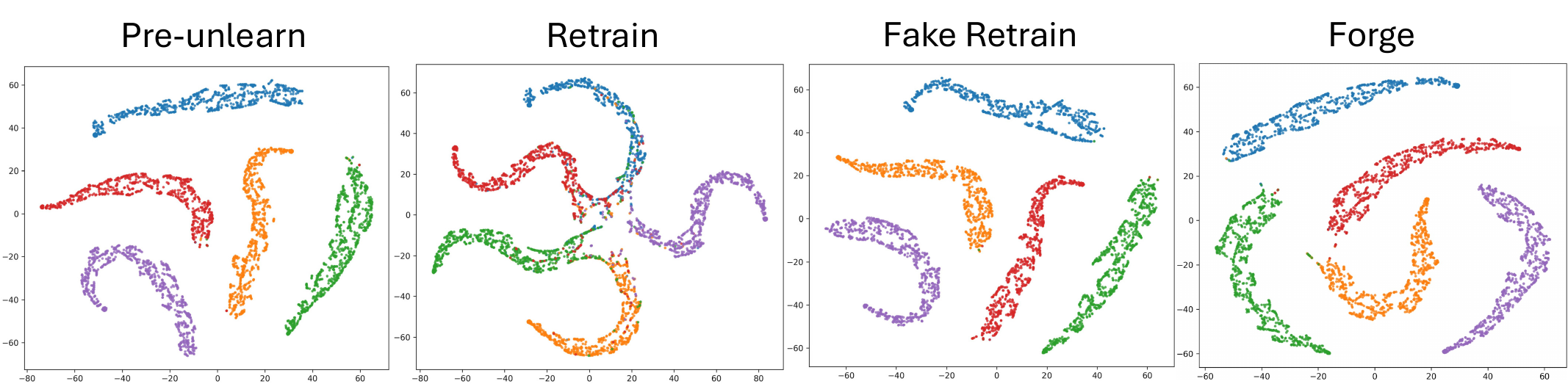}
			\label{fig:BBCNewsAttack}}
    \end{minipage}
	\caption{t-SNE visualization of classification output vectors on $D_u$ across SVHN and BBCNews.}
	\vspace{-2mm}
	\label{fig:Attack}
\end{figure}

The auditing conclusions can also be visually observed in Figure \ref{fig:Attack}. The t-SNE visualization shows that the output distribution of the Fake Retrained model closely resembles that of the pre-unlearning model. In contrast, the output distribution of the model produced by the Forging-based approach exhibits noticeable randomness, particularly in SVHN likely due to overall performance degradation.

\vspace{1mm}
\noindent\textbf{Summary.} Our auditing framework can easily identify fake unlearning approaches, especially those that superficially substitute data or forge proofs of retraining.

\vspace{-1mm}
\subsection{Other Study}
We also conducted adaptability studies, which evaluate the auditing framework across varying sizes and distributions of the unlearning data $D_u$. The detailed results are provided in the Appendix. 

\vspace{-1mm}
\section{Extension to Large Language Models}\label{sec:LLM}
\vspace{-0mm}
The proposed auditing method can also be extended to LLMs. Although a current trend in auditing LLM unlearning is to use adversarial auditing methods, such as membership inference, existing studies have shown that these approaches suffer from limited reliability \cite{Chowdhury25SaTML,Zhang25SaTML,Naderloui25USENIX}. Thus, our auditing framework can fill this gap.  

We consider a scenario where a pre-trained LLM, $M$, is fine-tuned on a dataset $D$, resulting in a fine-tuned model $M'$. Now, suppose that a subset $D_u$ of $D$ is requested to be unlearned, and the model provider applies an unlearning algorithm to remove $D_u$, producing an unlearned model $M'_u$.
To audit whether the unlearning was successful, the model provider, acting as the prover, uses the original pre-trained LLM, $M$, as the verification model. Since $M$ was never trained on $D_u$, it serves as a guaranteed verification model that excludes any information from $D_u$. The auditor then compares the outputs of $M$ and $M'_u$ using the samples in $D_u$ as input. If the outputs are consistent, the auditor concludes that $M'_u$ does not retain information from $D_u$. Otherwise, the model provider is given another opportunity to refine the verification model. 

We follow the experimental setup in \cite{Yao24NeurIPS} to evaluate their proposed unlearning method. Their approach integrates key ideas from three unlearning methods: Gradient Ascent, Relabel (referred to as Random Mismatch in their work), and Fine-tuning (referred to as Maintain Performance). 

For the LLM, we adopt OPT-2.7B \cite{OPT2.7B}. 
Regarding the datasets, we use PKU-SafeRLHF \cite{Ji23NeurIPS} as the unlearning dataset $D_u$ and TruthfulQA \cite{Lin22ACL} as the remaining dataset $D_r$, both of which consist of question-answer (Q\&A) pairs. The combined dataset $D = D_u \cup D_r$ is used to fine-tune the pre-trained OPT-2.7B model. For auditing, we adopt BERTScore \cite{Zhang20ICLR} and KL divergence to quantify the output differences between the unlearned model and the verification model. The results are presented in Table ~\ref{tab:OPT2.7B}. 

\begin{table}[!ht]\scriptsize
\vspace{-1mm}
	\centering
	\caption{Auditing results of our method on LLMs.}
\begin{tabular} {ccccc} 
\toprule
 \multirow{2}*{\makecell[c]{Unlearning\\Approaches}} & \multicolumn{2}{c}{$D_u$} & \multicolumn{2}{c}{$D_r$} \\\cline{2-5}
 & BERT & KL & BERT & KL \\
 \midrule
Base$@0.1$ & $0.93$ & $4.2\times 10^{-5}$ & $0.93$ & $5.9\times 10^{-5}$  \\\hline
Retrain$@0.4$ & $0.86$ & $3.0\times 10^{-4}$ & $0.84$ & $3.6\times 10^{-4}$  \\ 
GA$@0.4$ & $0.72$ & $8.1\times 10^{-2}$ & $0.66$ & $6.4\times 10^{-2}$ \\
GA+FT$@0.4$ & $0.87$ & $6.3\times 10^{-3}$ & $0.90$ & $4.2\times 10^{-3}$ \\
GA+Relab.+FT$@0.4$ & $0.52$ & $1.9\times 10^{-1}$ & $0.49$ & $1.8\times 10^{-1}$ \\
\bottomrule
\end{tabular}
	\label{tab:OPT2.7B}
 \vspace{-1mm}
\end{table}

The auditing results on LLMs are highly consistent with those observed in conventional models. Both the Retrain and Gradient Ascent with Fine-tuning approaches yield BERTScores that are closely aligned with the base BERTScore, indicating successful unlearning. In contrast, the Gradient Ascent approach alone results in significantly lower BERTScores compared to the base, highlighting large discrepancies in the outputs and signaling a failure of unlearning.

A particularly interesting case arises with the method that combines Gradient Ascent, Relabel, and Fine-tuning. According to our auditing framework, this combined approach does not constitute a successful unlearning strategy. This is because the different components of the method target conflicting objectives. Gradient Ascent pushes the model to forget certain behaviors by increasing the loss on the unlearning data, while Relabel introduces mismatched supervision signals, and Fine-tuning attempts to recover overall performance. When combined, these operations may interfere with one another, failing to forget the unlearning data. 


\section{Related Work}
\vspace{-0mm}
\noindent\textbf{Machine Unlearning.} Machine unlearning include two types: exact unlearning and approximate unlearning \cite{Thudi23}.
Exact unlearning typically involves retraining the model from scratch with the unlearned data removed from the training set. While this process can be optimized by partitioning the training set into multiple shards and retraining only on the shard containing the unlearned data \cite{Bourtoule21, Chen22CCS}, it remains computationally expensive and impractical for large-scale models.
In contrast, approximate unlearning seeks to modify the model's parameters by applying updates over a limited number of epochs. The goal is to ensure that the modified model closely approximates the one obtained through full retraining in the parameter space \cite{Golatkar21CVPR, Graves21AAAI, Li25WWW}. 
Research on machine unlearning has recently expanded into the LLM domain \cite{Liu24NatureMI,Wang25ICLR}. While approximate unlearning remains an efficient approach \cite{Chen23EMNLP,Yao24NeurIPS}, a new form has emerged, focusing on controlling LLM outputs through prompt engineering \cite{Pawelczyk24ICML,Liu24NIPS}. 


\vspace{1mm}
\noindent\textbf{Auditing Machine Unlearning.} Machine unlearning can be intuitively audited by comparing the unlearned model to a reference model retrained from scratch either in parameter space or behavior space \cite{Thudi23,Xue25}. However, this approach assumes access to a freshly retrained model, which is often impractical due to the high computational cost. 
Alternative lines of work include certified unlearning \cite{Guo20, Sekhari21NIPS}, which leverages differential privacy \cite{Dwork14} to obscure the differences between the retrained and unlearned models, and reproducing verification \cite{Weng24TIFS, Zhang24ICML, Eisenhofer25}, which records the unlearning operations, enabling the data owner to reproduce each step and verify the integrity of the process. However, certified unlearning relies on the convexity of the loss function, limiting its applicability to more complex models. Although recent works \cite{Zhang24ICMLb, Koloskova25ICML} have extended certified unlearning to deep learning settings, these guarantees hold only when using their specific unlearning algorithms, and thus do not constitute a universal certification framework. Meanwhile, reproducing verification focuses on auditing the unlearning procedure rather than the unlearning outcomes.


In the ML security domain, one intuitive approach to auditing unlearning is through membership inference attacks \cite{Shokri17}. However, membership inference typically requires training a large number of shadow models, resulting in substantial computational overhead. Moreover, its inference accuracy remains insufficient for reliable verification in practical settings \cite{Carlini22SP,Naderloui25USENIX}. 
Another auditing strategy is backdoor-based verification, where data owners deliberately inject poisoned data into the training set to serve as backdoor triggers \cite{Sommer22PETS}. 
However, this method requires the injection of a sufficient amount of poisoned data into the training set prior to model training. 

\vspace{-1mm}
\section{Conclusion}\label{sec:conclusion}
\vspace{-0mm}
This paper has addressed a fundamental yet underexplored problem in machine unlearning: how to audit if unlearning has been successfully achieved. We propose the first universal auditing framework, inspired by the concept of proof of ignorance, that can effectively assess a wide range of unlearning methods across diverse datasets. Our framework applies not only to conventional ML models but also extends to LLMs. Future research will focus on extending this framework to support streaming unlearning under continual learning settings. 







\vspace{-0mm}
\bibliographystyle{IEEEtran}
\bibliography{references}

\begin{thebibliography}{10}
\providecommand{\url}[1]{#1}
\csname url@samestyle\endcsname
\providecommand{\newblock}{\relax}
\providecommand{\bibinfo}[2]{#2}
\providecommand{\BIBentrySTDinterwordspacing}{\spaceskip=0pt\relax}
\providecommand{\BIBentryALTinterwordstretchfactor}{4}
\providecommand{\BIBentryALTinterwordspacing}{\spaceskip=\fontdimen2\font plus
\BIBentryALTinterwordstretchfactor\fontdimen3\font minus \fontdimen4\font\relax}
\providecommand{\BIBforeignlanguage}[2]{{%
\expandafter\ifx\csname l@#1\endcsname\relax
\typeout{** WARNING: IEEEtran.bst: No hyphenation pattern has been}%
\typeout{** loaded for the language `#1'. Using the pattern for}%
\typeout{** the default language instead.}%
\else
\language=\csname l@#1\endcsname
\fi
#2}}
\providecommand{\BIBdecl}{\relax}
\BIBdecl

\bibitem{GDPR}
GDPR, ``{General Data Protection Regulation},'' \emph{https://gdpr-info.eu}, 2016.

\bibitem{Cao15}
Y.~Cao and J.~Yang, ``{Towards Making Systems Forget with Machine Unlearning},'' in \emph{Proc. of IEEE S \& P}, 2015, pp. 463--480.

\bibitem{Bourtoule21}
L.~Bourtoule, V.~Chandrasekaran, C.~A. Choquette-Choo, H.~Jia, A.~Travers, B.~Zhang, D.~Lie, and N.~Papernot, ``{Machine Unlearning},'' in \emph{Proc. of IEEE S \& P}, 2021, pp. 141--159.

\bibitem{Xu23}
H.~Xu, T.~Zhu, L.~Zhang, W.~Zhou, and P.~S. Yu, ``{Machine Unlearning: A Survey},'' \emph{ACM Computing Surveys}, vol.~56, no.~1, pp. 9:1--36, 2023.

\bibitem{Ye25USENIX}
D.~Ye, T.~Zhu, J.~Li, K.~Gao, B.~Liu, L.~Y. Zhang, W.~Zhou, and Y.~Zhang, ``Data duplication: A novel multi-purpose attack paradigm in machine unlearning,'' in \emph{Proc. of USENIX Security}, 2025.

\bibitem{Thudi23}
A.~Thudi, H.~Jia, I.~Shumailov, and N.~Papernot, ``{On the Necessity of Auditable Algorithmic Definitions for Machine Unlearning},'' in \emph{Proc. of USENIX Security}, 2023.

\bibitem{Zhang24ICML}
B.~Zhang, Z.~Chen, C.~Shen, and J.~Li, ``{Verification of Machine Unlearning is Fragile},'' in \emph{Proc. of ICML}, 2024.

\bibitem{Xue25}
\BIBentryALTinterwordspacing
L.~Xue, S.~Hu, W.~Lu, Y.~Shen, D.~Li, P.~Guo, Z.~Zhou, M.~Li, Y.~Zhang, and L.~Y. Zhang, ``Towards reliable forgetting: A survey on machine unlearning verification, challenges, and future directions,'' 2025. [Online]. Available: \url{https://arxiv.org/abs/2506.15115}
\BIBentrySTDinterwordspacing

\bibitem{Naderloui25USENIX}
N.~Naderloui, S.~Yan, B.~Wang, J.~Fu, W.~H. Wang, W.~Liu, and Y.~Hong, ``{Rectifying Privacy and Efficacy Measurements in Machine Unlearning: A New Inference Attack Perspective},'' in \emph{Proc. of USENIX Security}, 2025, pp. 5545--5564.

\bibitem{Hayes25SaTML}
J.~Hayes, I.~Shumailov, E.~Triantafillou, A.~Khalifa, and N.~Papernot, ``{Inexact Unlearning Needs More Careful Evaluations to Avoid a False Sense of Privacy},'' in \emph{Proc. of IEEE SaTML}, 2025, pp. 497--519.

\bibitem{Carlini22SP}
N.~Carlini, S.~Chien, M.~Nasr, S.~Song, A.~Terzis, and F.~Tramer, ``{Membership Inference Attacks From First Principles},'' in \emph{Proc. of IEEE S \& P}, 2022, pp. 1897--1914.

\bibitem{Zhang25SaTML}
J.~Zhang, D.~Das, G.~Kamath, and F.~Tramer, ``{Membership Inference Attacks Cannot Prove that a Model Was Trained On Your Data},'' in \emph{Proc. of IEEE SaTML}, 2025, pp. 333--345.

\bibitem{Chowdhury25SaTML}
A.~R. Chowdhury, Z.~Kong, and K.~Chaudhuri, ``{On the Reliability of Membership Inference Attacks},'' in \emph{Proc. of IEEE SaTML}, 2025, pp. 534--549.

\bibitem{Wang25CCS}
Z.~Wang, C.~Zhang, Y.~Chen, N.~Baracaldo, S.~Kadhe, and L.~Yu, ``{Membership Inference Attacks as Privacy Tools: Reliability, Disparity and Ensemble},'' in \emph{Proc. of ACM CCS}, 2025, pp. 1724--1738.

\bibitem{Sommer22PETS}
D.~M. Sommer, L.~Song, S.~Wagh, and P.~Mittal, ``{Athena: Probabilistic Verification of Machine Unlearning},'' in \emph{Proc. of PETS}, 2022, p. 268–290.

\bibitem{Zhang24ICMLb}
B.~Zhang, Y.~Dong, T.~Wang, and J.~Li, ``{Towards Certified Unlearning for Deep Neural Networks},'' in \emph{Proc. of ICML}, 2024.

\bibitem{Koloskova25ICML}
A.~Koloskova, Y.~Allouah, A.~Jha, R.~Guerraoui, and S.~Koyejo, ``{Certified Unlearning for Neural Networks},'' in \emph{Proc. of ICML}, 2025.

\bibitem{Eisenhofer25}
T.~Eisenhofer, D.~Riepel, V.~Chandrasekaran, E.~Ghosh, O.~Ohrimenko, and N.~Papernot, ``{Verifiable and Provably Secure Machine Unlearning},'' in \emph{Proc. of IEEE SaTML}, 2025.

\bibitem{Olson18NeurIPS}
M.~Olson, A.~Wyner, and R.~Berk, ``{Modern Neural Networks Generalize on Small Data Sets},'' in \emph{Proc. of NeurIPS}, 2018.

\bibitem{t-SNE}
L.~van~der Maaten and G.~Hinton, ``{Visualizing Data using t-SNE},'' \emph{Journal of Machine Learning Research}, vol.~9, pp. 2579--2605, 2008.

\bibitem{Guo20}
C.~Guo, T.~Goldstein, A.~Hannun, and L.~van~der Maaten, ``{Certified Data Removal from Machine Learning Models},'' in \emph{Proc. of ICML}, 2020.

\bibitem{Graves21AAAI}
L.~Graves, V.~Nagisetty, and V.~Ganesh, ``{Amnesiac Machine Learning},'' in \emph{Proc. of AAAI}, 2021, pp. 11\,516--11\,524.

\bibitem{NIPSUnlearning23}
\BIBentryALTinterwordspacing
T.~organizers of~the NeurIPS Unlearning~Competition, ``{Evaluation for the NeurIPS Machine Unlearning Competition},'' 2023. [Online]. Available: \url{https://unlearning-challenge.github.io/assets/data/Machine_Unlearning_Metric.pdf}
\BIBentrySTDinterwordspacing

\bibitem{Maini24}
P.~Maini, Z.~Feng, A.~Schwarzschild, Z.~C. Lipton, and J.~Z. Kolter, ``{TOFU: A Task of Fictitious Unlearning for LLMs},'' in \emph{Proc. of COLM}, 2024.

\bibitem{Georgiev25ICLR}
K.~Georgiev, R.~Rinberg, S.~M. Park, S.~Garg, A.~Ilyas, A.~Madry, and S.~Neel, ``{Attribute-to-Delete: Machine Unlearning via Datamodel Matching},'' in \emph{Proc. of ICLR}, 2025.

\bibitem{Deshpande18}
A.~Deshpande and Y.~Kalai, ``Proofs of ignorance and applications to 2-message witness hiding,'' \emph{Cryptology ePrint Archive}, 2018.

\bibitem{Kuykendall20TCC}
B.~Kuykendall and M.~Zhandry, ``{Towards Non-interactive Witness Hiding},'' in \emph{Proc. of TCC}, 2020.

\bibitem{Lycklama24USENIX}
H.~Lycklama, A.~Viand, N.~Küchler, C.~Knabenhans, and A.~Hithnawi, ``{Holding Secrets Accountable: Auditing Privacy-Preserving Machine Learning},'' in \emph{Proc. of USENIX Security}, 2024.

\bibitem{Krizhevsky14}
\BIBentryALTinterwordspacing
A.~Krizhevsky, V.~Nair, and G.~Hinton, ``The cifar-10 dataset,'' 2014. [Online]. Available: \url{http://www.cs.toronto.edu/kriz/cifar.html}
\BIBentrySTDinterwordspacing

\bibitem{Netzer11NIPS}
Y.~Netzer, T.~Wang, A.~Coates, A.~Bissacco, B.~Wu, and A.~Y. Ng, ``{Reading Digits in Natural Images with Unsupervised Feature Learning},'' in \emph{Proc. of NIPS Workshop on Deep Learning and Unsupervised Feature Learning}, 2011, pp. 1--9.

\bibitem{SkinCancer}
\BIBentryALTinterwordspacing
M.~H. Javid, ``Melanoma skin cancer dataset,'' 2022. [Online]. Available: \url{https://www.kaggle.com/dsv/3376422}
\BIBentrySTDinterwordspacing

\bibitem{BBCNews}
\BIBentryALTinterwordspacing
D.~Gritten, ``{Israel targets Hamas’s labyrinth of tunnels under Gaza},'' 2023. [Online]. Available: \url{https://www.bbc.com/news/world-middle-east-67097124}
\BIBentrySTDinterwordspacing

\bibitem{AGNews}
X.~Zhang, J.~Zhao, and Y.~LeCun, ``{Character-level Convolutional Networks for Text Classification},'' in \emph{Proc. of NIPS}, 2015, p. 649–657.

\bibitem{IMDB}
\BIBentryALTinterwordspacing
A.~Anand, ``Imdb.'' [Online]. Available: \url{https://www.kaggle.com/datasets/atulanandjha/imdb-50k-movie-reviews-test-your-bert}
\BIBentrySTDinterwordspacing

\bibitem{Thudi22EuroSP}
A.~Thudi, G.~Deza, V.~Chandrasekaran, and N.~Papernot, ``{Unrolling SGD: Understanding Factors Influencing Machine Unlearning},'' in \emph{Proc. of EuroS\&P}, 2022, pp. 303--319.

\bibitem{Golatkar20CVPR}
A.~Golatkar, A.~Achille, and S.~Soatto, ``{Eternal sunshine of the spotless net: Selective forgetting in deep networks},'' in \emph{Proc. of CVPR}, 2020, p. 9304–9312.

\bibitem{Golatkar20ECCV}
------, ``{Forgetting Outside the Box: Scrubbing Deep Networks of Information Accessible from Input-Output Observations},'' in \emph{Proc. of ECCV}, 2020, pp. 383--398.

\bibitem{Yao24NeurIPS}
Y.~Yao, X.~Xu, and Y.~Liu, ``{Large Language Model Unlearning},'' in \emph{Proc. of NeurIPS}, 2024.

\bibitem{OPT2.7B}
\BIBentryALTinterwordspacing
S.~Zhang and et~al., ``{OPT: Open Pre-trained Transformer Language Models},'' 2022. [Online]. Available: \url{https://arxiv.org/abs/2205.01068}
\BIBentrySTDinterwordspacing

\bibitem{Ji23NeurIPS}
J.~Ji and et~al., ``{BeaverTails: Towards Improved Safety Alignment of LLM via a Human-Preference Dataset},'' in \emph{Proc. of NeurIPS}, 2023.

\bibitem{Lin22ACL}
S.~Lin, J.~Hilton, and O.~Evans, ``{TruthfulQA: Measuring How Models Mimic Human Falsehoods},'' in \emph{Proc. of ACL}, 2022, pp. 3214--3252.

\bibitem{Zhang20ICLR}
T.~Zhang, V.~Kishore, F.~Wu, K.~Q. Weinberger, and Y.~Artzi, ``{BERTScore: Evaluating Text Generation with BERT},'' in \emph{Proc. of ICLR}, 2020.

\bibitem{Chen22CCS}
M.~Chen, Z.~Zhang, T.~Wang, M.~Backes, M.~Humbert, and Y.~Zhang, ``{Graph Unlearning},'' in \emph{Proc. of CCS}, 2022.

\bibitem{Golatkar21CVPR}
A.~Golatkar, A.~Achille, A.~Ravichandran, M.~Polito, and S.~Soatto, ``{Mixed-Privacy Forgetting in Deep Networks},'' in \emph{Proc. of CVPR}, 2021, pp. 792--801.

\bibitem{Li25WWW}
Z.~Li, Q.~Ye, and H.~Hu, ``{FUNU: Boosting Machine Unlearning Efficiency by Filtering Unnecessary Unlearning},'' in \emph{Proc. of The ACM Web Conference}, 2025.

\bibitem{Liu24NatureMI}
S.~Liu and et~al., ``{Rethinking Machine Unlearning for Large Language Models},'' \emph{Nature Machine Intelligence}, 2024.

\bibitem{Wang25ICLR}
Q.~Wang, J.~Zhou, Z.~Zhou, S.~Shin, B.~Han, and K.~Q. Weinberger, ``{Rethinking LLM Unlearning Objectives: A Gradient Perspective and Go Beyond},'' in \emph{Proc. of ICLR}, 2025.

\bibitem{Chen23EMNLP}
J.~Chen and D.~Yang, ``{Unlearn What You Want to Forget: Efficient Unlearning for LLMs},'' in \emph{Proc. of EMNLP}, 2023, p. 12041–12052.

\bibitem{Pawelczyk24ICML}
M.~Pawelczyk, S.~Neel, and H.~Lakkaraju, ``{In-Context Unlearning: Language Models as Few Shot Unlearners},'' in \emph{Proc. of ICML}, 2024.

\bibitem{Liu24NIPS}
C.~Y. Liu, Y.~Wang, J.~Flanigan, and Y.~Liu, ``{Large Language Model Unlearning via Embedding-Corrupted Prompts},'' in \emph{Proc. of NeurIPS}, 2024.

\bibitem{Sekhari21NIPS}
A.~Sekhari, J.~Acharya, G.~Kamath, and A.~T. Suresh, ``{Remember What You Want to Forget: Algorithms for Machine Unlearning},'' in \emph{Proc. of NeurIPS}, 2021, p. 18075–18086.

\bibitem{Dwork14}
C.~Dwork and A.~Roth, ``{The Algorithmic Foundations of Differential Privacy},'' \emph{Foundations and Trends in Theoretical Computer Science}, vol.~9, no. 3-4, p. 211–407, 2014.

\bibitem{Weng24TIFS}
J.~Weng, S.~Yao, Y.~Du, J.~Huang, J.~Weng, and C.~Wang, ``{Proof of Unlearning: Definitions and Instantiation},'' \emph{IEEE Transactions on Information Forensics and Security}, vol.~19, pp. 3309--3323, 2024.

\bibitem{Shokri17}
R.~Shokri, M.~Stronati, C.~Song, and V.~Shmatikov, ``{Membership Inference Attacks Against Machine Learning Models},'' in \emph{Proc. of IEEE S \& P}, 2017, pp. 3--18.

\bibitem{Attia22CoLT}
A.~Attia and T.~Koren, ``{Uniform Stability for First-Order Empirical Risk Minimization},'' in \emph{Proc. of CoLT}, 2022, pp. 3313--3332.

\bibitem{Dai25ICML}
B.~Dai, ``{EnsLoss: Stochastic Calibrated Loss Ensembles for Preventing Overfitting in Classification},'' in \emph{Proc. of ICML}, 2025.

\bibitem{Wang23CoLT}
Y.~Wang and C.~Scott, ``{On Classification-Calibration of Gamma-Phi Losses},'' in \emph{Proc. of CoLT}, 2023, pp. 4929--4951.

\end{thebibliography}




\section*{Appendix}
\setcounter{section}{0}
\renewcommand{\appendixname}{Appendix~\Alph{section}}


\vspace{-0mm}
\section{Proof of Main Conclusions}

\begin{proof}[Proof of Proposition \ref{prop:preliminary check}]
    We prove the proposition by contradiction. Assume that $M_u$ is successfully unlearned. By the definition of successful unlearning, every successfully unlearned model must be behaviorally separated from the original model on the unlearning set $D_u$ by at least $\gamma$. Therefore, as $M_u$ is assumed to be successfully unlearned, it must satisfy $A_u(M,M_u)\leq 1-\gamma$. On the other hand, the premise of the proposition states that $A_u(M,M_u)\geq 1-\epsilon$, where $\epsilon\in(0,\gamma)$. As $\epsilon<\gamma$, we have $1-\epsilon>1-\gamma$. Hence,
    \[
        A_u(M,M_u)\geq 1-\epsilon>1-\gamma.
    \]
    This contradicts the necessary condition for successful unlearning: $A_u(M,M_u)\leq 1-\gamma$. Therefore, our assumption that $M_u$ is successfully unlearned must be false. This completes the proof.
\end{proof}

\begin{proof}[Proof of Theorem \ref{thm:Polynomial}]
    Each refinement step corresponds to a standard mini-batch stochastic gradient descent (SGD) or empirical risk minimization (ERM) procedure on $D_\Delta$. With batch size $b$ and $E$ training epochs, the total number of gradient evaluations is $E \cdot \lceil |D_\Delta| / b \rceil$. The computational cost of each forward–backward pass is linear, and thus polynomial, in both the number of model parameters and the number of nonzero elements in the input and activation layers. Therefore, the total computational complexity of one refinement step is $O(E\cdot\lceil|D_\Delta/b|\rceil\cdot\mathrm{Poly}(S))$. For fixed $b$ and $E$, this simplifies to $\mathrm{Poly}(|D_\Delta|, S)$, completing the proof.
\end{proof}

\begin{proof}[Proof of Lemma \ref{lem:stability}]
The proof is based on uniform stability. It is a widely accepted smoothness property of ERM and SGD: small perturbations in the training data lead to proportionally small changes in the model’s output distribution \cite{Attia22CoLT}. 

Consider a sequence of datasets $D_1, ..., D_m$, where any two adjacent datasets, $D_k$ and $D_{k+1}$, differ by a single example. By applying the triangle inequality, we have:
\begin{equation}\nonumber
    \mathbb{E}||M-M'||_1\leq\sum^m_{k=1}\mathbb{E}||M_{k+1}-M_{k}||_1.
\end{equation}
According to the uniform stability property,
\begin{equation}\nonumber
    \mathbb{E}|[l(M_{k+1},x)-l(M_{k},x)]|\leq\frac{\beta}{|D_r|},
\end{equation}
where $l$ is a loss function and $\beta$ is a constant determined by the step size and Lipschitz continuity parameters. Since $l$ is assumed to be Lipschitz continuous, we have:
\begin{equation}\nonumber
    \mathbb{E}||M_{k+1}-M_{k}||_1\leq\frac{1}{\lambda}\mathbb{E}|[l(M_{k+1},x)-l(M_{k},x)]|\leq\frac{\beta}{\lambda|D_r|},
\end{equation}
where $\lambda$ is the calibration constant. Summing over the $m$ single replacements yields:
\begin{equation}\nonumber
    \mathbb{E}||M-M'||_1\leq\frac{m\beta}{\lambda|D_r|}=C\cdot\frac{m}{|D_r|},
\end{equation}
where $C$ absorbs the constants $\beta$ and $\lambda$.
\end{proof}

\begin{proof}[Proof of Lemma \ref{lem:reachability}]
Let $D^{(0)}_v$ denote the initial verification set, and define $D^{(t)}_v = D^{(t-1)}_v \cup D_{\Delta_{t-1}}$, where $D_{\Delta_{t-1}} \subset D_r - D^{(t-1)}_v$. Let $m_t = |D_r - D^{(t)}_v|$ denote the number of samples not included at step $t$. As $D^{(t)}_v \subset D_r$, $m_t$ is non-increasing.

According to Lemma \ref{lem:stability}, if two datasets differ by $m$ samples, the prediction discrepancy between their corresponding trained models is bounded by $C \cdot m / |D_r|$. Hence,
\begin{equation}\label{eq:upperBound}
    \mathbb{E}_{x\sim\mathcal{X}}[||M^{(t)}_v(x)-M^*_r(x)||_1]\leq C\cdot\frac{m_t}{|D_r|}.
\end{equation}
To ensure (\ref{eq:upperBound}) $\leq\epsilon$, it suffices that 
\begin{equation}\label{eq:upperBound2}
    m_t\leq\frac{\epsilon}{C}\cdot|D_r|.
\end{equation}

 We now construct a concrete refinement schedule that achieves (\ref{eq:upperBound2}) within a number of steps polynomial in $1/\epsilon$. Choose a constant fraction $\rho \in (0, 1)$ and, at each step, add a $\rho$-fraction of the current missed set: $|D_{\Delta_t}| = \rho m_t$. Then, $m_{t+1} = m_t - |D_{\Delta_t}| = (1 - \rho)m_t$. By induction,
\begin{equation}\label{eq:m induction}
    m_t=(1-\rho)^t m_0,
\end{equation}
where $m_0=|D_r-D^{(0)}_v|$. Combining (\ref{eq:upperBound}) and (\ref{eq:m induction}) yields
\begin{equation}\label{eq:bound induction}
    \mathbb{E}_{x\sim\mathcal{X}}[||M^{(t)}_v(x)-M^*_r(x)||_1]\leq C\cdot\frac{(1-\rho)^t m_0}{|D_r|}\leq C\cdot(1-\rho)^t.
\end{equation}
To make (\ref{eq:bound induction}) $\leq\epsilon$, it suffices to take
\begin{equation}\nonumber
    t\geq\frac{\mathrm{log}(\epsilon/C)}{\mathrm{log}(1-\rho)}.
\end{equation}
Since $C$ and $\rho$ are constants, we have $t = O(\mathrm{log}(1/\epsilon))$.
Hence, with $T = O(\mathrm{log}(1/\epsilon))$ refinement steps, (\ref{eq:upperBound2}) holds, and therefore (\ref{eq:upperBound}) $\leq \epsilon$.

By Theorem \ref{thm:Polynomial}, each refinement step runs in time $\mathrm{Poly}(|D_{\Delta_t}|, S)$. Thus, a logarithmic (and polynomial) number of such steps ensures a polynomial-time procedure.
\end{proof}

\begin{proof}[Proof of Lemma \ref{lem:alignment}]
    The result follows directly from the triangle inequality and the conclusion of Lemma \ref{lem:reachability}:
\begin{equation}\nonumber
\begin{aligned}    
    &\mathbb{E}_{x\sim\mathcal{X}}[||M^{(t)}_v(x)-M_u(x)||_1]\\
    &\leq\mathbb{E}_{x\sim\mathcal{X}}[||M^{(t)}_v(x)-M^*_r(x)||_1]+\mathbb{E}_{x\sim\mathcal{X}}[||M^*_r(x)-M_u(x)||_1]\\
    &\leq\epsilon+\epsilon=2\epsilon.
\end{aligned}
\end{equation}
\end{proof}

\begin{proof}[Proof of Theorem \ref{thm:accept}]
We prove this theorem by connecting it to the concept of proof-of-ignorance.
By setting $\epsilon = \delta / 2$ in Lemma \ref{lem:alignment}, the refinement process of $M_v$ constitutes a polynomial-time reduction that never accesses $D_u$.
This forms the proof of ignorance analogue: a polynomial-time reduction maps the reference solution $s'$ (the parameters of $M^{(0)}_v$) to an intermediate one $\Tilde{s}$ (the parameters of $M^{(t)}_v$) that is indistinguishable from the alleged solution $s$ (the parameters of $M_u$) without ever using the secret, i.e., $D_u$.
\end{proof}

\begin{proof}[Proof of Theorem \ref{thm:reject}]
Let $P$ denote the distribution of $D_u$, and let $M_{D_u}$ represent the model trained with dependence on $D_u$.
Formally, ``dependence'' means that the model’s prediction rule on $P$ changes when $D_u$ is removed.
This implies the existence of a $D_r$-only model $M^*_r$ such that the excess $P$-risk is strictly positive under inputs from $P$: 
\begin{equation}\nonumber
    \mathbb{E}_P[||M^*_r-M_{D_u}||_1]>0.
\end{equation}
Under classification-calibrated losses \cite{Dai25ICML}, the excess risk provides a lower bound on the output-space discrepancy via a calibration function $\psi$, for some constant $c > 0$:
\begin{equation}\nonumber
    \mathbb{E}_P[||M^*_r-M_{D_u}||_1]> c\psi(\mathcal{R}_P(M^*_r)-\mathcal{R}_P(M_{D_u})),
\end{equation}
where $\mathcal{R}_P(M)=\mathbb{E}_{(x,y)\sim P}[l(M(x),y)]$. 
We observe that $M^*_r$ is equivalent to $M^{(t)}_v$ once all polynomial-length refinement sequences have been executed, and $M_{D_u}$ corresponds to $M_u$ when $\mathbb{E}[||M^{(t)}_v - M_u||_1] > \delta$.
By setting $\delta = c \psi(\mathcal{R}_P(M^*_r) - \mathcal{R}_P(M_{D_u}))$, we conclude the proof.
\end{proof}

\begin{proof}[Proof of Corollary \ref{cor:sufficiency}]
Based on the calibration theory of loss functions \cite{Wang23CoLT}, there exists a monotone calibration function $\phi$ s. t.
\begin{equation}\nonumber
    \phi(Acc(M_u,D_u)-Acc(M^{(0)}_v,D_u))\leq\mathbb{E}[||M_u-M^{(0)}_v||_1].
\end{equation}
When the model outputs are well-calibrated probability distributions, such as softmax outputs, this relationship can be linearized for small perturbations, yielding
\begin{equation}\nonumber
    \lambda\Delta_{acc}\leq\mathbb{E}[||M_u-M^{(0)}_v||_1].
\end{equation}
From (\ref{eq:upperBound}) and (\ref{eq:m induction}) in Lemma \ref{lem:reachability}, we obtain
\begin{equation}\nonumber
\begin{aligned}
    &\lambda\Delta_{acc}\leq\mathbb{E}[||M_u-M^{(0)}_v||_1]\leq C\cdot\frac{m_0}{|D_r|}\\
    \Rightarrow &m_0\geq\frac{\lambda}{C}\Delta_{acc}|D_r|\Rightarrow\frac{m_1}{1-\rho}\geq\frac{\lambda}{C}\Delta_{acc}|D_r|\\
    \Rightarrow&m_1\geq\frac{\lambda}{C}(1-\rho)\Delta_{acc}|D_r|.
\end{aligned}
\end{equation}
By Lemma \ref{lem:alignment} and Theorem \ref{thm:accept}, setting $m_1 \geq \frac{\lambda}{C}(1 - \rho)\Delta{acc}|D_r|$ ensures that the gap between $M_u$ and $M^{(1)}_v$ falls below $\delta$, thus completing the proof.
\end{proof}

\section{Datasets and Unlearning Methods}
\noindent\textbf{Datasets.}
\begin{itemize}[leftmargin=*]
    \item \textbf{CIFAR10} \cite{Krizhevsky14} includes $60,000$ images across $10$ classes, each containing $6,000$ images of vehicles and animals. The dimension of each image is $32\times 32$. 
    \item \textbf{SVHN} \cite{Netzer11NIPS} is a street view house number dataset, consisting of $10$ classes, each representing a digit. It comprises over $600,000$ samples, each measuring $32\times 32$.
    \item \textbf{SkinCancer} \cite{SkinCancer} is a melanoma skin cancer dataset containing $10,000$ images, with $9,000$ images for training and $1,000$ for testing. The task is binary classification, and each image was resized to $32\times 32$. 
    \item \textbf{BBCNews} \cite{BBCNews} consists of RSS Feeds from the BBC News site, comprising $29,500$ records. Each record includes five attributes: title, pubDate, guid, link, and description. The dataset includes five classes: business, politics, entertainment, science, and sport.
    \item \textbf{AGNews} \cite{AGNews}  is a topic classification dataset composed of news articles categorized into four classes: World, Sports, Business, and Technology. Each class includes $30,000$ training samples and $1,900$ test samples.
    \item \textbf{IMDB} \cite{IMDB} contains 50,000 movie reviews categorized for binary sentiment classification, distinguishing between positive and negative sentiments. The dataset comprises 25,000 highly polar movie reviews allocated for training and an additional 25,000 for testing purposes.
\end{itemize}

\vspace{1mm}
\noindent\textbf{Unlearning Methods.}
\begin{itemize}[leftmargin=*]
    \item \textbf{Retrain.} This method removes the unlearning data and retrains a new model on the remaining data. Variant includes SISA \cite{Bourtoule21}, which trains models on data shards and retrains affected shards upon unlearning requests. 

    \item \textbf{Adversarial retrain \cite{Zhang24ICML}.} This method retrains a model from scratch without using unlearning samples. During retraining, each sample in the mini-batch is examined: if it belongs to the remaining data, it is retained for model updates; if it is an unlearning sample, the method selects a similar sample from the remaining data that produces a comparable gradient. 


    \item \textbf{Fine-tuning (FT) \cite{Koloskova25ICML}.} This method directly fine-tunes the current model on the remaining data. Its variants \cite{Koloskova25ICML} apply noisy fine-tuning on all or part of the remaining data. 

    \item \textbf{Relabeling \cite{Graves21AAAI}.} This family of methods modifies the unlearned data by assigning randomly selected incorrect labels, and uses the mislabeled data to fine-tune the model. 
    

    \item \textbf{Relabeling with fine-tuning (Relabel+FT) \cite{Graves21AAAI}.} This approach builds on the relabeling method. After unlearning, the unlearned model is further fine-tuned using the remaining data to improve its overall performance. 

    \item \textbf{Gradient ascent (GA) \cite{Thudi22EuroSP}.} This class of methods reverses the model's training on the unlearned data by adding back the corresponding gradients. 

    \item \textbf{Gradient ascent with fine-tuning (GA+FT) \cite{Thudi22EuroSP}.} This approach fine-tunes the unlearned model on the remaining data after applying the gradient ascent unlearning method. 

    \item \textbf{Fisher forgetting \cite{Golatkar20CVPR}.} This method applies Gaussian noise to perturb the model towards exact unlearning. The Gaussian distribution has a zero mean and covariance determined by the $4$-th root of the Fisher information matrix with respect to the model on the unlearned data.

    \item \textbf{Hessian forgetting \cite{Golatkar20CVPR,Golatkar20ECCV}.} This method builds upon Fisher forgetting by replacing the Fisher Information Matrix with the Hessian matrix, which provides a second-order correction to achieve more precise unlearning. 
    
    \item \textbf{Certified Hessian forgetting \cite{Zhang24ICMLb}.} This method offers a theoretical guarantee of certified unlearning by modifying the model parameters using the inverse Hessian matrix of the loss function computed over the remaining data. 

    
\end{itemize}

\section{Additional Experimental Results}
\subsection{Adaptability Study}
\noindent\textbf{Varying the Size of the Unlearning Set $D_u$.} The results discussed above were obtained using $20\%$ of the training data as $D_u$. To assess the adaptability of our auditing method, we evaluate its performance using $30\%$ of the training data as $D_u$. The auditing results are shown in Tables~\ref{tab:CIFAR10Du=0.3}, \ref{tab:SVHNDu=0.3}, \ref{tab:SkinCancerDu=0.3}, \ref{tab:BBCNewsDu=0.3}, \ref{tab:AGNewsDu=0.3}, and \ref{tab:IMDBDu=0.3}.


We find that the auditing results remain highly consistent across different values of $|D_u|$, which demonstrates adaptability of our proposed auditing method. A key observation is that as the size of $D_u$ increases, the classification accuracy on both the unlearning set $D_u$ and the test set $D_t$ generally decreases after unlearning. This decline can be attributed to the fact that removing a larger portion of the training data naturally reduces the generalizability of the resulting model. Notably, when unlearning is successful, $D_u$ becomes effectively unseen to the model, similar to $D_t$, and thus a drop in accuracy on both sets is expected.


\begin{table}[!ht]\scriptsize
\vspace{-1mm}
	\centering
	\caption{Auditing results of our method on CIFAR10 with $|D_u|=0.3\cdot|D|$.}
\begin{tabular} {cccccc} 
\toprule
 \multirow{2}*{\makecell[c]{Unlearning\\Approaches}} & \multirow{2}*{\makecell[c]{Agree. on $D_u$\\(Base: $65.77$)}} & \multirow{2}*{\makecell[c]{KL on $D_u$\\(Base: $2.45$)}} & \multicolumn{3}{c}{Accuracy on} \\\cline{4-6}
 & & & $D_u$ & $D_r$ & $D_t$ \\
 \midrule
Pre-unlearn &  &  & $95.92$ & $95.61$ & $71.79$ \\\hline
Retrain$@0.3$ & $66.17$ & $2.09$ & $69.61$ & $99.94$ & $69.90$ \\ 
Adv. Retr.$@0.3$ & $66.01$ & $2.19$ & $70.23$ & $99.91$ & $69.67$ \\
Fine-tune$@0.3$ & $66.81$ & $2.13$ & $75.15$ & $99.95$ & $72.54$ \\
Relabel$@0.1$ & $37.61$ & $7.27$ & $54.02$ & $54.41$ & $44.13$ \\
Relab.+FT$@0.3$ & $69.97$ & $1.93$ & $75.39$ & $99.97$ & $72.16$ \\
GA$@0.1$ & $44.15$ & $5.31$ & $58.27$ & $58.04$ & $47.67$ \\
GA+FT$@0.3$ & $69.87$ & $1.92$ & $75.25$ & $99.96$ & $72.36$ \\
Fisher$@0.3$ & $69.81$ & $1.98$ & $99.42$ & $99.42$ & $72.33$ \\
Hessian$@0.3$ & $69.89$ & $1.98$ & $99.59$ & $99.68$ & $73.03$ \\
Cert.-Hess.$@0.3$ & $69.92$ & $1.98$ & $99.95$ & $99.96$ & $73.54$ \\
\bottomrule
\end{tabular}
	\label{tab:CIFAR10Du=0.3}
 \vspace{-1mm}
\end{table}

\begin{table}[!ht]\scriptsize
\vspace{-1mm}
	\centering
	\caption{Auditing results of our method on SVHN with $|D_u|=0.3\cdot|D|$.}
\begin{tabular} {cccccc} 
\toprule
 \multirow{2}*{\makecell[c]{Unlearning\\Approaches}} & \multirow{2}*{\makecell[c]{Agree. on $D_u$\\(Base: $88.87$)}} & \multirow{2}*{\makecell[c]{KL on $D_u$\\(Base: $0.53$)}} & \multicolumn{3}{c}{Accuracy on} \\\cline{4-6}
 & & & $D_u$ & $D_r$ & $D_t$ \\
 \midrule
Pre-unlearn &  &  & $99.49$ & $99.29$ & $89.94$ \\\hline
Retrain$@0.3$ & $89.30$ & $0.48$ & $90.42$ & $99.78$ & $89.60$ \\ 
Adv. Retr.$@0.3$ & $89.08$ & $0.49$ & $90.54$ & $99.81$ & $89.57$ \\
Fine-tune$@0.3$ & $88.88$ & $0.50$ & $91.42$ & $99.91$ & $90.19$ \\
Relabel$@0.05$ & $51.31$ & $8.16$ & $56.77$ & $56.46$ & $48.95$ \\
Relab.+FT$@0.3$ & $89.44$ & $0.48$ & $91.29$ & $99.66$ & $90.58$ \\
GA$@0.05$ & $54.51$ & $5.80$ & $58.05$ & $58.46$ & $51.45$ \\
GA+FT$@0.3$ & $88.95$ & $0.74$ & $89.68$ & $92.99$ & $89.64$ \\
Fisher$@0.3$ & $89.43$ & $0.53$ & $99.42$ & $99.47$ & $90.07$ \\
Hessian$@0.3$ & $89.52$ & $0.52$ & $99.53$ & $99.45$ & $90.11$ \\
Cert.-Hess.$@0.3$ & $89.53$ & $0.52$ & $99.76$ & $99.76$ & $90.44$ \\
\bottomrule
\end{tabular}
	\label{tab:SVHNDu=0.3}
 \vspace{-1mm}
\end{table}

\begin{table}[!ht]\scriptsize
\vspace{-1mm}
	\centering
	\caption{Auditing results of our method on SkinCancer with $|D_u|=0.3\cdot|D|$.}
\begin{tabular} {cccccc} 
\toprule
 \multirow{2}*{\makecell[c]{Unlearning\\Approaches}} & \multirow{2}*{\makecell[c]{Agree. on $D_u$\\(Base: $88.63$)}} & \multirow{2}*{\makecell[c]{KL on $D_u$\\(Base: $0.62$)}} & \multicolumn{3}{c}{Accuracy on} \\\cline{4-6}
 & & & $D_u$ & $D_r$ & $D_t$ \\
 \midrule
Pre-unlearn &  &  & $99.74$ & $99.77$ & $91.70$ \\\hline
Retrain$@0.3$ & $91.43$ & $0.54$ & $91.05$ & $99.96$ & $90.60$ \\ 
Adv. Retr.$@0.2$ & $89.59$ & $0.61$ & $90.32$ & $99.99$ & $90.50$ \\
Fine-tune$@0.3$ & $90.74$ & $0.63$ & $91.33$ & $99.99$ & $91.20$ \\
Relabel$@0.05$ & $45.25$ & $4.83$ & $53.09$ & $54.16$ & $47.90$ \\
Relab.+FT$@0.2$ & $89.45$ & $0.79$ & $90.49$ & $99.87$ & $91.50$ \\
GA$@0.05$ & $55.66$ & $3.92$ & $55.17$ & $54.57$ & $57.80$ \\
GA+FT$@0.2$ & $89.52$ & $0.65$ & $90.87$ & $99.93$ & $90.40$ \\
Fisher$@0.3$ & $91.71$ & $0.54$ & $99.79$ & $99.93$ & $91.30$ \\
Hessian$@0.3$ & $91.67$ & $0.53$ & $99.83$ & $99.82$ & $91.20$ \\
Cert.-Hess.$@0.3$ & $91.64$ & $0.54$ & $100$ & $100$ & $91.30$ \\
\bottomrule
\end{tabular}
	\label{tab:SkinCancerDu=0.3}
 \vspace{-1mm}
\end{table}

\begin{table}[!ht]\scriptsize
	\centering
	\caption{Auditing results of our method on BBCNews with $|D_u|=0.3\cdot|D|$.}
\begin{tabular} {cccccc} 
\toprule
 \multirow{2}*{\makecell[c]{Unlearning\\Approaches}} & \multirow{2}*{\makecell[c]{Agree. on $D_u$\\(Base: $90.13$)}} & \multirow{2}*{\makecell[c]{KL on $D_u$\\(Base: $0.16$)}} & \multicolumn{3}{c}{Accuracy on} \\\cline{4-6}
 & & & $D_u$ & $D_r$ & $D_t$ \\
 \midrule
Pre-unlearn &  &  & $99.40$ & $99.34$ & $90.03$ \\\hline
Retrain$@0.4$ & $92.28$ & $0.13$ & $89.60$ & $99.54$ & $89.36$ \\ 
Adv. Retr.$@0.4$ & $91.02$ & $0.21$ & $88.72$ & $99.54$ & $88.62$ \\
Fine-tune$@0.4$ & $90.15$ & $0.26$ & $90.05$ & $99.54$ & $88.64$ \\
Relabel$@0.05$ & $43.38$ & $0.68$ & $60.38$ & $69.74$ & $60.54$ \\
Relab.+FT$@0.4$ & $90.37$ & $0.23$ & $90.18$ & $99.54$ & $88.56$ \\
GA$@0.05$ & $40.23$ & $1.38$ & $54.20$ & $55.44$ & $50.92$ \\
GA+FT$@0.4$ & $90.93$ & $0.25$ & $89.95$ & $99.54$ & $88.74$ \\
Fisher$@0.4$ & $89.13$ & $0.32$ & $99.50$ & $99.29$ & $90.40$ \\
Hessian$@0.05$ & $54.08$ & $1.02$ & $65.17$ & $64.93$ & $62.82$ \\
Cert.-Hess.$@0.4$ & $89.08$ & $0.34$ & $99.48$ & $99.32$ & $90.12$ \\
\bottomrule
\end{tabular}
	\label{tab:BBCNewsDu=0.3}
 \vspace{-1mm}
\end{table}

\begin{table}[!ht]\scriptsize
	\centering
	\caption{Auditing results of our method on AGNews with $|D_u|=0.3\cdot|D|$.}
\begin{tabular} {cccccc} 
\toprule
 \multirow{2}*{\makecell[c]{Unlearning\\Approaches}} & \multirow{2}*{\makecell[c]{Agree. on $D_u$\\(Base: $88.41$)}} & \multirow{2}*{\makecell[c]{KL on $D_u$\\(Base: $0.48$)}} & \multicolumn{3}{c}{Accuracy on} \\\cline{4-6}
 & & & $D_u$ & $D_r$ & $D_t$ \\
 \midrule
Pre-unlearn &  &  & $99.89$ & $99.85$ & $89.90$ \\\hline
Retrain$@0.35$ & $91.00$ & $0.29$ & $89.30$ & $99.94$ & $89.37$ \\ 
Adv. Retr.$@0.35$ & $90.58$ & $0.33$ & $89.09$ & $99.94$ & $88.88$ \\
Fine-tune$@0.3$ & $89.81$ & $0.34$ & $90.53$ & $99.96$ & $89.42$ \\
Relabel$@0.05$ & $54.67$ & $2.20$ & $66.73$ & $69.63$ & $61.04$ \\
Relab.+FT$@0.35$ & $90.82$ & $0.31$ & $90.55$ & $99.96$ & $89.46$ \\
GA$@0.05$ & $61.21$ & $1.37$ & $68.70$ & $69.74$ & $65.81$ \\
GA+FT$@0.35$ & $90.59$ & $0.32$ & $90.56$ & $99.97$ & $89.49$ \\
Fisher$@0.35$ & $88.74$ & $0.49$ & $99.83$ & $99.80$ & $90.18$ \\
Hessian$@0.05$ & $57.70$ & $0.48$ & $64.26$ & $64.55$ & $61.60$ \\
Cert.-Hess.$@0.35$ & $88.07$ & $0.52$ & $99.88$ & $99.89$ & $89.94$ \\
\bottomrule
\end{tabular}
	\label{tab:AGNewsDu=0.3}
 \vspace{-1mm}
\end{table}

\begin{table}[!ht]\scriptsize
	\centering
	\caption{Auditing results of our method on IMDB with $|D_u|=0.3\cdot|D|$.}
\begin{tabular} {cccccc} 
\toprule
 \multirow{2}*{\makecell[c]{Unlearning\\Approaches}} & \multirow{2}*{\makecell[c]{Agree. on $D_u$\\(Base: $78.38$)}} & \multirow{2}*{\makecell[c]{KL on $D_u$\\(Base: $0.52$)}} & \multicolumn{3}{c}{Accuracy on} \\\cline{4-6}
 & & & $D_u$ & $D_r$ & $D_t$ \\
 \midrule
Pre-unlearn &  &  & $99.95$ & $99.99$ & $79.90$ \\\hline
Retrain$@0.35$ & $78.95$ & $0.38$ & $77.22$ & $99.20$ & $78.30$ \\ 
Adv. Retr.$@0.4$ & $79.20$ & $0.44$ & $77.13$ & $99.98$ & $77.34$ \\
Fine-tune$@0.4$ & $80.90$ & $0.41$ & $80.48$ & $100$ & $77.52$ \\
Relabel$@0.05$ & $49.43$ & $0.15$ & $60.32$ & $68.30$ & $59.32$ \\
Relab.+FT$@0.4$ & $80.42$ & $0.41$ & $79.60$ & $99.99$ & $77.80$ \\
GA$@0.05$ & $52.38$ & $0.69$ & $67.30$ & $67.85$ & $62.96$ \\
GA+FT$@0.4$ & $81.05$ & $0.39$ & $80.73$ & $100.0$ & $77.84$ \\%
Fisher$@0.4$ & $79.95$ & $0.58$ & $100.00$ & $99.98$ & $80.20$ \\
Hessian$@0.1$ & $56.45$ & $0.75$ & $70.13$ & $68.56$ & $65.06$ \\
Cert.-Hess.$@0.4$ & $79.95$ & $0.58$ & $100$ & $99.99$ & $80.22$ \\
\bottomrule
\end{tabular}
	\label{tab:IMDBDu=0.3}
 \vspace{-1mm}
\end{table}


\vspace{0mm}
\noindent\textbf{Varying the Distribution of the Unlearning Set $D_u$.} The results discussed above were obtained by uniformly sampling data points from the training set to construct the unlearning set $D_u$. 
To further evaluate the adaptability of our auditing method, we now examine its performance under an unbalanced unlearning set.
To create this unbalanced set, we sample $50\%$ of $D_u$ from a single class, while the remaining $50\%$ is uniformly drawn from the other classes. For binary classification tasks such as SkinCancer and IMDB, we sample $75\%$ of $D_u$ from one class and the remaining $25\%$ from the other. Additionally, instead of using uniform random sampling to obtain the verification set $D_v$, we employ stratified random sampling to ensure that the class distribution of $D_v$ mirrors that of $D_u$. 
The auditing results are presented in Tables~\ref{tab:CIFAR10Unbalance}, \ref{tab:SVHNUnbalance}, \ref{tab:SkinCancerUnbalance}, \ref{tab:BBCNewsUnbalance}, \ref{tab:AGNewsUnbalance}, and \ref{tab:IMDBUnbalance}.

It can be observed that our auditing method produces consistent results even when applied to unbalanced $D_u$, demonstrating its strong adaptability across different data distributions. The primary difference between the balanced and unbalanced settings lies in the baseline agreement and KL divergence obtained from comparing the two small verification models, $M_{v_1}$ and $M_{v_2}$. Specifically, when using a balanced $D_u$, the verification models tend to exhibit higher agreement and lower KL divergence compared to the unbalanced case.
This discrepancy may be attributed to the more uniform class distribution in the balanced setting, which allows the verification models to learn more stable decision boundaries. In contrast, the class imbalance in the unbalanced $D_u$ may introduce bias into the verification models, resulting in less consistent behavior and slightly degraded baseline metrics. However, the relative performance trends across different unlearning methods remain consistent, showing the reliability of our auditing framework.

\begin{table}[H]\scriptsize
\vspace{-0mm}
	\centering
	\caption{Auditing results of our method on CIFAR10 with unbalanced $D_u$.}
\begin{tabular} {cccccc} 
\toprule
 \multirow{2}*{\makecell[c]{Unlearning\\Approaches}} & \multirow{2}*{\makecell[c]{Agree. on $D_u$\\(Base: $62.12$)}} & \multirow{2}*{\makecell[c]{KL on $D_u$\\(Base: $3.15$)}} & \multicolumn{3}{c}{Accuracy on} \\\cline{4-6}
 & & & $D_u$ & $D_r$ & $D_t$ \\
 \midrule
Pre-unlearn &  &  & $99.35$ & $98.94$ & $74.47$ \\\hline
Retrain$@0.2$ & $63.59$ & $3.26$ & $64.13$ & $98.98$ & $70.46$ \\ 
Adv. Retr.$@0.2$ & $64.47$ & $2.82$ & $70.85$ & $99.15$ & $71.66$ \\
Fine-tune$@0.2$ & $63.73$ & $2.82$ & $72.72$ & $99.96$ & $72.76$ \\
Relabel$@0.05$ & $30.95$ & $7.34$ & $51.37$ & $51.86$ & $40.88$ \\
Relab.+FT$@0.2$ & $62.67$ & $2.89$ & $70.35$ & $99.94$ & $72.53$ \\
GA$@0.05$ & $31.81$ & $7.73$ & $47.71$ & $48.81$ & $40.63$ \\
GA+FT$@0.2$ & $64.10$ & $2.80$ & $73.02$ & $99.97$ & $73.15$ \\
Fisher$@0.1$ & $62.76$ & $2.52$ & $98.75$ & $98.08$ & $73.91$ \\
Hessian$@0.1$ & $62.82$ & $2.53$ & $98.48$ & $98.40$ & $74.32$ \\
Cert.-Hess.$@0.1$ & $62.92$ & $2.51$ & $99.57$ & $99.55$ & $75.33$ \\
\bottomrule
\end{tabular}
	\label{tab:CIFAR10Unbalance}
 \vspace{-0mm}
\end{table}

\begin{table}[H]\scriptsize
\vspace{-0mm}
	\centering
	\caption{Auditing results of our method on SVHN with unbalanced $D_u$.}
\begin{tabular} {cccccc} 
\toprule
 \multirow{2}*{\makecell[c]{Unlearning\\Approaches}} & \multirow{2}*{\makecell[c]{Agree. on $D_u$\\(Base: $88.85$)}} & \multirow{2}*{\makecell[c]{KL on $D_u$\\(Base: $0.58$)}} & \multicolumn{3}{c}{Accuracy on} \\\cline{4-6}
 & & & $D_u$ & $D_r$ & $D_t$ \\
 \midrule
Pre-unlearn &  &  & $97.87$ & $97.08$ & $90.80$ \\\hline
Retrain$@0.2$ & $89.81$ & $0.74$ & $91.80$ & $96.06$ & $89.01$ \\ 
Adv. Retr.$@0.2$ & $89.98$ & $0.70$ & $92.30$ & $96.92$ & $89.69$ \\
Fine-tune$@0.2$ & $89.70$ & $0.63$ & $92.54$ & $99.65$ & $90.17$ \\
Relabel$@0.05$ & $44.13$ & $10.27$ & $46.30$ & $55.39$ & $47.98$ \\
Relab.+FT$@0.2$ & $89.50$ & $0.64$ & $92.46$ & $99.74$ & $90.06$ \\
GA$@0.05$ & $32.73$ & $10.49$ & $34.79$ & $57.72$ & $51.03$ \\
GA+FT$@0.2$ & $89.62$ & $0.63$ & $92.48$ & $99.67$ & $90.05$ \\
Fisher$@0.2$ & $90.87$ & $0.55$ & $97.47$ & $96.76$ & $90.70$ \\
Hessian$@0.2$ & $90.84$ & $0.55$ & $97.43$ & $96.79$ & $90.63$ \\
Cert.-Hess.$@0.2$ & $90.99$ & $0.53$ & $97.97$ & $97.65$ & $90.94$ \\
\bottomrule
\end{tabular}
	\label{tab:SVHNUnbalance}
 \vspace{-0mm}
\end{table}

\begin{table}[H]\scriptsize
\vspace{-1mm}
	\centering
	\caption{Auditing results of our method on SkinCancer with unbalanced $D_u$.}
\begin{tabular} {cccccc} 
\toprule
 \multirow{2}*{\makecell[c]{Unlearning\\Approaches}} & \multirow{2}*{\makecell[c]{Agree. on $D_u$\\(Base: $87.19$)}} & \multirow{2}*{\makecell[c]{KL on $D_u$\\(Base: $0.86$)}} & \multicolumn{3}{c}{Accuracy on} \\\cline{4-6}
 & & & $D_u$ & $D_r$ & $D_t$ \\
 \midrule
Pre-unlearn &  &  & $99.90$ & $99.78$ & $90.30$ \\\hline
Retrain$@0.1$ & $87.40$ & $0.83$ & $87.85$ & $98.36$ & $87.60$ \\ 
Adv. Retr.$@0.1$ & $90.31$ & $0.60$ & $90.02$ & $99.62$ & $90.01$ \\
Fine-tune$@0.1$ & $92.03$ & $0.43$ & $94.32$ & $99.60$ & $91.10$ \\
Relabel$@0.05$ & $44.53$ & $6.51$ & $42.76$ & $71.55$ & $65.80$ \\
Relab.+FT$@0.1$ & $91.51$ & $0.56$ & $89.43$ & $96.68$ & $90.30$ \\
GA$@0.05$ & $76.93$ & $2.03$ & $82.24$ & $46.17$ & $54.5$ \\
GA+FT$@0.1$ & $87.33$ & $0.97$ & $79.48$ & $88.33$ & $85.80$ \\
Fisher$@0.1$ & $90.52$ & $0.58$ & $99.79$ & $99.65$ & $90.40$ \\
Hessian$@0.1$ & $90.47$ & $0.58$ & $99.84$ & $99.73$ & $90.42$ \\
Cert.-Hess.$@0.1$ & $90.52$ & $0.58$ & $99.90$ & $99.99$ & $91.20$ \\
\bottomrule
\end{tabular}
	\label{tab:SkinCancerUnbalance}
 \vspace{-0mm}
\end{table}

\begin{table}[ht]\scriptsize
\vspace{-1mm}
	\centering
	\caption{Auditing results of our method on BBCNews with unbalanced $D_u$.}
\begin{tabular} {cccccc} 
\toprule
 \multirow{2}*{\makecell[c]{Unlearning\\Approaches}} & \multirow{2}*{\makecell[c]{Agree. on $D_u$\\(Base: $87.92$)}} & \multirow{2}*{\makecell[c]{KL on $D_u$\\(Base: $0.19$)}} & \multicolumn{3}{c}{Accuracy on} \\\cline{4-6}
 & & & $D_u$ & $D_r$ & $D_t$ \\
 \midrule
Pre-unlearn &  &  & $99.42$ & $99.36$ & $89.82$ \\\hline
Retrain$@0.4$ & $90.28$ & $0.18$ & $85.58$ & $99.47$ & $88.98$ \\ 
Adv. Retr.$@0.4$ & $88.30$ & $0.31$ & $83.90$ & $99.48$ & $88.50$ \\
Fine-tune$@0.4$ & $88.20$ & $0.31$ & $85.65$ & $99.48$ & $88.06$ \\
Relabel$@0.1$ & $42.77$ & $1.40$ & $58.35$ & $79.09$ & $66.56$ \\
Relab.+FT$@0.4$ & $88.80$ & $0.29$ & $85.92$ & $99.48$ & $88.10$ \\
GA$@0.1$ & $52.83$ & $1.13$ & $42.33$ & $74.41$ & $63.28$ \\
GA+FT$@0.4$ & $88.90$ & $0.30$ & $85.97$ & $99.48$ & $87.92$ \\
Fisher$@0.4$ & $85.22$ & $0.47$ & $99.52$ & $99.33$ & $90.50$ \\
Hessian$@0.4$ & $73.17$ & $1.26$ & $84.28$ & $72.59$ & $69.86$ \\
Cert.-Hess.$@0.4$ & $85.20$ & $0.49$ & $99.45$ & $99.35$ & $89.84$ \\
\bottomrule
\end{tabular}
	\label{tab:BBCNewsUnbalance}
 \vspace{-0mm}
\end{table}

\begin{table}[ht]\scriptsize
\vspace{-0mm}
	\centering
	\caption{Auditing results of our method on AGNews with unbalanced $D_u$.}
\begin{tabular} {cccccc} 
\toprule
 \multirow{2}*{\makecell[c]{Unlearning\\Approaches}} & \multirow{2}*{\makecell[c]{Agree. on $D_u$\\(Base: $89.34$)}} & \multirow{2}*{\makecell[c]{KL on $D_u$\\(Base: $0.43$)}} & \multicolumn{3}{c}{Accuracy on} \\\cline{4-6}
 & & & $D_u$ & $D_r$ & $D_t$ \\
 \midrule
Pre-unlearn &  &  & $99.83$ & $99.90$ & $89.85$ \\\hline
Retrain$@0.4$ & $91.61$ & $0.25$ & $88.40$ & $99.92$ & $89.56$ \\ 
Adv. Retr.$@0.4$ & $91.15$ & $0.27$ & $88.69$ & $99.94$ & $89.63$ \\
Fine-tune$@0.4$ & $91.01$ & $0.29$ & $89.08$ & $99.95$ & $89.42$ \\
Relabel$@0.05$ & $48.80$ & $2.25$ & $59.40$ & $79.42$ & $66.63$ \\
Relab.+FT$@0.4$ & $91.04$ & $0.28$ & $89.41$ & $99.96$ & $89.51$ \\
GA$@0.05$ & $53.18$ & $1.92$ & $55.47$ & $76.13$ & $67.17$ \\
GA+FT$@0.4$ & $91.04$ & $0.28$ & $89.58$ & $99.95$ & $89.48$ \\
Fisher$@0.4$ & $88.36$ & $0.52$ & $99.82$ & $99.90$ & $89.76$ \\
Hessian$@0.05$ & $60.09$ & $1.56$ & $70.98$ & $57.69$ & $57.40$ \\
Cert.-Hess.$@0.4$ & $88.35$ & $0.52$ & $99.83$ & $99.90$ & $89.79$ \\
\bottomrule
\end{tabular}
	\label{tab:AGNewsUnbalance}
 \vspace{-0mm}
\end{table}

\begin{table}[ht]\scriptsize
\vspace{-0mm}
	\centering
	\caption{Auditing results of our method on IMDB with unbalanced $D_u$.}
\begin{tabular} {cccccc} 
\toprule
 \multirow{2}*{\makecell[c]{Unlearning\\Approaches}} & \multirow{2}*{\makecell[c]{Agree. on $D_u$\\(Base: $79.38$)}} & \multirow{2}*{\makecell[c]{KL on $D_u$\\(Base: $0.67$)}} & \multicolumn{3}{c}{Accuracy on} \\\cline{4-6}
 & & & $D_u$ & $D_r$ & $D_t$ \\
 \midrule
Pre-unlearn &  &  & $100$ & $99.99$ & $80.10$ \\\hline
Retrain$@0.4$ & $86.22$ & $0.24$ & $77.92$ & $100$ & $79.52$ \\ 
Adv. Retr.$@0.4$ & $80.35$ & $0.43$ & $73.65$ & $100$ & $77.46$ \\
Fine-tune$@0.4$ & $79.92$ & $0.49$ & $78.67$ & $100$ & $77.22$ \\
Relabel$@0.1$ & $54.97$ & $0.53$ & $45.32$ & $73.82$ & $61.52$ \\
Relab.+FT$@0.4$ & $80.33$ & $0.46$ & $77.68$ & $100$ & $77.48$ \\
GA$@0.1$ & $66.40$ & $0.64$ & $51.40$ & $72.13$ & $63.30$ \\
GA+FT$@0.4$ & $79.95$ & $0.52$ & $78.55$ & $99.80$ & $76.54$ \\
Fisher$@0.4$ & $76.90$ & $0.72$ & $100$ & $99.99$ & $80.10$ \\
Hessian$@0.2$ & $62.70$ & $0.84$ & $67.92$ & $67.79$ & $65.06$ \\
Cert.-Hess.$@0.4$ & $76.90$ & $0.72$ & $100$ & $99.99$ & $80.10$ \\
\bottomrule
\end{tabular}
	\label{tab:IMDBUnbalance}
 \vspace{-1mm}
\end{table}


\end{document}